\newenvironment{CenteredBox}{%
\begin{Sbox}}{
\end{Sbox}\centerline{\parbox{\wd\@Sbox}{\TheSbox}}}
\newlength{\dhatheight}
\newcommand{\doublehat}[1]{%
    \settoheight{\dhatheight}{\ensuremath{\hat{#1}}}%
    \addtolength{\dhatheight}{-0.35ex}%
    \hat{\vphantom{\rule{1pt}{\dhatheight}}%
    \smash{\hat{#1}}}}
\newcommand{\CRC}[1]{#1}
\DeclarePairedDelimiter{\paren}{(}{)}           
\DeclarePairedDelimiter{\ang}{\langle}{\rangle} 
\DeclarePairedDelimiter{\norm}{\lVert}{\rVert}  
\DeclarePairedDelimiter\bkt{[}{]}             
\DeclarePairedDelimiter{\set}{\{}{\}}           
\DeclareMathOperator*{\argmax}{arg\,max}
\DeclareMathOperator*{\argmin}{arg\,min}
\DeclareMathOperator{\accacts}{\AcceptableActs}
\DeclareMathOperator{\actselect}{\ActionSelection}
\DeclareMathOperator{\finddir}{\FindDirection}
\DeclareMathOperator{\Up}{\textit{Up}}
\DeclareMathOperator{\Down}{\textit{Down}}
\DeclareMathOperator{\Right}{\textit{Right}}
\DeclareMathOperator{\Left}{\textit{Left}}
\newcommand{\RVector}{\mathbf{R}}
\newcommand{\Real}{\mathbb{R}}
\newcommand{\Nat}{\mathbb{N}}
\newcommand{\projc}[2]{\mathrm{proj}^{\Delta}(#1, #2)} 
\newcommand{\qstar}{Q^{\star}}
\newcommand{\qshat}{\hat{Q}^{\star}}
\newcommand{\expect}{\mathop{\mathbb{E}}}
\newcommand{\defeq}{\overset{\Delta}{=}}
\newcommand{\realnumbers}{\mathbb{R}}
\newcommand{\cref}[1]{Chapter~\ref{#1}}  
\newcommand{\traject}{\zeta}
\newcommand{\maze}{\mathrm{MAZE}}
\newcommand{\vv}[1]{\boldsymbol{#1}}
\newcommand{\projp}[2]{\boldsymbol{P}_{\boldsymbol{#2}}^{H}(\boldsymbol{#1})} 
\newcommand{\projs}[2]{\boldsymbol{P}_{\boldsymbol{#2}}^{S^+}(\boldsymbol{#1})} 
\newcommand{\vx}{\boldsymbol{x}}
\newcommand{\vu}{\boldsymbol{u}}
\newcommand{\projcs}[2]{\mathrm{proj}^{\Delta}(#1, #2)} 
\newtheorem{theorem}{Theorem}[section]
\newtheorem{proof}{Proof}[section]
\newtheorem{proposition}{Proposition}[section]
\newtheorem{remark}{Remark}[section]
\newtheorem{lemma}[theorem]{Lemma}
\newcommand{\BibTeX}{B\kern-.05em{\sc i\kern-.025em b}\kern-.08em\TeX}
\begin{document}


\begin{frontmatter}


\paperid{2418} 


\title{Thresholded Lexicographic Ordered Multiobjective Reinforcement Learning}


\author[A]{\fnms{Alperen}~\snm{Tercan}\thanks{Corresponding Author. Email: tercan@umich.edu.}}
\author[B]{\fnms{Vinayak S.}~\snm{Prabhu}\thanks{Email:vinayak.prabhu@colostate.edu}}

\address[A]{University of Michigan}
\address[B]{Colorado State University}


\begin{abstract}
\looseness-1Lexicographic multi-objective problems, which impose  a  lexicographic
importance order over the objectives, arise in many real-life scenarios.
Existing Reinforcement Learning work  directly addressing lexicographic tasks has been scarce.
The few proposed approaches were all noted to be heuristics without theoretical guarantees as the Bellman equation is not applicable to them.
Additionally, the practical applicability of these prior approaches
also  suffers from various issues such as not being able to reach the goal state.
While some of these issues have been known before, in this work we
investigate further shortcomings and propose fixes for improving practical performance
in many cases.
We also present a  policy optimization approach using our Lexicographic Projection Algorithm (LPA) that has the potential to address these theoretical and practical concerns.
Finally, we demonstrate our proposed algorithms on benchmark problems.
\end{abstract}

\end{frontmatter}


\section{Introduction}

The need for multi-objective reinforcement learning (MORL) arises in many real-life scenarios and the setting cannot be reduced to single-objective reinforcement learning tasks in general \citep{vamplew2022scalar}. However, solving multiple objectives requires overcoming certain inherent difficulties.
In order to compare candidate solutions, we need to incorporate
given user preferences with respect to the different objectives.
This can lead to \textit{Pareto optimal} or non-inferior solutions forming a set of solutions where no solution is better than another in terms of all objectives.
%
%
Various methods of specifying  user preferences have been proposed and evaluated along
three main fronts: (a)~ expressive power, (b)~ease of writing, and (c)~the availability of methods for solving problems with such preferences.
Typically there are tradeoffs when a particular front is emphasized. For example, writing preference specifications that result in a partial order of solutions instead of a total order makes the specification easier for the user but may not be enough to describe a unique preference.
Three main motivating scenarios differing on when the user preference becomes available or used have been studied in the literature. (1) User preference is known beforehand and is incorporated into the problem \emph{a priori}. (2) User preference is used \emph{a posteriori}, i.e., firstly a set of representative Pareto optimal solutions is generated, and the user preference is specified over it. (3) An interactive setting where the user preference is specified gradually during the search and the search is guided accordingly.

Our present work is in the a priori setting.
  The most common specification method in this setting is  \emph{linear scalarization} which which requires the designer to assign weights to the objectives
and take a weighted sum, thus
making solutions comparable \citep{feinberg1994markov}. The main benefit of this technique is that it allows the use of many standard off-the-shelf algorithms as it preserves the additivity of the reward functions. However, expressing user preference with this technique requires significant domain knowledge and preliminary work in most scenarios \citep{li2019urban}.
While it can be the preferred method when the objectives can be expressed in comparable quantities, e.g. when all objectives have a monetary value, this is not the case most of the time. Usually, the objectives are expressed in incomparable quantities like money, time, and carbon emissions.
Additionally, a composite utility combining the various objectives, and an
approximation of that with linear scalarization limits us to a subset of the Pareto optimal set.

\looseness-1 To address these drawbacks of linear scalarization, several other approaches have been proposed and studied. Nonlinear scalarization methods like Chebyshev \citep{perny2010finding} are more expressive and can capture all of the solutions in the Pareto optimal set, however, they do not provide ease of writing. In this paper, we will focus on an alternative specification method that overcomes both limitations of linear scalarization, named \emph{Thresholded Lexicographic Ordering} (TLO) \citep{gabor1998multi} \citep{li2019urban}. In lexicographic ordering, the user determines an importance order for the objectives. The more important objectives dominate the lower order objectives. Given two candidate solutions, a lower ranked objective is used to rank the two solutions
only if all higher order objectives have the same values. The thresholding part of the technique allows a more generalized definition for being the same w.r.t. an objective. The user provides a threshold for each objective except the last, and the objective values are clipped at the corresponding thresholds. This allows the user to specify values beyond which they are indifferent to the optimization of an objective. There is no threshold for the last objective as it is considered an unconstrained open-ended objective.

\looseness-1Despite the strengths of the TLO specification method, the need for a specialized algorithm to use it in reinforcement learning (RL) has prevented it from being a common technique. The \emph{Thresholded Lexicographic Q-Learning} (TLQ) algorithm was proposed as such an algorithm and has been studied and used in several papers
\citep{li2019urban} \citep{hayes2020dynamic}. While it has been noted that this algorithm does not enjoy the convergence guarantees of its origin algorithm (Q-Learning), we found that its practical use is limited to an extent that has not been discussed in the literature before.
In this work, we investigate such issues of TLQ further. We also present a \emph{Policy Gradient algorithm} as a general solution that has the potential to address many of the shortcomings of TLQ algorithms.

\noindent\emph{\textbf{Our Contributions.}} Our main contributions in this work are as follows:
(1)~We demonstrate the shortcomings of existing TLQ variants on a new important problem class that was not known before.
To the best of our knowledge, only \citet{vamplew2011empirical} had previously shown a case where TLQ did not work.
We formulate a taxonomy of the problem space in order to give insights into TLQ's performance in different settings.
Using our taxonomy, we demonstrate a new significant class of problems on which TLQ fails.
We exhibit this new problematic class on a common control scenario where the primary objective is reaching a goal state and the
other secondary objectives evaluate trajectories taken to the goal. 
(2)~We propose a \emph{lexicographic projection algorithm} which computes a lexicographically optimal direction that optimizes the current unsatisfied highest importance objective
while preserving the values of more important objectives using projections onto
hypercones of their gradients. Our algorithm allows adjusting how conservative the new direction is w.r.t. preserved objectives and can be combined with first-order optimization algorithms like Gradient Descent or Adam. We also validate this algorithm on a simple optimization problem from the literature.
(3)~We explain how this algorithm can be applied to policy-gradient algorithms to solve Lexicographic Markov Decision Processes (LMDPs) and experimentally demonstrate the performance of a REINFORCE adaptation on the cases that were problematic for TLQ.

\section{Related Work}
%

\looseness-1\citep{gabor1998multi} was one of the first papers to investigate RL in multi-objective tasks with preference ordering.
It introduces TLQ as an RL algorithm to solve such problems.
%
%
\citet{vamplew2011empirical} showed that TLQ significantly outperforms Linear Scalarization (LS) when the Pareto front is globally concave or when most of the solutions lie on the concave parts. However, LS performs better when the rewards are not restricted to terminal states, because TLQ cannot account for the already received rewards. Later, \citet{roijers2013survey} generalized this analysis by comparing more approaches using a unifying framework. To our knowledge, \citep{vamplew2011empirical} is the only previous work that explicitly discussed shortcomings of TLQ. However, we found that TLQ has other significant issues that occur even outside of the problematic cases they analyze.

\citet{wray2015multi} introduced
slack-based Lexicographic MDP (LMDP)
and the Lexicographic Value Iteration (LVI) algorithm. These LMDPs
specifies slacks from optimal policy instead of absolute thresholds. In other words, the user specifies how much worse than the optimal an objective can be instead of simply how good it should be. For example, in slack setting, one would say: “I need to get there at most 20\% slower than the fastest I could”. In the thresholds setting, one would say: “I need to get there in 30 minutes”. These two settings require special methods. While \citet{wray2015multi} proved the convergence to desired
policy if slacks are chosen appropriately, such slacks are generally too tight to allow
user preferences. This is also observed in \citet{pineda2015revisiting} which
claimed that while ignoring these slack bounds
negates
the theoretical guarantees, the resulting algorithm still can be useful in practice.

\citet{li2019urban} investigated the use of Deep TLQ for urban driving. It showed that the TLQ version proposed in \citep{gabor1998multi} introduces additional bias which is especially problematic in function approximation settings like deep learning. Also, it depends on learning the true Q function, which can not be guaranteed. To overcome these drawbacks, it used slacks instead of the static thresholds
and proposed a different update function. While \citet{lex_urbandriving} also similarly utilize slacks, it defines them in terms of action probabilities instead of Q-values. 
%
%
\citet{hayes2020dynamic} used TLQ in a multi-objective multi-agent setting and proposed a dynamic thresholding heuristic to deal with the explosion of the number of thresholds to be set.

However, we discovered that these works on using a Q-learning variant with thresholded ordering perform very poorly in most cases due to non-Markovianity of the value function they try to learn. It is possible to bypass this issue by using policy gradient approaches as they do not require learning an optimal value function. In order to handle conflicting gradients, some modifications to the gradient descent algorithm are needed.
Recent work on modified gradient descent algorithms came mostly from Multi Task Learning literature, which could be considered a multiobjective optimization problem \citep{desideri:inria-00389811} \citep{sener2018multi} \citep{lin2019pareto} \citep{pmlr-v119-mahapatra20a} \citep{NEURIPS2021_9d27fdf2}\cite{hu2024pa2d}.
While these papers use similar ideas with our work, their setting is different than ours as they do not have any explicit importance order; hence, not applicable to our setting. \citet{UCHIBE20081447} have the most similar setting to ours in gradient-based algorithms. It considers a set of constraints with an unconstrained objective. Then, the gradient of the unconstrained objective is projected onto the positive half-space of the active (violated) constraints and adds a correction step to improve the active constraints. When no valid projection is found, the most
violated constraints are ignored until a valid projection exists. This is one of the main differences with our setting: As we have an explicit importance-order of the objectives, it is not acceptable to ignore a constraint without considering the importance order. Also, we project the gradients onto hypercones instead of hyperplanes, which is a hypercone with $\pi/2$ vertex angle. Thus, our algorithm allows varying degrees of conservative projections to prevent a decline in the constraints.


While there are many other recent works on Constrained Markov Decision Process (CMDPs) like \citep{wachi2020safe,junges2016safety}, their approaches are not \CRC{directly} applicable as an importance order over constraints is not allowed. \CRC{However, an LMDP problem can be reduced to a series of CMDP problems using the following observation. When all constraints can be satisfied simultaneously, the importance order between them becomes irrelevant for the optimal policy. Hence, the LMDP can be solved as a CMDP. Then, for a general LMDP problem where all constraints cannot be necessarily satisfied simultaneously, the main task is to find the maximum $k$ such that most important $k$ objectives can be satisfied together. The most straightforward way to achieve this would be starting by checking the feasibility of the most important constraint only. Then, we can keep adding the next constraint in the importance order until the problem becomes infeasible in order to find the maximum $k$. Note that this algorithm is similar to "linear search" and would require $O(K)$ CMDPs to be solved for an LMDP with $K$ objectives. One can implement different strategies that imitate other search algorithms like binary search.}

\citet{ijcai2022p476} propose both value-based and policy-based approaches for LMDPs. Their value-based approach is based on slacks like \citep{li2019urban} and they require using very small slack values. This protects their approach from the issues with having relaxations by limiting their setting to strict lexicographic order, i.e. not allowing any suboptimality for the more important objectives. For policy-based methods, they use Lagrangian relaxation and their setting is again a strict lexicographic ordering, i.e. it does not allow treating values above a threshold equal. As we are primarily interested in solving tasks with thresholded objectives, this paper does not address our question.

\CRC{
Finally, using RL with lexicographic ordering recently began to attract attention from other communities as well. For example, \citet{lexicographic-automata} use formal methods to construct single objective MDPs when all of the objectives are $\omega$-regular.}

\section{Background}\label{sec:background}


\noindent\textbf{Multiobjective Markov Decision Process (MOMDP).}
A MOMDP is a tuple $\ang{S, A, P, \RVector, \gamma}$ where\hspace*{-4mm} 
  \begin{compactitem}
    \item $S$ is the finite set of states with initial state $s_{init} \in S$ and a set of terminal states $S_F$,
    \item $A$ is a finite set of actions,
    \item $P $: $S \times A \times S \to [0,1]$ is the  state transition function given by $P(s, a, s')
      =$ $ \mathbb{P}(s'|s,a)$, the probability of transitioning to state $s'$ given
 current state $s$ and action $a$.
    \item $\RVector = \bkt{R_1, \dots, R_K}^{T}$ is a vector that specifies the reward of transitioning from state $s$ to $s'$ upon taking action $a$ under $K$ different reward functions $R_i: S \times A \times S \to \Real$ for $i \in \set{1, \dots, K}$.
    \item $\gamma \in \Real$ is a discount factor.
    \end{compactitem}

In such a MOMDP, a finite \textit{trajectory} $\traject \in (S \times A)^*\times S$ is a sequence $\traject=s_0a_0s_1a_1\dots a_{T-1}s_T$ where $s_i \in S$, $a_i \in A$ and indices denote the time steps.
The evolution of an MDP is governed by repeated agent-environment interactions, where
  in each step, an \emph{agent} first picks actions in a state $s$ according to some probabilistic
  distribution, and for each of these actions $a$  the \emph{environment} generates
  next states according to $ \mathbb{P}(s'|s,a)$.
    Each reward function $R_i$ corresponds to an \emph{objective} $o_i$, the discounted
  rewards sum that the agent tries to maximize.
Control over a MOMDP requires finding an optimal \emph{policy} function $\pi^* : S \times A \to \bkt{0,1}$ which assigns
probability  $ \mathbb{P}_{\pi^*}(a|s)$ to actions $a\in A$.
In this paper, we use the \emph{episodic} case of MDP where the agent-environment interaction consists of sequences that start in $s_{init}$ and terminates when a state in $S_F$ is visited. The length of the episode, T, is finite but not a fixed number. In MDP literature, this is known as "indefinite-horizon" MDP.
The episodic case can be ensured by restricting ourselves to suitable policies which have a non-zero probability for all actions in all states. 

We define the quality of a policy $\pi$ with respect to an objective $o_i \in \set{1, \ldots, K}$ by the value function $V^{\pi}_i: S \to \Real$ given by
$  V^{\pi}_i(s) = \expect_{\pi}\bkt{\sum_{t=0}^{T} \gamma^t R_i(s_t, a_t, s_{t+1}) | s_0 = s}$.
Intuitively, $v^{\pi}_i(s)$ is the expected return from following policy $\pi$ starting from state $s$ w.r.t. objective $o_i$. The overall \emph{quality} of a policy $\pi$ is given by the vector valued function $\vv{V^{\pi}}: S \to \Real^K$ which is defined as $\vv{V^{\pi}}(s) = \bkt{V^{\pi}_1(s), \dots, V^{\pi}_K(s)}^{T}$.
As $\vv{V}$ is vector-valued, without a preference for comparing $V^{\pi}_i$ values across
different $i$,
we only have  a partial order over the range of  $\vv{V}$, leading to  \emph{Pareto front}
of equally good quality vectors. Further preference specification is needed to order the points on the Pareto front. A \emph{Lexicographic MDP (LMDP)} is a class of MOMDP which provides such an
ordering. It adds another component to MOMDP definition:
\begin{itemize}
  \item $\mathbf{\tau}=\ang{\tau_1, \ldots, \tau_{K-1}} \in \Real^{K-1}$ is a tuple of \emph{threshold values} where $\mathbf{\tau}_i$ indicates the threshold value for objective $o_i$ beyond which there is no benefit for $o_i$. The last objective does not require a threshold; hence, there are only $K-1$ values.
    Then, $\vv{\tau}$ can be used to compare value vectors $\vv{u}, \vv{v} \in \Real^K$ by defining the thresholded lexicographic comparison  $>^{\vv{\tau}}$ as
    $ \mathbf{u} >^{\vv{\tau}} \mathbf{v}$ iff there exists $ i \leq K$ such that:
    \begin{compactitem}

    \item $\forall j < i $ we have $\mathbf{u_j} \geq \min(\mathbf{v_j}, \tau_j)$; and
      \begin{compactitem}
    \item
      if $i<K$ then $\min(\mathbf{u_i}, \tau_i) > \min(\mathbf{v_i}, \tau_i)$,

    \item otherwise if $i=K$ then $\mathbf{u_i} > \mathbf{v_i}$.
    \end{compactitem}
  \end{compactitem}

\end{itemize}

Intuitively, we compare $\vv{u}$ and $\vv{v}$ starting from the most important objective
($j=1$); the less important objectives are considered only if the order of higher priority objectives
is respected.
%

Objectives $o_i$ for $i<K$ are said to be \emph{constrained objectives}. A constrained objective $o_i$  is said to be \emph{satisfied} when it is greater than or equal to its corresponding threshold $\tau_i$.
The relation $ \geq^{\vv{\tau}} $ is defined as $ >^{\vv{\tau}} \textrm{ OR }  =^{\vv{\tau}}$;  where $\vv{u} =^{\vv{\tau}} \vv{v}$ if $\min(\mathbf{u_i}, \tau_i) = \min(\mathbf{v_i}, \tau_i)$ for each constrained objective and $\vv{u}_K = \vv{v}_K$.

\noindent\textbf{Value-function Algorithms for Optimal Policies.}
An optimal policy $\pi^*$ is one that is better than or equal to any other policy, i.e., if $V^{\pi^*}(s) \geq^{\vv{\tau}} V^{\pi}(s) $  $\forall s \in S$ for all other
policies $\pi\in \Pi$ \citep{gabor1998multi}.
There are two approaches to finding optimal policies in RL: Value-function algorithms and Policy Gradient algorithms.
Value function based methods estimate the optimal action-value function $Q^*$ and construct $\pi^*$ using it. The action-value function under $\pi$, $Q^{\pi} : S \times A \to \Real^K$, is defined as:
%
  $Q^{\pi}(s,a) \defeq \expect_{\pi}\bkt{\sum_{t=0}^{T} \gamma^t \RVector(s_t, a_t, s_{t+1}) | s_0 = s, a_0=a}$.
%
The optimal action-value function, $\qstar$, is defined as:
%
 $ \qstar(s,a) = \max_{\pi \in \Pi} Q^{\pi} (s,a) $.
 Then, $\pi^*$ is obtained  as:
 $\pi^*(s,a) = 1$ if $a = \argmax_{a' \in A} \qstar(s,a') $, and $0$ otherwise.
%
In single objective MDPs, the Bellman Optimality Equation seen in Eq.~\ref{eq:bellman-optimality-expectation-version} is used to learn $\qstar$ as it gives an update rule that converges to $\qstar$ when applied iteratively.
\begin{equation}\label{eq:bellman-optimality-expectation-version}
\qstar(s, a) = \expect_{s'\sim P}\bkt{(R(s, a, s') + \gamma \max_{a' \in A} \qstar(s', a'))}
\end{equation}
Q-learning \citep{watkins1992q} is a very popular algorithm that takes this approach. TLQ tries to extend Q-learning for LMPDs; however, Bellman Optimality Equation does not hold in LMDPs. Hence, this approach lacks the theoretical guarantees enjoyed by Q-learning.



\noindent\textbf{Policy Gradient Algorithms for $\pi^*$.}
Policy gradient algorithms in RL try to learn the policy directly instead of inferring it from the value functions.
These methods estimate the gradient of the optimality measure w.r.t. policy and
update the candidate policy using this potentially imperfect information. We denote the policy parameterized by a vector of variables, $\theta$ as $\pi_{\theta}$.
The performance of the policy $\pi_{\theta}$, denoted  $J(\theta)$,  can be defined as the the expected return from following $\pi_{\theta}$ starting from $s_{init}$, i.e. $J(\theta) \defeq V^{\pi_{\theta}}(s_{init})$.
Once the gradient of the optimality measure w.r.t. the parameters of the policy function is estimated, we can use first-order optimization techniques like Gradient Ascent to maximize the optimality measure.
While all based on the similar theoretical results, a myriad of policy gradient algorithms have been proposed in the literature \citep{NIPS1999_464d828b} \citep{konda1999actor} \citep{schulman2017proximal} \citep{lillicrap2015continuous}.

\section{\!TLQ: Value Function Based Approaches for TLO\hspace*{-1.5mm}}\label{sec:tlq}
Previous efforts to find solutions to the LMDPs have been focused on value-function methods. Apart from \citep{wray2015multi}, which takes a dynamic programming approach, these have been variants of Thresholded Lexicographic Q-Learning (TLQ), an LMDP adaptation of Q-learning \citep{gabor1998multi} \citep{li2019urban}.
%
While these methods have been used and investigated in numerous papers, the extent of their weaknesses has not been discussed explicitly.






In order to adapt Q-learning to work with LMDPs, one cannot simply
use the update rule in Q-learning for each objective and learn the optimal value function of each objective completely independent of the others. Such an approach would
result in the actions being suboptimal for some of the objectives.
Based on this observation, two variants of TLQ \citep{gabor1998multi}\citep{li2019urban} have been proposed, which differ in how they take the other objectives into account. We analyze these variants by
dividing their approaches into two components: (1)~value functions and update rules;
and (2)~acceptable policies for action selection.
However, it should be noted that these components are inherently intertwined due to the nature of the problem --- the value functions and acceptable policies are defined recursively where each of them uses the other's last level of recursion. Due to these inherent circular referencing, the components will not be completely independent and some combinations of introduced techniques may not work.

\noindent\textbf{Value Functions and Update Rules.}
The starting point of learning the action-value function for both variants is $\qstar = \ang{\qstar_1, \ldots, \qstar_K}$ where each $\qstar_i: S \times A \to \Real$ is defined as in Section~\ref{sec:background} only with the change that the maximization is not done over the set of all policies $\Pi$ but over a subset of it $\Pi_{i-1} \subseteq \Pi$ which will be described below.
 \citet{gabor1998multi} propose learning $\qshat:S \times A \to \Real^K$ where each component of $\qshat$ denoted by $\qshat_i$ is defined as:
  $\qshat_i(s,a) \defeq \min(\tau_i, \qstar_i(s,a))$
In other words, it is the rectified version of $\qstar_i$. It is proposed to be learned by updating $\qshat_i(s,a)$ with the following value iteration which is adapted from Eq.~\ref{eq:bellman-optimality-expectation-version}
\begin{equation}\label{eq:update-gabor}
  \min\Big(
    \tau_i, \sum_{s'} P(s,a,s') (R_i(s,a,s') + \gamma \max_{\pi \in \Pi_{i-1}}
    \qshat_i\left( s',\pi\left(s'\right)  \right)
    \Big)
\end{equation}

\looseness-1 Notice that similar to the definition of $\qstar_i$, the main change during the adaptation is limiting the domain of $\max$ operator to $\Pi_{i-1}$ from $\Pi$.
On the other hand, \citet{li2019urban}
propose that we estimate $\qstar$ instead and use it when the actions are being picked.
This $\qstar$ uses the same update rule as Eq.~\ref{eq:bellman-optimality-expectation-version} with only change being maximization over $\Pi_{i-1}$.
%

\noindent\textbf{Acceptable Policies  $\Pi_i$ and Action Selection.}
The second important part of TLQ is the definition of "Acceptable Policies", $\Pi_i$, which is likely to be different for each objective.
The policies in $\Pi_i$ are ones that satisfy the requirements of the first $i$ objectives. Values of the acceptable policies in a given state are the acceptable actions in that state. Hence, these sets will be used as the domain of both $\max$ operator in the update rules and $\argmax$ operator in $\actselect$ function.
The pseudocode of this function can be seen in Algorithm~\ref{alg:action-selection}.
\begin{figure}
\begin{algorithm}[H]
{\small
  \SetAlgoLined
\SetKw{Break}{break}
\SetKwProg{Fn}{Function}{:}{}
\Fn{\ActionSelection{$s, Q, \epsilon | A$}}{
$r \sim U(0,1)$ \\
\uIf{$r < \epsilon$}{
$a$ is picked randomly from $A$
}
\uElse{
$A_0 \gets A$ \\
\For{o $= 1, K$}{
  \uIf{$|A_{o-1}| > 1$ \And $o < K$}{
    $A_o \gets $ $\accacts(s, Q, A_{o-1})$
  }
  \uElse{
    $a \gets \argmax_{a' \in A_{o-1}} Q_i(s,a')$ \\
    \Break
  }
}
}
\KwRet $a$
}
}
\caption{ActionSelection}\label{alg:action-selection}
\end{algorithm}
\end{figure}

Note that the structure of this function is the same for both variants of TLQ
and different instantiations of the function differ in how the $\accacts$ subroutine is implemented. $\accacts$
takes the current state $s$, the Q-function to be used, and the actions acceptable to the objectives up to the last one and outputs the actions acceptable to the objectives up to and including the current one. Below, we will describe how $\Pi_{i}$ has been defined in the literature.

 \noindent\emph{Absolute Thresholding:} \citet{gabor1998multi} propose this approach where the actions with values higher than a real number are considered acceptable. Hence, $\Pi_{i}$ is the subset of $\Pi_{i-1}$ for which Q-values are higher than some $\tau$.\\
%
 \emph{Absolute Slacking:} This is the approach taken by \citet{li2019urban} where a slack from the optimal value in that state is determined and each action within that slack is considered acceptable.


\subsection{\!\!\!Shortcomings of Prior TLQ Approaches\hspace*{-10mm}}\label{sec:tlq:issues}
The shortcomings of TLQ depend on the type of task.
We introduce a taxonomy to facilitate the discussion.

\noindent\emph{Constrained/Unconstrained Objective:} Constrained objectives are bounded in quality by their threshold values above which all values are considered the same. Unconstrained objectives do not have a such threshold value. In an LMDP setting, all objectives but the last one are constrained objectives. Recall that a constrained objective is said to be satisfied when it exceeds its threshold.\\
%
  \emph{Terminating Objective:} \looseness-1An objective that either implicitly or explicitly pushes the agent to go to a terminal state of the MDP. More formally, this means discounted cumulative reward for an episode that does not terminate within the horizon is lower than an episode that terminates.\\
\emph{Endpoint Objective:} An endpoint objective is represented by a non-zero reward in the terminal states of the MDP and zero rewards elsewhere.
  We will call an objective that has non-zero rewards in at least one non-terminal state a path objective.

%


This taxonomy also helps seeing the problematic cases and potential solutions. To summarize the empirical demonstrations about the applicability of TLQ for different parts of the problem space:

  \noindent(I) \emph{TLQ successful case studies}: \citet{vamplew2011empirical} show an experiment where the constrained objective is endpoint, unconstrained objective is terminating, and TLQ works. \citet{li2019urban} show this for a case study where the constrained objective is a non-terminating endpoint constraint. Note that these are just demonstrations of empirical performance for case studies that fall into these categories and the papers do not make claims about general instances.

 \noindent(II) \emph{TLQ does not work}: \citet{vamplew2011empirical} show that TLQ does not work when constrained objective is a path objective. To the best of our knowledge, this is the only failure case that has been shown previously in the literature.

\begin{proposition}
\label{proposition:TLQ-Fails}
    In addition to previously identified failure case in \citet{vamplew2011empirical}, TLQ does not work when the constrained objective is a terminating endpoint objective but the unconstrained one is non-terminating.
\end{proposition}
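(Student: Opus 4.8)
The plan is to exhibit a minimal LMDP in exactly this regime and trace both TLQ variants through it, showing that the stationary, state-only value functions they learn cannot encode the elapsed-time information that the lexicographically optimal policy provably requires. First I would fix $K=2$ with $o_1$ a terminating endpoint objective (reward $1$ on the transition into the single terminal state $g \in S_F$ and zero elsewhere) and $o_2$ a non-terminating path objective (a small reward $\epsilon > 0$ collected on every step spent in a designated non-terminal ``collector'' state $c$). I would set $\tau_1 = \gamma^{n}$ for an integer $n$ strictly larger than the distance $d$ from $c$ to $g$; by the definition of $V^\pi_1$, this threshold encodes the requirement ``reach $g$ within $n$ steps from the start.'' The first key step is to characterize the lexicographically optimal behavior: dwell in $c$ collecting $o_2$ for exactly $n-d$ steps and only then head to $g$, so that $o_1$ lands at $\gamma^n = \tau_1$ (satisfied) while $o_2$ is maximized. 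Because the correct action in state $c$ changes with how many steps have already elapsed, this optimal policy is inherently non-stationary, i.e. time/history dependent.

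Next I would compute what TLQ actually learns. Since $\qstar_1(s,a)$ is the optimal \emph{remaining} discounted endpoint reward, it depends only on the current distance to $g$: from $c$ we get $\qstar_1(c,\Right)=\gamma^{d}$ for the action heading toward $g$ and $\qstar_1(c,\text{stay})=\gamma^{d+1}$ for the delaying action (one extra discount factor). After rectification, $\qshat_1(c,\cdot)=\min(\tau_1,\qstar_1(c,\cdot))$; since both $d \le n$ and $d+1 \le n$, both entries saturate at $\tau_1$, so the two actions are \emph{tied} under $\qshat_1$ and the constrained objective $o_1$ declares both acceptable. The second key step is then immediate: $\accacts$ passes both actions to the next level, where the unconstrained $o_2$ breaks the tie in favor of $\text{stay}$, and the resulting stationary greedy policy remains in $c$ forever. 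Hence the realized trajectory never reaches $g$, the realized $o_1$ value is $0 < \tau_1$, and TLQ fails to satisfy a constraint that is in fact satisfiable. The rectified, state-local test ``from here the threshold is still reachable'' holds at every visit yet is never cashed in, which is precisely the non-Markovian pathology driving the failure.

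Finally I would confirm the same failure for the slack-based variant of \citet{li2019urban}, which uses $\qstar_1$ with a tolerance rather than the rectified values. This is where the main obstacle lies: for a single fixed threshold the argument above is mechanical, but the slack variant introduces a dilemma I must close carefully. The delaying action $\text{stay}$ lies within slack of the optimum exactly when the slack exceeds $\gamma^{d}(1-\gamma)$; to even permit the time-aware dwell-then-go behavior, the slack must be at least this loose, but any such slack simultaneously renders $\text{stay}$ acceptable at \emph{every} visit to $c$, so $o_2$ again forces an indefinitely non-terminating loop. I would therefore argue that no choice of slack escapes the trap: a slack tight enough to compel prompt termination forbids the collection the optimal policy performs, while a slack loose enough to allow that collection forbids nothing and lets $o_2$ prevent termination. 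Tightening this ``too tight versus too loose'' dichotomy into a clean statement covering both the absolute-thresholding and the absolute-slacking instantiations, and checking that the tie/acceptability computations survive under learned rather than exact $Q$-estimates, is the step I expect to demand the most care.
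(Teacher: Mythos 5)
Your proposal is correct in substance and follows the same overall strategy as the paper: exhibit a concrete LMDP in the stated class (terminating endpoint constrained objective, non-terminating unconstrained objective) and trace the TLQ variants to failure via a threshold-too-tight versus too-loose dichotomy. The paper's instantiation (Section~\ref{sec:tlq:issues} and Appendix~\ref{sec:supp:failure-to-reach-goal}) is the maze of Figure~\ref{fig:maze-small}, where the secondary objective penalizes bad tiles: with the rectified values of \citet{gabor1998multi}, all state-action pairs not leading into $G$ are discounted below $\tau_1$, so ties never form and the secondary objective is simply ignored; with $\gamma=1$ every action saturates at $\tau_1$ and the agent has no incentive to reach the goal; and for Absolute Slacking the appendix derives the incompatible requirements $\delta_1 < R(1-\gamma)$ and $\delta_1 \geq R\gamma(1-\gamma^2)$, which force $\gamma < 0.62$. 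Your collector-state construction buys two things the paper's maze does not: under a loose threshold/slack the failure is a \emph{strict} preference for never terminating (the agent provably dwells forever), rather than mere indifference among acceptable actions; and your explicit characterization of the lexicographically optimal policy as non-stationary (dwell exactly $n-d$ steps, then go) isolates the root cause --- non-Markovianity of the thresholded criterion --- which the paper states only informally in its related-work discussion. Your slack dichotomy (slack below $\gamma^d(1-\gamma)$ forbids any dwelling; slack above it permits dwelling at every visit, hence forever) is the exact analogue of the paper's appendix computation and is airtight.

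One subtlety you should patch for the \citet{gabor1998multi} variant: your tie argument rectifies the exact optimal values, taking $\qshat_1 = \min(\tau_1, \qstar_1)$, so that both \emph{stay} and \emph{go} clip to $\tau_1$ at the collector state. But the update rule the paper attributes to that variant, Eq.~\ref{eq:update-gabor}, applies the $\min$ \emph{inside} the Bellman recursion, and its fixed point is not $\min(\tau_1,\qstar_1)$: only transitions entering the goal attain $\tau_1$, while every pair at distance $k\geq 1$ behind them equals $\gamma^{k}\tau_1 < \tau_1$. Under that fixed point no tie forms at the collector, and the failure mode flips --- the agent beelines to the goal and ignores the collector entirely, which is precisely the paper's ``ignores the secondary objective'' branch. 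Your example still witnesses Proposition~\ref{proposition:TLQ-Fails} under either reading, since the dwell-then-go optimum is unattainable in both, but as written your ``stays forever'' conclusion for this variant holds only for the definitional rectification; you should either state which object the algorithm is assumed to learn or argue both cases.
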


We next demonstrate Proposition~\ref{proposition:TLQ-Fails}'s claim on a maze problem.

\noindent\textbf{MAZE Benchmark.}
We describe scenarios that are common in control tasks yet TLQ fails to solve. To illustrate different issues caused by TLQ, we need an adaptable multiobjective task. Also, limiting ourselves to finite state and action spaces where the tabular version of TLQ can be used simplifies the analysis. Our $\maze$ environment satisfies all of our requirements. Figure~\ref{fig:maze-small} shows an example.

%
%
\begin{wrapfigure}{L}{0.25\textwidth}
  \begin{CenteredBox}
      \begin{myverbatim}
     __________
     |__|G_|__| 2
     |HH|HH|__| 1
     |__|S_|__| 0
      0  1  2
    \end{myverbatim}
\end{CenteredBox}
\caption{A simple maze that demonstrates how TLQ fails to reach the goal state.}\label{fig:maze-small}
\end{wrapfigure}
In all $\maze$ instantiations, each episode starts in $S$ and $G$ is the terminal state. There are also two types of bad tiles. The tiles marked with $HH$ are high penalty whereas the ones with $hh$ show the low penalty ones (Figure~\ref{fig:maze-small} does not have $hh$ tiles). In this work, we will use $-5$ as high penalty and $-4$ as low penalty. But we consider the penalty amounts parameters in the design of a maze; so, they could change. There are two high-level objectives: Reach $G$ and avoid bad tiles. Ordering of these objectives and exact definition of them results in different tasks. We use these different tasks to cover different parts of the problem space described in the taxonomy.
The action space consists of four actions: $\Up, \Down, \Left, \Right$. These actions move the agent in the maze and any action which would cause the agent to leave the grid will leave its position unchanged.


\looseness-1\noindent\textbf{Problems with Endpoint Constraint.} A common scenario in control tasks is having a primary objective to reach a goal state and a secondary objective that evaluates the path taken to there. Formally, this is a scenario where the primary objective is a endpoint objective. However, TLQ either needs to ignore the secondary objective or fails to satisfy the primary objective in such cases. All of the thresholding methods above fail to guarantee to reach the goal in this setting when used threshold/slack values are uniform throughout state space. The maze in Figure~\ref{fig:maze-small} can be used to observe this phenomenon. Assume that our primary objective is to reach $G$ and we encode this with a reward function that is $0$ everywhere but $G$ where it is $R$. And our secondary objective is to avoid the bad tiles.
A Pareto optimal policy in this maze could be indicated by the state trajectory $(1,0) \to (2,0) \to (2,1) \to (2,2) \to (1,2)$. However, this is unattainable by TLQ.

\looseness-1Since the reward is given only in the goal state, $\qshat_1$ can be equal to $\tau_1$ only for the state-action pairs that lead to the goal state. All others will be discounted from these; hence, less than the threshold $\tau_1$. This means the agent will be focusing only on primary objective and ignore the secondary objective when using absolute thresholding of \citep{gabor1998multi}. If no discounting was used, all actions would have the value $\tau_1$, hence the agent would not need to reach the goal state.
We believe the reason why \citet{vamplew2011empirical} have not observed this issue in their experiments with undiscounted ($\gamma = 1$) Deep Sea Treasure (DST) is due to their objectives. The secondary objective of DST, minimizing the time steps before terminal state, is a terminating objective which pushes the agent to actually reach a goal state. Absolute Slacking also similarly leads to contradicting requirements when the trajectory described above is aimed but we leave this case to the reader due to space constraints.

%
%
%
%
%
%

\section{Policy Gradient Approach for TLO}\label{sec:pg}

In this section, we introduce our policy gradient approach that utilizes consecutive gradient projections to solve LMDPs. Policy gradient methods treat the performance of a policy, $J(\theta)$, as a function of policy parameters that needs to be maximized and employ standard
gradient ascent optimization algorithms. Following this modularity, we start with proposing a general optimization algorithm,  the Lexicographic Projection Algorithm (LPA), for multiobjective optimization (MOO) problems with thresholded lexicographic objectives. Then, we will show how single objective Policy Gradient algorithms can be adapted to this optimization algorithm.

As gradients w.r.t. different objectives can be conflicting for MOO, various ways to combine them have been proposed. \citet{UCHIBE20081447} proposes projecting gradients of the less important objectives onto positive halfspaces of the more important gradients. Such a projection vector has the highest directional derivative w.r.t. the less important objective among directions with non-negative derivatives w.r.t. the important objectives. This is assumed to protect the current level for the important objective while trying to improve the less important.
However, a non-negative derivative actually does not guarantee protection of the current level as infinitely small step sizes are not used in practice. For example, for a strictly concave function, the change in a zero-derivative direction will be always negative for any step size greater than $0$.
Therefore, we propose projecting onto \emph{hypercones} which allows more control over
\emph{how safe} the projection is. A hypercone is the set of vectors that make at most a $\frac{\pi}{2} - \Delta$ angle with the axis vector; a positive halfspace is a special hypercone where $\Delta = 0$. Increasing $\Delta$ brings the projection closer to the axis vector. A hypercone $C^{\Delta}_a$ with axis $a \in \Real^n$ and angle $\frac{\pi}{2} - \Delta$ is defined as
\begin{equation}
C^{\Delta}_a =\ 
   \left\{\vv{x} \in \Real^n \left\lvert \norm{\vv{x}} = 0 \lor \frac{\vv{a}^T\vv{x}}{\norm{\vv{a}}\norm{\vv{x}}} \geq \cos{(\frac{\pi}{2} - \Delta)}\right.\right\}
\end{equation}
We can derive the equation for projection of $\vv{g}$ on $C^{\Delta}_a$ by firstly showing that $\vv{g}$, $\vv{a}$, and the projection $\vv{g^p}$ are planar using
Karush-Kuhn-Tucker (KKT) conditions. Then, we can derive the formula by using two-dimensional geometry
giving us $ \vv{g^p} =$
\begin{equation}\label{eq:projection-formula}
  \!\frac{\cos{\Delta}}{\sin{\phi}} \sin{(\Delta+\phi)} \left(\!\vv{g} + \vv{a} \frac{\norm{\vv{g}}}{\norm{\vv{a}}} \paren{\sin{\phi}\tan{\Delta} - \cos{\phi}}\! \right)
\end{equation}
where $\phi$ is the angle between $\vv{a}$ and $\vv{g}$. Moving forward, we assume a function $projectCone(\vv{g}, \vv{a}, \Delta)$ which returns $\vv{g^p}$
according to this equation. Note that when $\vv{g} \in C^{\Delta}_a$, $projectCone(\vv{g}, \vv{a}, \Delta) \vcentcolon= \vv{g}$.

\textbf{Lexicographic Projection Algorithm (LPA).}
A \emph{Thresholded Lexicographic MOO (TLMOO)} problem with $K$ objectives and $n$ parameters can be formulated as maximizing
a function $F:A \to \Real^K$ where $A \subseteq \Real^n$,
and the ordering between two function values $F(\theta_1), F(\theta_2)$ is according to
$\geq^{\tau}$ as defined as in Section~\ref{sec:background}.
Notice that when we have multiple objectives, the gradients will form a $K$-tuple, $G = (\nabla F_1, \nabla F_2, \cdots, \nabla F_K)$, where $\nabla F_i$ is the gradient of $i^{th}$ component of $F$.

Since TLMOO problems impose a strict importance order on the objectives and it is not known how many objectives can be satisfied simultaneously beforehand, a natural approach is to optimize the objectives one-by-one until they reach the threshold values. However, once an objective is satisfied, optimizing the next objective could have a detrimental effect on the satisfied objective. We can use hypercone projection to avoid this. More formally, when optimizing $F_i$, we can project $\nabla F_i$ onto the intersection of $\set{C^{\Delta}_{\nabla F_j}}_{j < i}$ where $\Delta$ is a hyperparameter representing how risk-averse our projection is, and use the projection as the new direction. If such an intersection does not exist, it means that it is not possible to improve $F_i$ without taking a greater risk and we can terminate the algorithm.

We show in Proposition~\ref{prop:alg-monotone-increase} that the LPA algorithm provides a strict increase in the value of the objective it currently optimizes while not decreasing the value of previously optimized objectives until it reaches a point where it cannot improve conservatively. 

\begin{proposition}\label{prop:alg-monotone-increase}
Assume that when applied with a small enough step size on a TLMOO of $F_i$ concave and $L$-smooth for every $i \in \{1\dots K\}$, LPA yields the sequence in parameter space $\{\vv{x}^{(l)}\}_{l \in \Nat}$ with $\vv{x}^{(l)} \in \Real^n \; \forall l \in \Nat$. Define the objective under optimization at step $l$ as $i^*_l = \min_j\{j < K:F_j(\vv{x}^{(l)}) < \tau_j \}$ where $\min \{\} = K$. Then, $F_j(\vv{x}^{(l+1)}) \geq F_j(\vv{x}^{(l)}) $ for all $j \leq i^*_l$ and the relation is strict for $j = i^*_l$ when $\vv{x}^{(l)}$ is not the optimal point of $F_{i^*_l}$ or is a $\Delta$-Pareto-stationary point for the objective functions ${F_1, F_2, \dots, F_{i^*_l}}$. 
\end{proposition}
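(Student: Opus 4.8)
The plan is to reduce the whole statement to two geometric properties of the single step direction and then apply the standard descent inequality for $L$-smooth functions. Fix a step $l$, write $i = i^*_l$, let $\vv{g} = \nabla F_i(\vv{x}^{(l)})$, let $\vv{d}$ be the direction LPA returns, and recall the update $\vv{x}^{(l+1)} = \vv{x}^{(l)} + \eta\,\vv{d}$ with step size $\eta>0$. First I would record the two facts that drive everything. (i)~Since $\vv{d}$ lies in $C^{\Delta}_{\nabla F_j}$ for every $j<i$, its angle with $\nabla F_j(\vv{x}^{(l)})$ is at most $\tfrac{\pi}{2}-\Delta$, so $\langle \nabla F_j(\vv{x}^{(l)}), \vv{d}\rangle \geq \sin(\Delta)\,\|\nabla F_j(\vv{x}^{(l)})\|\,\|\vv{d}\| \geq 0$, and this is \emph{strictly} positive whenever $\Delta>0$ and $\nabla F_j(\vv{x}^{(l)})\neq\vv{0}$. (ii)~Because $\vv{d}$ is the Euclidean projection of $\vv{g}$ onto the closed convex cone $\mathcal{K}=\bigcap_{j<i}C^{\Delta}_{\nabla F_j}$, the cone-projection identity $\langle \vv{g}-\vv{d}, \vv{d}\rangle = 0$ holds, which gives $\langle \nabla F_i(\vv{x}^{(l)}), \vv{d}\rangle = \|\vv{d}\|^2$.

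Next I would invoke $L$-smoothness, which (independently of concavity) yields the quadratic lower bound $F_m(\vv{x}^{(l+1)}) \geq F_m(\vv{x}^{(l)}) + \eta\,\langle \nabla F_m(\vv{x}^{(l)}), \vv{d}\rangle - \tfrac{L\eta^2}{2}\|\vv{d}\|^2$ for each $m\leq i$. For the active objective $m=i$, substituting fact (ii) gives
\[
F_i(\vv{x}^{(l+1)}) - F_i(\vv{x}^{(l)}) \;\geq\; \eta\,\|\vv{d}\|^2\left(1 - \tfrac{L\eta}{2}\right),
\]
which is strictly positive for every $0<\eta<2/L$ as long as $\vv{d}\neq\vv{0}$. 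For each preserved objective $m=j<i$, fact (i) makes the linear term strictly positive, so choosing $\eta$ below $\tfrac{2\langle \nabla F_j(\vv{x}^{(l)}), \vv{d}\rangle}{L\|\vv{d}\|^2}$ makes the right-hand side nonnegative; taking the minimum of these finitely many thresholds together with $2/L$ produces one step size for which $F_j(\vv{x}^{(l+1)})\geq F_j(\vv{x}^{(l)})$ for all $j\leq i$, strictly for $j=i$.

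I would then treat the exceptional cases, which is where concavity enters. By the projection identity, $\vv{d}=\vv{0}$ exactly when $\vv{g}$ lies in the polar cone of $\mathcal{K}$, i.e.\ when no direction in $\mathcal{K}$ is an ascent direction for $F_i$ --- precisely the definition of a $\Delta$-Pareto-stationary point for $F_1,\dots,F_i$; and since $F_i$ is concave, $\nabla F_i(\vv{x}^{(l)})=\vv{0}$ holds iff $\vv{x}^{(l)}$ is a global maximizer of $F_i$. In either situation $\vv{d}=\vv{0}$, hence $\vv{x}^{(l+1)}=\vv{x}^{(l)}$ and all asserted inequalities hold with equality, consistent with the statement; otherwise $\vv{d}\neq\vv{0}$ and the strict increase for $j=i$ follows from the preceding paragraph.

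The main obstacle I anticipate is securing a single step size that simultaneously guarantees non-decrease of every preserved objective and strict increase of the active one. This hinges entirely on the directional derivatives $\langle \nabla F_j, \vv{d}\rangle$ being \emph{strictly} positive rather than merely nonnegative: with a halfspace ($\Delta=0$) they may vanish, and then the $O(\eta^2)$ smoothness error dominates for every positive $\eta$, so no step can preserve the objective --- exactly the failure mode of the halfspace projection of \citet{UCHIBE20081447} that motivated the hypercone construction. The margin $\sin(\Delta)>0$ supplied by fact (i) is what removes this obstacle. The one remaining fine point is the degenerate case $\nabla F_j(\vv{x}^{(l)})=\vv{0}$ for a preserved $j$, where the cone axis is undefined; since such an $F_j$ already sits at its concave maximum, I would either drop its (vacuous) constraint or assume the preserved gradients are nonzero, as is generic along the trajectory.
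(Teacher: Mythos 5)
Your overall skeleton is sound and, for the most part, mirrors the paper's own argument in Appendix~\ref{sec:conv-prop}: the lower bound $F_m(\vv{x}^{(l+1)}) \geq F_m(\vv{x}^{(l)}) + \eta\langle \nabla F_m(\vv{x}^{(l)}), \vv{d}\rangle - \tfrac{L\eta^2}{2}\norm{\vv{d}}^2$ (you are right that only $L$-smoothness, not concavity, is needed for it), the $\sin(\Delta)$ margin coming from the cone-angle condition for the preserved objectives, and the per-step step-size thresholds are exactly the ingredients of the paper's $K$-objective proof; concavity enters in both arguments only to identify $\nabla F_{i}(\vv{x}^{(l)})=\vv{0}$ with optimality of $F_{i}$.

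The genuine gap is your fact (ii). The direction returned by \FindDirection{} is \emph{not} the Euclidean projection of $\vv{g}=\nabla F_i(\vv{x}^{(l)})$ onto the intersection cone $\mathcal{K}=\bigcap_{j<i}C^{\Delta}_{\nabla F_j}$: Algorithm~\ref{alg:findDirection} projects \emph{successively} onto the individual cones $C^{\Delta}_{\nabla F_1},\dots,C^{\Delta}_{\nabla F_{i-1}}$ (each projection can push the vector out of an earlier cone) and then only \emph{checks} whether the result satisfies all angle conditions, returning \None{} otherwise. Successive projection onto convex cones is not projection onto their intersection, and the Moreau orthogonality identity does not chain: each step satisfies $\langle \vv{u}^{(k)}-\vv{u}^{(k+1)},\vv{u}^{(k+1)}\rangle=0$ with respect to its own input, but $\langle \vv{g},\vv{d}\rangle=\norm{\vv{d}}^2$ fails in general once there are two or more preserved objectives. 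Consequently both your strict-increase bound for the active objective and your polar-cone characterization of the degenerate case are unjustified for $i^*_l\geq 3$; moreover, even with a single constraint, membership of $\vv{g}$ in the polar cone of $\mathcal{K}$ is strictly stronger than the paper's $\Delta$-Pareto-stationarity (which only excludes ascent directions with margin $\sin\Delta$), so ``precisely the definition'' overstates the equivalence. The repair is cheap and is in effect what the paper does: the final loop of \FindDirection{} guarantees $\angle(\vv{d},\nabla F_i(\vv{x}^{(l)})) < \tfrac{\pi}{2}-\Delta$, hence $\langle\nabla F_i(\vv{x}^{(l)}),\vv{d}\rangle \geq \sin(\Delta)\,\norm{\nabla F_i(\vv{x}^{(l)})}\,\norm{\vv{d}}$, and since cone projections through the origin are norm-nonincreasing, $\norm{\vv{d}}\leq\norm{\nabla F_i(\vv{x}^{(l)})}$; feeding this into your smoothness bound yields strict increase for all $\eta < 2\sin(\Delta)/L$, and the rest of your argument goes through unchanged. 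In the two-objective case your fact (ii) is correct, and there your proof is essentially the paper's, recast in Moreau-decomposition language instead of the paper's explicit trigonometric computation with the closed-form projection formula.
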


The proof and the exact lower bounds on $F_j(\vv{x}^{(l+1)}) - F_j(\vv{x}^{(l)})$ in the proposition
can be found in Appendix~\ref{sec:conv-prop}.

Proposition~\ref{prop:alg-monotone-increase} uses the concept of \emph{$\Delta$-Pareto-stationary points} --- a generalization of Pareto-stationary points that allows for parameterized conservativeness. A point $x$ is said to be \emph{$\Delta$-Pareto-stationary} for the objective functions $\{F_1, F_2, \dots, F_k\}$ if there is not an ascent direction $\vv{u}$ such that
$\max_{j \in {1\dots k}} \angle \paren{\vv{u}, \nabla F_j} \leq \frac{\pi}{2} - \Delta$.

\begin{remark}
Under the $\mu$-strong convexity assumptions, a $\Delta$-Pareto-stationary point can be understood as a point where the norm of largest ascent step cannot be greater than $\frac{2\sin(\Delta)}{\mu} \max_j \norm{\nabla F_j(x)}$.
\end{remark}

Improving this approach with a heuristic that prevents overly conservative solutions leads to better performance in certain cases. Conservative updates usually lead to further increases on the already satisfied objectives instead of keeping them at the same level. This means most of the time, we have enough buffer between the current value of the satisfied objectives and their thresholds to sacrifice some of it for further gains in the currently optimized objective. Then, we can define a set of "active constraints" -- a subset of all satisfied objectives -- for which we will not accept any sacrifice and only consider these when projecting the gradient. The "active constraints" can be defined loosely, potentially allowing a hyperparameter that determines the minimum buffer needed to sacrifice from an objective.

%
%

\begin{algorithm}[!h]
{\small
  \SetAlgoLined
\SetKwProg{Fn}{Function}{:}{}
\Fn{\FindDirection{$M, F(\theta), \tau, \Delta, AC, b$}}{
Initialize action-value function $Q$ with random weights\\
\For{o $= 1, K$}{
  \uIf{$o = K$ \Or $F_o(\theta) < \tau_o$}{
  Initialize direction $\vv{u}$ with initial state $M_o$ \\
  \For{j $= 1, o-1$}{
  \uIf{\Not(($AC$ \And $F_j(\theta) > \tau_j + b$) \Or $\angle(\vv{u}, M_j) < \frac{\pi}{2} - \Delta$)}{
  $\vv{u} \gets projectCone(\vv{u}, M_j, \Delta)$
  }
  }
  \For{j $= 1, o$}{
  \uIf{\Not(($j \neq K$ \And $AC$ \And $F_j(\theta) > \tau_j + b$) \Or $\angle(\vv{u}, M_j) < \frac{\pi}{2} - \Delta$)}{
  \KwRet \None
  }
  }
  \KwRet $\vv{u}$
}
}
}
}
\caption{Lexicographic Constrained Ascent Direction}
\label{alg:findDirection}
\vspace{1mm}
\end{algorithm}
%
The $\finddir$ function in Algorithm~\ref{alg:findDirection} (our LPA algorithm)
incorporates these ideas.
This function takes the tuple of all gradients $M$, the tuple of current function values $F(\theta)$, threshold values $\tau$, the conservativeness hyperparameter $\Delta$, a boolean $AC$ that determines whether "active constraints" heuristic will be used or not, and a buffer value $b$ to be used alongside active constraints heuristic as inputs. Then, it outputs the direction that should be followed at this step, which can replace the gradient in a gradient ascent algorithm. For the optimization experiments, we will be using the vanilla gradient ascent algorithm.
Algorithm~\ref{alg:findDirection} finds the first objective that has not passed its threshold and iteratively projects its gradient onto hypercones of all previous objectives. If such a projection exists, it returns the projection as the "Lexicographic Constrained Ascent" direction. Otherwise, it returns null.
In our experiments, we will set $b = 0$. In general, $b$ can be set to any non-negative value and higher values of $b$ would result in a more conservative algorithm which does not sacrifice from an objective unless it is \emph{well} above the threshold.

\textbf{Experiment.}
As a benchmark for the Lexicographic Projection Algorithm, we used the objective functions $F_1(x,y) = -4x^2 + -y^2 +xy$ and $F_2(x,y) = -(x-1)^2 -(y-0.5)^2$ which are taken from \citep{zerbinati:inria-00605423}.
We modified $F_2$ slightly for better visualization and multiplied both functions with $-1$ to convert this to a maximization problem. We set the threshold for $F_1$ to $-0.5$.
The behavior of our cone algorithm without using active constraints heuristic on this problem with $\tau=(-0.5)$ and $\Delta=\frac{\pi}{90}$ can be seen in Figure~\ref{fig:vanilla-cone-values}. 
\begin{figure}
	\centering
	\includegraphics[trim=0 0 20 0, scale=0.55]{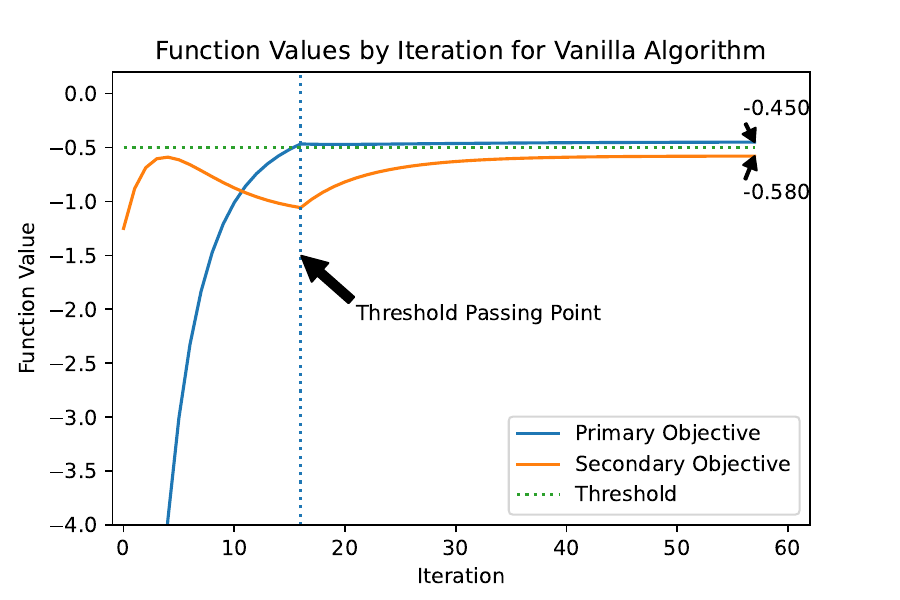}
         \vspace*{3mm}
	\caption{The changes in the function values. Notice that $F_2$, in orange, is ignored until the threshold for $F_1$ is reached. Then, $F_2$ is optimized while respecting the passed threshold of $F_1$.}
	\label{fig:vanilla-cone-values}
        \vspace*{6mm}
      \end{figure}


\textbf{Using Lexicographic Projection Algorithm in RL.}
We show how LPA can be combined with policy gradient algorithms.
We use REINFORCE\citep{NIPS1999_464d828b} as the base policy gradient algorithm because its simplicity minimizes conceptual overhead.
%

We can adapt REINFORCE to work in LMDPs by repeating the gradient computation for each objective independently and computing a new direction using $\finddir$ function. Then, this new direction can be passed to the optimizer. Algorithm~\ref{alg:lexicographic-reinforce} shows the pseudocode for this algorithm.


\begin{algorithm}
  {\small
\SetAlgoLined
\SetKwFunction{reinforce}{REINFORCE}
\SetKwProg{Pr}{Process}{:}{}
\Pr{\reinforce{$\tau, \Delta, AC, b, N_e)$}}{
Initialize policy function $\pi(a|s, \theta)$ with random parameter $\theta$ \\
\For{ep $= 1, N_e$}{
  Generate an episode $S_0, A_0, R_1, \ldots, S_{T-1}, A_{T-1}, R_T$ and save $\ln \pi(A_t|S_t)$ at every step.\\
  $M \gets \emptyset$ \\
  $F \gets 0$ \\
  \For{o $= 1, K$}{
  $G_{T+1} \gets 0$ \\
  \For{t $= T, 1$}{
      $G_t \gets R_t + \gamma G_{t+1}$ \\
      $F_o \gets F_o + R_t$
  }
  $L \gets -\sum_{t=0, T-1} \ln \pi(A_t|S_t) G_{t+1}$ \\
  Compute gradient of $L$ w.r.t. $\theta$ and append it to $M$
  }
$d = \FindDirection(M, F, \tau, \Delta, AC, b)$ \\
Use $d$ as the gradient for the optimizer step to update $\theta$.
}
\KwRet $\pi(a|s, \theta)$  \\
}
}
\caption{Lexicographic REINFORCE}
\label{alg:lexicographic-reinforce}
\end{algorithm}

Note that our algorithm is compatible with most policy gradient algorithms. \citet{UCHIBE20081447} shows how a similar idea is applied to actor-critic family of policy gradient algorithms which reduces the variance in the gradient estimation by using a \emph{critic} network. We believe that more stable policy gradient algorithms like actor-critic methods could further improve the performance of lexicographic projection approach as our algorithm might be sensitive to noise in gradient estimation.


\section{Experiments}\label{sec:experiments}
We evaluate the performance of the Lexicographic REINFORCE algorithm on two Maze problems and a many-objective benchmark from the literature. 
In both experiments, we use a two layer neural network 
for policy function. \CRC{See Appendix~\ref{sec:supp:pg:exp-rl:policy-function} for further details on the policy function.} 

\begin{wrapfigure}{l}{0.25\textwidth}
\vspace*{-1mm}
  \begin{CenteredBox}
    \begin{myverbatim}
_____________
|__|G_|__|__| 4
|__|hh|hh|hh| 3
|__|__|__|__| 2
|HH|HH|HH|__| 1
|S_|__|__|__| 0
 0  1  2  3
   \end{myverbatim}
\end{CenteredBox}
\vspace*{1mm}
\caption{The maze}\label{fig:maze-extended}
\end{wrapfigure}
\textbf{Experiment 1: Path Objective.}
The primary objective is a path objective, i.e. it takes non-zero values in some non-terminal states.
For this, we flip our objectives from the previous maze setting and define our primary objective as minimizing the cost incurred from the bad tiles. $HH$s give $-5$ reward and $hh$s give $-4$ reward. $+1$ reward is given in the terminal state to extend the primary objective to have rewards in both terminal and non-terminal states. The secondary objective is to minimize the time taken to the terminal state. We formalize this by defining our secondary reward function as $0$ in terminal state and $-1$ everywhere else.We use the Maze in Figure~\ref{fig:maze-extended} for this experiment.
%


%
%
\begin{figure}
  \includegraphics[trim=30 50 00 30, scale=0.37]{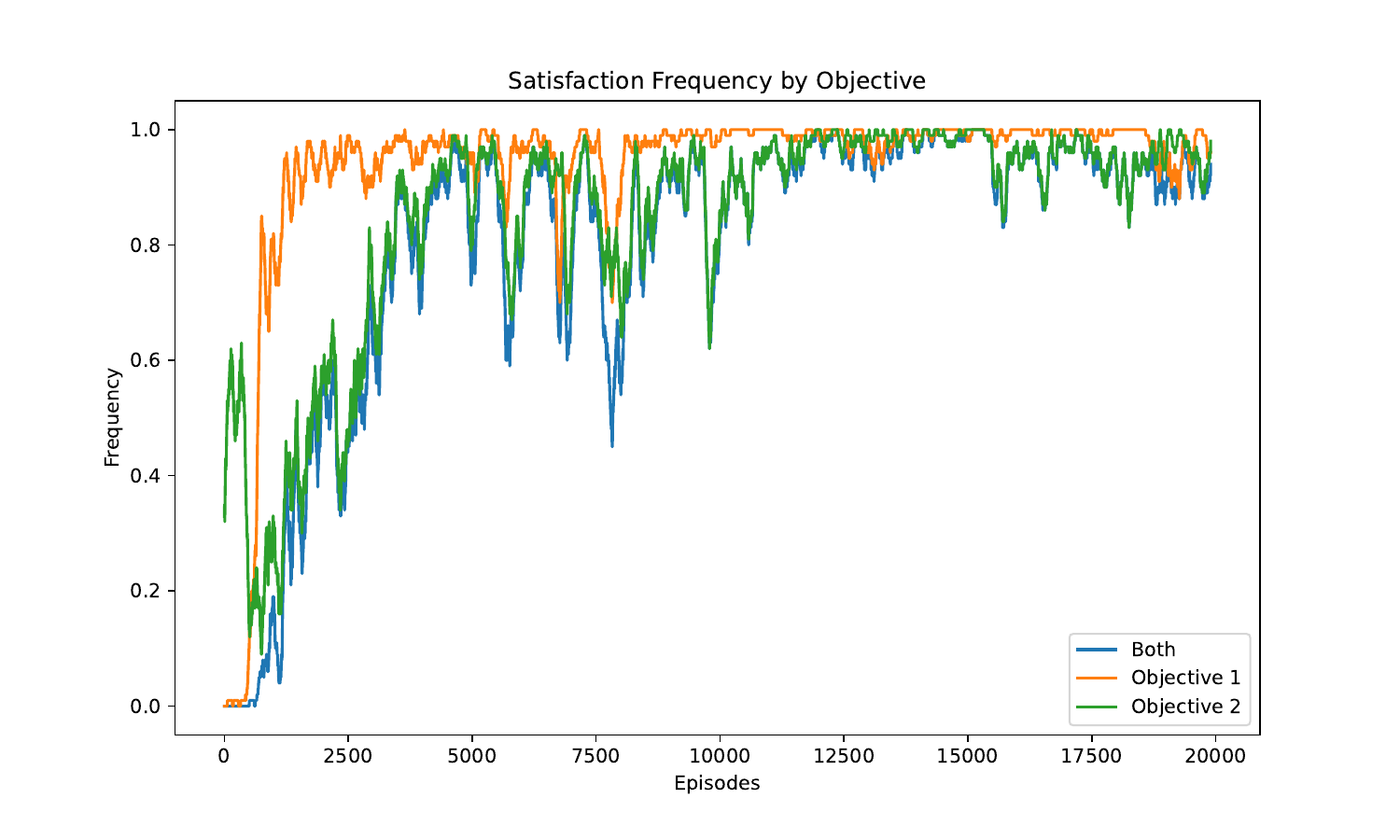}
  \vspace*{3mm}
  \caption{Satisfaction rates for a single successful seed for the path objective maze experiment over 100 episodes.}
    \vspace*{6mm}
  \label{fig:nonreachability-extended-single-seed}
\end{figure}

We found that out of 10 seeds, 7 find policies that have 90\% success over 100 episodes.
The change in satisfaction rates of individual objectives for a successful seed can be seen in Figure~\ref{fig:nonreachability-extended-single-seed}. Notice that the primary objective initially starts very low and quickly increases while the secondary objective does the opposite. Once the primary objective is learned, the algorithm starts improving the secondary.

\textbf{Experiment 2: Endpoint objective (maze).}
As the second experiment, we consider the case where the primary objective is an endpoint objective and the secondary objective is non-terminating,
which was the setting that we found that TLQ fails to reach the goal state in Section~\ref{sec:tlq:issues}. As the agent only cares about eventually reaching the goal, it can completely avoid going on a bad tile.
We run Algorithm~\ref{alg:lexicographic-reinforce} for $N_e = 4000$ episodes.  Out of 10 seeds,
4 find policies that have 90\% success over 100 episodes. Figure~\ref{fig:reachability-simple-singleseed} shows how the satisfaction frequency for each objective changes throughout training. \CRC{Further experimental details and results can be found in Appendix~\ref{sec:supp:pg:exp:reachability}.}

\begin{figure}[tb]
	\centering
	\includegraphics[trim=40 50 0 30, scale=0.37]{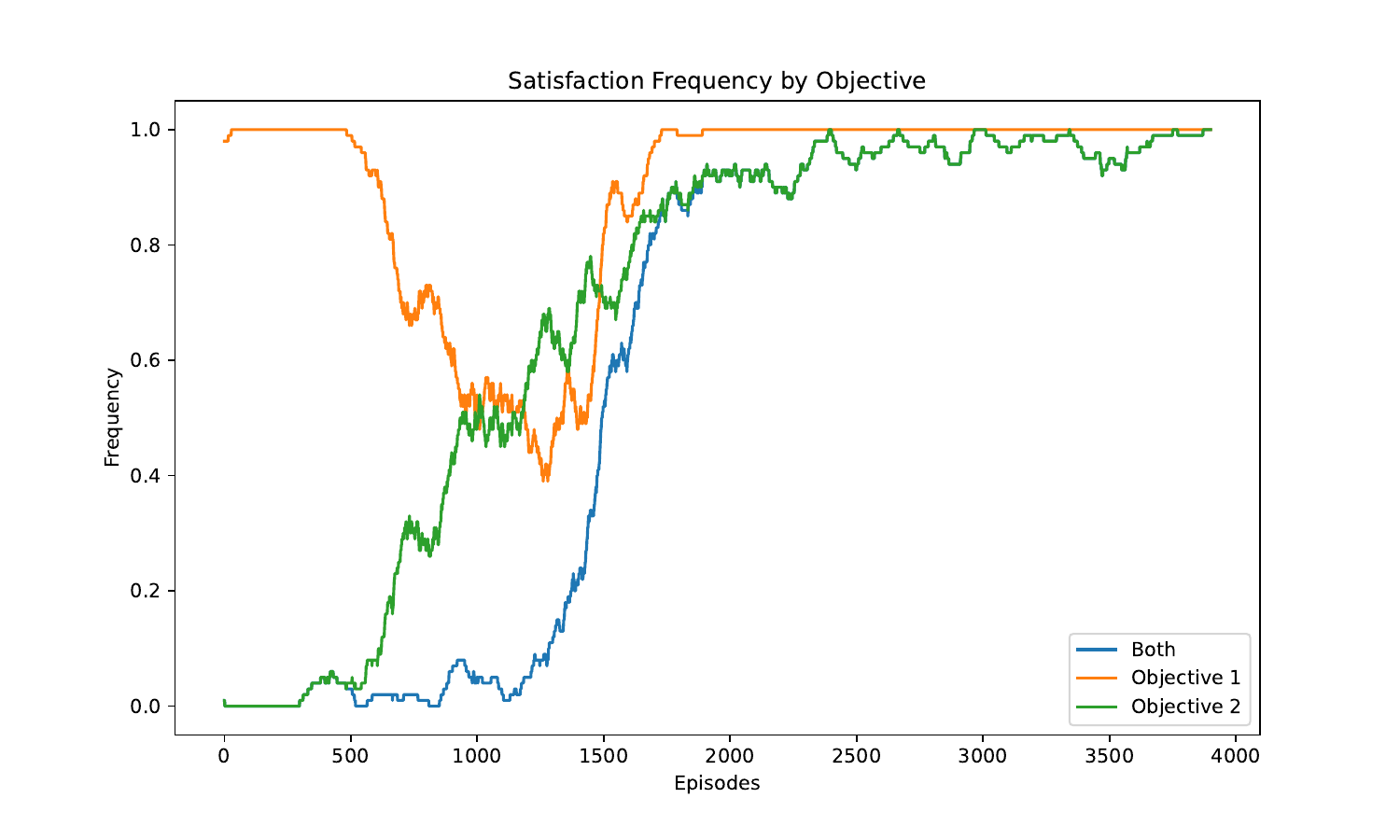}
         \vspace*{3mm}
	\caption{Satisfaction rates for a single successful seed for the endpoint maze experiment over 100 episodes.}
	\label{fig:reachability-simple-singleseed}
  \vspace*{6mm}
\end{figure}

These experiments illustrate the usefulness of projection based policy gradient algorithms for different tasks. We believe our results can be generalized to more complex tasks by combining our algorithm with more stable and sophisticated policy gradient algorithms.

\textbf{Experiment 3: Endpoint Objective (FTN).} Finally, we present our results on Fruit Tree Navigation (FTN) \citep{yang2019generalized}. This task requires the agent to explore a full binary tree of depth $d=5$ with fruits on the leaf nodes. Each fruit has a randomly assigned vectorial reward $\mathbf{r} \in \Real^6$ which encodes the amount of different nutrition components of the fruit. The agent needs to find a path from the root to the fruit that fits to the user preferences by choosing between left and right subtrees at every non-leaf node. As it has non-zero rewards only at the leafs, this is an endpoint objective. 

Figure~\ref{fig:ftn-experiment} compares our algorithm with TLQ on this domain. It also shows the need for the $\Delta$ parameter and highlights that using just hyperplanes as done in \citep{UCHIBE20081447} fails this task. Similarly, TLQ agent also fails to find the desired leaf.


\begin{figure}[tb]
	\centering
	\includegraphics[trim=90 60 0 30, scale=0.34]{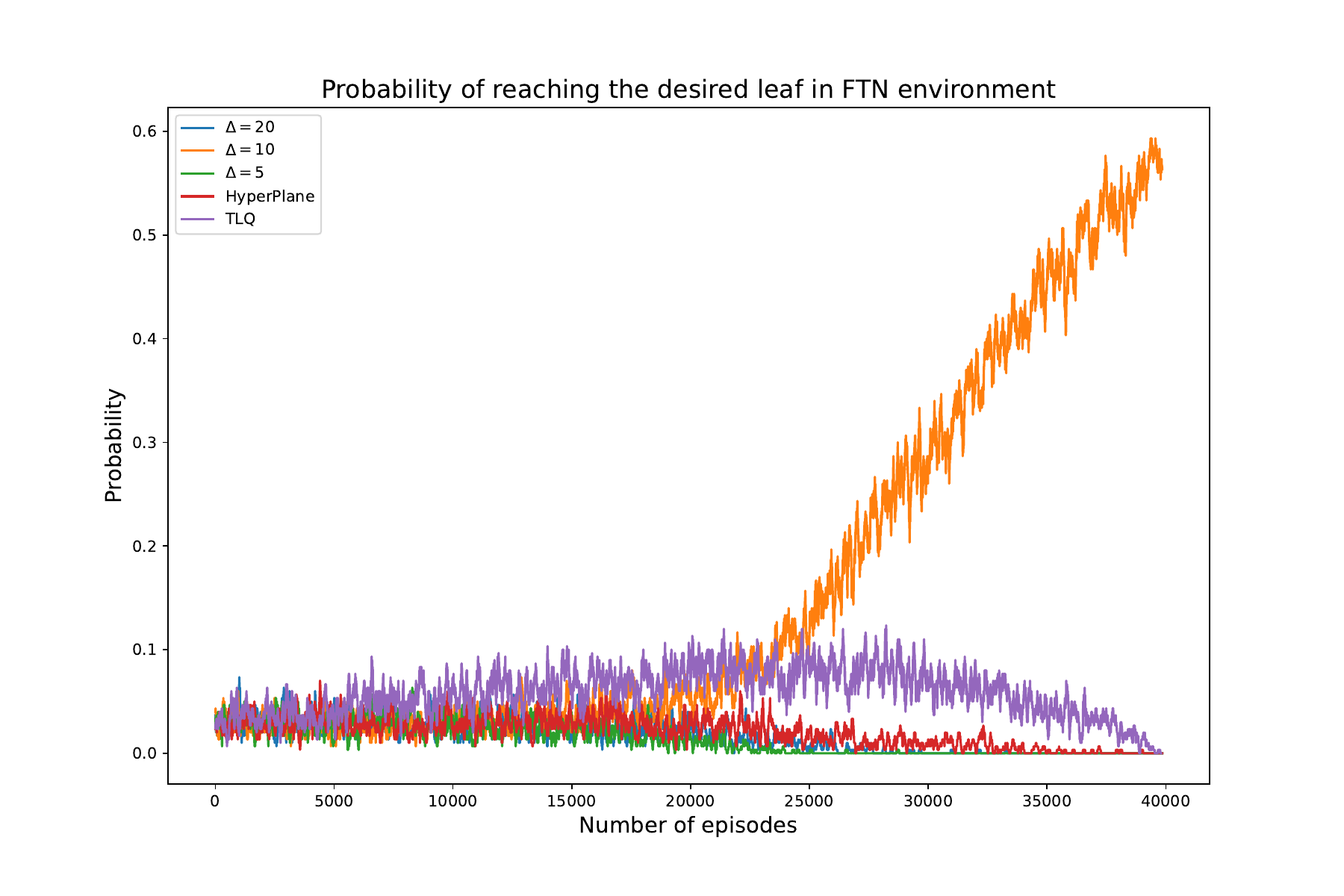}
          \vspace*{3mm}
	\caption{Probability of reaching the desired leaf, averaged over three different random seeds. $\Delta$ values represents conservativeness hyperparameter of the hypercone in degrees.}
\vspace*{7mm}
	\label{fig:ftn-experiment}
\end{figure}

\subsection{Discussions and Comparison with Other Methods}

\CRC{In this section, we will try to summarize the significance of our results w.r.t. other methods from the literature. The first two experiments showcase our method's effectiveness in settings where TLQ is shown to fail. Since TLQ completely fails in these settings, it is omitted from the plots. In Experiment 1, the constrained objective is a path objective which is the setting that \citet{vamplew2011empirical} has found that TLQ fails. However, our experiments show that Lexicographic REINFORCE can actually solve this setting. Similarly, Experiment 2 deals with the setting where the constrained objective is a terminating endpoint objective and the unconstrained objective is non-terminating, a setting we show to be unsolvable by TLQ in Section~\ref{sec:tlq:issues}. Our results show that Lexicographic REINFORCE can solve this problem too.}

\CRC{While Experiment 1 and 2 show that our method does not suffer from the inherent limitations observed in TLQ, they do not show how our method compares to TLQ in a setting that is not picked to highlight the shortcomings of TLQ. For this reason, we test our method on FTN benchmark from the literature. This benchmark has endpoint constraints and the unconstrained objective is terminating, a setting which was shown to be solvable by TLQ (\cite{vamplew2011empirical}). However, our results in Figure~\ref{fig:ftn-experiment} suggest that as the number of objectives increases, TLQ may fail to find an optimal policy. In contrast, Lexicographic REINFORCE can solve this task with the right $\Delta$ parameter. The results also suggest that the hyperplane projection method of \cite{UCHIBE20081447} is not sufficient and the conservativeness of hypercone projection may be needed to successfully converge to the optimal policy in a challenging task. Note that while the performance of Lexicographic REINFORCE is sensitive to $\Delta$ parameter, we believe that combining our method with a more stable policy-gradient algorithm could reduce this sensitivity significantly.}

\CRC{Note that we believe the baselines used in the FTN experiment are the only methods that are directly comparable with our algorithm. In particular, we believe linear scalarization does not offer a fair comparison with our method as it puts a great burden on the user to specify their preferences. Using linear scalarization in this case requires knowing the reward values of all of the fruits beforehand, deciding which fruit would fit the user preference, and finding a weight vector $\omega$ for which the desired fruit is better than others. (Note that such an $\omega$ may not even exist if the desired fruit lies in the concave part of the Pareto front.) In contrast, our method requires only the importance order and the threshold values. However, we also note that if finding such an $\omega$ is not an issue, linear scalarization allows using any method from the vast single-objective RL literature and can yield a great learning performance.}

\section{Conclusion}\label{sec:conclusion}

\looseness-1 In this work, we considered the problem of solving LMDPs using model-free RL. While previous efforts on this problem have focused on value-function based approaches, applicability of these approaches over different problem settings and investigation of 
their shortcomings have been limited. Our first contribution was
providing further insights into the inherent difficulties of developing value function based solutions to LMDPs.
Towards this end, we demonstrated a new significant class of failure scenarios for the existing methods.

Our second focus in this work was developing and presenting a policy-gradient based approach for LMDPs. Policy gradient algorithms have not been studied in MDPs with thresholded lexicographic objectives before even though they are more suitable for this setting
as they bypass many inherent issues with value functions, such as non-convergence due to non-greedy policies w.r.t. value functions, and the need for different threshold values across the state space.
For this, we developed a general thresholded lexicographic multi-objective  optimization procedure based on gradient hypercone projections. Then, we showed how policy gradient algorithms can be adapted to work with LMDPs using our procedure, and demonstrated the performance of our REINFORCE adaptation on three case studies.
While our results are promising for the REINFORCE adaptation,
future research could be further empirical studies with more stable policy-gradient algorithm
adaptations, and over more complex tasks.





\begin{ack}
We thank Sarath Sreedharan, and the conference reviewers; their suggestions and feedback helped us improve the paper.
 This work was supported in part by the National Science Foundation by a CAREER award (grant number 2240126).
\end{ack}


\bibliography{mybibfile}

\onecolumn
\appendix

\newpage
\section{Technical Appendix Organization}

In this section, we will give an overview of the Appendix.
\newline
\\
\textbf{\noindent Technical Appendix for Section \ref{sec:tlq}:} In Appendix~\ref{sec:supp:action-selection}, we share further mathematical details for
the different acceptable policy definitions. In Appendix~\ref{sec:supp:issues}, we firstly
describe how Absolute Slacking and Relative Slacking suffer from the failure to reach the goal.
Then, we present a different issue that was not covered in the main paper: failure to sacrifice at both ends of the episode. After the discussion of issues, we propose different approaches that can address these issues in Appendix~\ref{sec:supp:variations}.
\\
\\
\noindent\textbf{Technical Appendix for Section \ref{sec:pg}:}  We start by describing the derivation of the projection formula used in the paper in Appendix~\ref{sec:supp:cone-projection}. Afterwards, Appendix~\ref{sec:supp:lpa}, we show why cone projection is useful and share the rest of the experiments for the LPA algorithm on the simple optimization benchmark used in the main paper too. Then, in Appendix~\ref{sec:conv-prop}, we analyze the local convergence properties of LPA algorithm and give a proof of Proposition~\ref{prop:alg-monotone-increase}. Finally, we share further results from the experiments with the adapted REINFORCE algorithm in Appendix~\ref{sec:supp:lpa-rl}.
\\
\\
Within this technical appendix, we would like to highlight the following sections.
\\
\\
Appendix~\ref{sec:supp:issues} continues our discussion in Section~\ref{sec:tlq:issues} and demonstrates how existing TLQ variants fail in the given scenarios. Appendix~\ref{sec:supp:variations:informed} presents a TLQ fix that addresses some shortcomings of TLQ for two objectives. Appendix~\ref{sec:supp:variations:augment:multi} formulates how the well-known state augmentation idea can be used to solve LMDPs and proposes this as a new research direction.
\\
\\
For the policy gradient part of our work, Appendix~\ref{sec:supp:pg:justification} illustrates why hypercone projection is needed instead of halfspace projections for lexicographic optimization. Moreover, Appendix~\ref{sec:conv-prop} presents the proof of Proposition~\ref{prop:alg-monotone-increase}. Finally, Appendices \ref{sec:supp:pg:lpa:experiments} and \ref{sec:supp:pg:exp-rl} present rest of the experiments that we could not fit in Section~\ref{sec:pg}.

\section{Further Details on Acceptable Policies}\label{sec:supp:action-selection}

In this section, we will give the mathematical definitions of different thresholding methods. Also, we will describe Relative Slacking, an alternative thresholding method that does not exist in the literature. We include it for the sake of completeness.

\begin{enumerate}
  \item \textbf{Absolute Thresholding:} This is the approach proposed by \cite{gabor1998multi} where the actions with values higher than a real number are considered acceptable. Formally,
  \begin{align}\label{OptimalPolicySetsGabor}
  \Pi_i \triangleq \{ \pi_i \in \Pi_{i-1}\mid\qshat_i(s,\pi_i(s)) = \max_{ a\in \{ \pi_{i-1}(s)\mid\pi_{i-1} \in \Pi_{i-1} \}} \qshat_i(s, a),  \forall s \in \mathcal{S} \}
  \end{align}

  \item \textbf{Absolute Slacking:} This is the approach taken by \cite{li2019urban} and \cite{ijcai2022p476} where a slack from the optimal value in that state is determined and each action within that slack is considered acceptable.
  \begin{align}\label{OptimalPolicySetsUrban}
    \Pi_i \triangleq \{ \pi_i \in \Pi_{i-1}\mid\qstar_i(s,\pi_i(s)) \geq \max_{ a\in \{ \pi_{i-1}(s)\mid\pi_{i-1} \in \Pi_{i-1} \}}  \qstar_i(s,a) - \delta_i,  \forall s \in \mathcal{S} \}
  \end{align}
  Notice that this thresholding scheme is not directly compatible with our definition of LMDPs in Section~\ref{sec:background}. While they are both used to simply introduce some relaxation in policy selection and it does not affect our general analysis, see \cite{wray2015multi} and \cite{pineda2015revisiting} for a definition based on slacks.

  \item \textbf{Relative Slacking:} In this approach, slacks are defined as ratios $\eta \in (0,1]$ rather than absolute values. Then, any action with value greater than $(1 - \eta)$ times the optimal value is considered acceptable. Formally,
  \begin{align}\label{OptimalPolicySetsRelative}
    \Pi_i \triangleq \{ \pi_i \in \Pi_{i-1}\mid\qstar_i(s,\pi_i(s)) \geq (1 - \eta)\max_{ a\in \{ \pi_{i-1}(s)\mid\pi_{i-1} \in \Pi_{i-1} \}}  \qstar_i(s,a),  \forall s \in \mathcal{S} \}
  \end{align}
While has not been proposed in any previous work, we included this for the sake of completeness. Notice that "Relative Thresholding" would be essentially the same technique, only with different parameters.
\end{enumerate}

\section{Issues with TLQ}\label{sec:supp:issues}

In this section, we will elaborate more on the different issues with TLQ.

\subsection{Failing to Reach the Goal}\label{sec:supp:failure-to-reach-goal}

In this section, we will explain how Relative Slacking and Absolute Slacking fail to reach the goal, an issue we discussed in Section~\ref{sec:tlq} for Absolute Thresholding. We will again use Figure~\ref{fig:maze-small}.

Relative Slacking determines the maximum detour. If a non-optimal action delays reaching the goal for $k$ steps, this can be allowed only by defining $\eta > \gamma^k$. However, this detour can be taken repeatedly, preventing actually reaching the goal.

Seeing how Absolute Slacking fails to overcome this problem is a little trickier. It requires a closer inspection of action-values. Since we want that the agent to go left in $(2,2)$, the following should be true:
\begin{align*}
  & \qstar_1((2,2), Right) < \qstar_1((2,2), Left) - \delta_1 \\
   \implies & \gamma R < R - \delta_1 \\
   \implies &\delta_1 < R(1-\gamma)
 \end{align*}

 However, allowing the agent to pick $Right$, instead of $Up$ in $(1,0)$ requires:
 \begin{align*}
   & \qstar_1((1,0), Right) \geq \qstar_1((2,2), Up) - \delta_1 \\
    \implies & \gamma^3 R \geq \gamma R - \delta_1 \\
    \implies & \delta_1 \geq R \gamma (1-\gamma^2)\\
  \end{align*}

  Combining these two requirements implies that:
  \begin{align*}
    & R(1-\gamma) > R \gamma (1 - \gamma^2) \\
    \implies & 1 > \gamma (1+ \gamma) \\
    \implies & 0 > \gamma^2 + \gamma - 1 \qquad \text{Solving the quadratic equation}\\
    \implies & 0.62 > \gamma \\
  \end{align*}

  This shows that to reach the desired policy, not only $\delta$ but $\gamma$ needs to be adjusted too. However, the $\gamma$ parameter is assumed to be an environment constant and traditionally set to values close to $1$. Moreover, there is no real way to find the correct $\gamma$ value apart from computing the action-value function, the very thing we are trying to compute.

Also, a similar analysis shows that small tricks like replacing the primary reward function with
  \begin{align}
  R_1'(s,a,s') = \begin{cases}
    0, \text{if $s' = G$} \\
    -1, \text{otherise}
\end{cases}
  \end{align}
with or without discounting does not solve this problem.

\subsection{Failure to Sacrifice Early and Late}\label{sec:supp:failure-to-sacrifice-middle}

In this section, we will discuss an issue that was not discussed in the main paper: the failure to sacrifice in the early and late parts of the episode. This issue still occurs even if "failure to reach the goal" issue is avoided because the secondary objective happened to be a terminating one.

\begin{figure}[tb]
  \begin{CenteredBox}
      \begin{myverbatim}
        MAZE
     __________
     |__|G_|__| 10
     |HH|HH|__| 9
     |__|__|__| 8
     |__|hh|hh| 7
     |__|__|__| 6
      __|__|__  5
     |__|__|__| 4
     |__|hh|hh| 3
     |__|__|__| 2
     |HH|HH|__| 1
     |__|S_|__| 0
      0  1  2
\end{myverbatim}
\end{CenteredBox}
\vspace*{3mm}
\caption{An example $\maze$ which can be used to demonstrate the issues with uniform thresholding for TLQ.}\label{fig:maze-early-and-late}
\vspace*{6mm}
\end{figure}

Consider the maze shown in Figure~\ref{fig:maze-early-and-late}. {There are bad tiles in four rows and avoiding any of the rows of bad tiles takes two steps. For example, compare the following two paths:
\begin{enumerate}
  \item $(1,0) \to (1,1) \to (1,2)$
  \item $(1,0) \to (2,0) \to (2,1) \to (2,2) \to (1,2)$
\end{enumerate}
Path 2 avoids the bad tiles but it takes 4 steps to get to $(1,2)$ from $(1,0)$ compared to only 2 steps of Path 1.
Since avoiding any tiles costs the same number of extra steps, a natural policy in this maze would be avoiding $HH$ tiles and ignoring $hh$ tiles.
}
 However, this is not possible with either thresholding method. Now, we will discuss how each thresholding method fails this achieving this Pareto optimal policy.

{The action-values show the reward in $G$ discounted by the length of the shortest path to $G$ from the cell this state-action pair leads to. For example, $Q((1,8), Right) = R \gamma^3$ as it takes $3$ steps to get to $G$ from $(2,8)$. Hence, the action-values increase as the agent gets closer to the goal.} Assume that the agent is in $(1,0)$, the action that we need to take is $Right$, meaning $\tau_1$ should be set smaller than or equal to $\gamma^{11} R$ in Absolute Thresholding. However, since the action values will be larger than this in the states closer to $G$, it will mean that the primary objective will be ignored for the rest of the episode. Hence, the agent will avoid $h$ tiles too and the desired policy is unattainable.

Similarly, since Relative Thresholding effectively limits the length of detours and detours for avoiding $h$s are of the same length as the ones for $H$s, this cannot give a policy that only goes through $h$s.

Absolute Slacking will cause this problem in the reverse, meaning the late episode detours requires detours during the whole episode. Assume the agent is in cell $(1,8)$, then we need
\begin{align*}
   & \qstar_1((1,8), R) > \qstar_1((1,8), U) - \delta_1 \\
   \implies & \gamma^3 R > \gamma R - \delta_1 \\
   \implies & \delta_1 > R \gamma(1 - \gamma^2)\\
 \end{align*}

Then, if going left instead of up is not allowed in cell $(1,6)$:
\begin{align*}
&   \qstar_1((1,6), R) < \qstar_1((1,6), U) - \delta_1 \\
\implies & \gamma^5 R < \gamma^3 R - \delta_1 \\
\implies & \delta_1 < R \gamma^3 (1 - \gamma^2)
\end{align*}

Combining these two requirements gives $R \gamma^3 (1 - \gamma^2) > R \gamma(1 - \gamma^2)$, which requires $\gamma > 1$ which is false.
%
%

\section{Variations to TLQ and Some Alternatives}\label{sec:supp:variations}

In this section, we will try to address the problems with TLQ within the framework of value function algorithms. We will start by briefly talking about two of our failed attempts (one completely failing and another half-working) to develop working TLQ variants to show the breadth of the problems and our work. Moreover, we believe these ideas are quite natural and can look promising; so, we would like to share our experience to help people working on TLQ algorithms.

Then, we will describe two of our working solutions. While these solutions are limited either in terms of convergence or applicable domains, they provide either a good solution to a sizeable subset of common tasks or give a good alternative to TLQ in the general case.

\subsection{Failed Attempts}

In this section, we will give our not very successful attempts to improve the performance of TLQ.

\subsubsection{TL-SARSA}\label{sec:tlq:sarsa}

Our first failed attempt was switching to an on-policy learning framework which could solve agents getting stuck problem in Section~\ref{sec:tlq:issues}. {An important reason for this issue was agents' optimistic expectation that they would be following the optimal behavior after each action. So, we considered an on-policy agent which actually uses its realistic behavior to learn could solve our issues.}

So, we modified our update functions from Section~\ref{sec:tlq} to mimic SARSA (\cite{sutton2018reinforcement}) instead of Q-Learning. This would mean replacing the max operators with the actual action. For example, the update function from \cite{li2019urban}
\begin{equation}\label{eq:update-urban}
\qstar_{i}(s, a) = \sum_{s' \in S} P(s,a,s') (R_{i}(s, a, s') + \gamma \max_{\pi \in \Pi_{i-1}} \qstar_{i}(s', \pi(s'))
\end{equation}

will become:
\begin{equation}
\qstar_{i}(s, a) = \sum_{s' \in S} P(s,a,s') (R_{i}(s, a, s') + \gamma \qstar_{i}(s', a')
\end{equation}

where $a' = \pi(s')$.

However, this naive attempt failed due to some theoretical limitations of SARSA. \cite{singh2000convergence} states that the convergence of SARSA is guaranteed under the condition that the policy is greedy in limit. {However, our policies are not necessarily greedy with respect to $\qstar$ in limit. Thresholding means that sometimes actions suboptimal w.r.t. $\qstar$ are chosen. For example, if $\qstar_1(s, a_1) > \qstar_1(s, a_2) > \tau_1$ and $\qstar_2(s, a_2) > \qstar_2(s, a_1)$ for a state $s$ in a two objective task, the policy we want to learn is not greedy w.r.t. $\qstar_1$.} This manifested itself as constant oscillations in the policy in our experiments.

\subsubsection{Cyclic Action Selection}

Our second half-failed attempt was modifying the action selection mechanism to solve the phenomenon described in Section~\ref{sec:supp:failure-to-reach-goal}.
It was based on the intuition that the reason for this issue was unnecessary sacrifices in the primary objective that is not required by the secondary objective. For example, if we consider the maze in Figure~\ref{fig:maze-small}, going left or right in cell $(2,2)$ is the same w.r.t. secondary objective, hence the agent should not sacrifice from the primary objective irrespective of the thresholds/slacks. Using this intuition, we developed a cyclic action selection algorithm. In Algorithm~\ref{alg:cyclic-action-selection}, we show a two objective version of it for simplicity. While it can be generalized to $K$ objectives, we do not believe it would be of interest considering its failure to completely address our problems.

\begin{algorithm}[H]
\SetAlgoLined
\SetKwFunction{CyclicActionSelection}{CyclicActionSelection}
\SetKw{Break}{break}
\SetKwProg{Fn}{Function}{:}{}
\Fn{\CyclicActionSelection{$s, Q | A$}}{
$A_0 \gets A$ \\
$A_1 \gets \accacts(s, Q_1, A_{0})$ \\
\If(\tcp*[f]{If there are not more than one option}){$|A_1| \leq 1$}{
 \KwRet $\argmax_{a \in A_0} Q_1(s,a)$
}
$A_2 \gets \accacts(s, Q_2, A_1)$\\
\If{$|A_2| \leq 1$}{
 \KwRet $\argmax_{a \in A_1} Q_2(s,a)$
}
\KwRet $\argmax_{a \in A_2} Q_1(s,a)$ \tcp*{Max over $A_2$ but wrt $Q_1$}
}
\caption{CyclicActionSelection}\label{alg:cyclic-action-selection}
\end{algorithm}

As the pseudocode shows, the idea is to assign the unconstrained function a threshold/slack which would be used to return the action selection right back to the primary objective but after applying this new threshold. This can be seen from the $Q$-functions used with $\argmax$ and $\accacts$ throughout the algorithm. It starts with using $Q_1$, then uses $Q_2$ if there are multiple acceptable actions w.r.t. $Q_1$. Finally, it uses $Q_1$ again if there is more than one action acceptable w.r.t. $Q_2$. Notice that there is a $\accacts$ call using $Q_2$ which is different than Algorithm~\ref{alg:action-selection}. This requires having a threshold/slacking for the unconstrained objective which is basically a hack. It can be used with any thresholding function from Section~\ref{sec:supp:action-selection}.

However, there are several problems with this approach. Firstly, having a threshold for the unconstrained objective removes one of the important supposed benefits of TLQ, namely its intuitiveness. Especially the cyclic nature of this makes two different threshold values, one for each objective, to be coupled in a complex way when deciding which detours will be taken. This can lead to a blind hyperparameter search. Our experiments show that the success of the policy is highly sensitive to the choices of these two hyperparameters.

Also, the problem described in Section~\ref{sec:supp:failure-to-sacrifice-middle} persists, which means some of the very natural policies cannot be found with this technique.

\subsection{Informed Targets}\label{sec:supp:variations:informed}

{While using the SARSA variant has failed as seen in Section~\ref{sec:tlq:sarsa}, we believe that our intuition about the root of the issues described in Section~\ref{sec:supp:failure-to-reach-goal} was correct. Hence, we decided that an approach that could better align the update target with the "actual policy" could still solve the problem with short-sighted sacrifices. One such way could be accounting for the possibility of actions not being taken according to the given objective. Here, we will present the approach for two objectives. Its generalization to $K$ objectives is not necessarily straightforward and we regard it as a future research direction.
To illustrate the idea, assume that $\qstar_1(s', a_1) > \qstar_1(s', a_2) > \tau_1$ and $\qstar_1(s', a_2) > \qstar_2(s', a_1)$ for a state $s'$ in  a task with only two actions. Eq.~\ref{eq:update-urban} uses $R_1(s,a,s') + \gamma _1(s', a_1)$ when computing update target for $\qstar_1(s,a)$ as $a_1$ maximizes $\qstar_1$ in state $s'$. However, this is
misleading as $a_1$ will never be chosen in state $s'$. Instead $\qstar_1(s',a_2)$ should be used as $a_2$ maximizes $\qstar_2$ in $s'$. Notice that this is still different than TL-SARSA as we may be following a completely different policy. In other words, $a_2$ is used not because it is actually the action taken but it would be the action taken in the optimal case.}
We call this "informed targets" as value functions make "informed" updates, knowing what would be the actual action taken.
More formally, this means modifying the update function for the primary objective to:

{\footnotesize
\begin{align}\label{eq:informed-targets}
\qstar_{1}(s, a)
 = \sum_{s' \in S} (R_{1}(s, a, s') + \qstar_1(s', \argmax_{\pi \in \Pi_1} \qstar_{2}(s', \pi(s')) P(s,a,s')
\end{align}
}

Notice that the target for the objective $1$ is computed by choosing the optimal action with respect to $2$. This prevents optimistic updates that happen due to targets computed with actions that never would be taken.
It should be noted that these updates were the reason for the failure mode discussed in Section~\ref{sec:supp:failure-to-reach-goal}. Preventing them solves this issue but brings a different problem: Instability in update targets. Consider the scenario in Section~\ref{sec:supp:failure-to-reach-goal}. If the current policy is going to left in state $(2,2)$, the value of going right would be $\gamma R$. Assuming that the threshold is smaller than $\gamma R$, at some point the value of going right would pass the threshold and both going right and left would be equally good. Once this happens, the update target for going right will become $\gamma Q(s, Right)$, hence it will start to decrease until it is smaller than the threshold. Then, the target will go back to its original value, hence resulting in an endless cycle. While it is possible to introduce some buffer in these updates such that the oscillations do not affect the policy that is being followed, the optimality of the resulting policy will depend on the initialization.

The update function with buffer hyperparameter $b$ can be obtained by replacing $\Pi_i$ in Section~\ref{eq:informed-targets} with $\hat{\Pi}_i$ which is defined as:

\begin{align}
  \hat{\Pi}_i \triangleq  \{ \pi_i \in \hat{\Pi}_{i-1} \mid \qstar_i(s,\pi_i(s)) \geq \max_{ a\in \{ \pi_{i-1}(s)\mid\pi_{i-1} \in \Pi_{i-1} \}}  \qshat_i(s,a) - \delta_i - b,  \forall s \in \mathcal{S} \}
\end{align}

Notice that this will lead to a smaller oscillation zone which in turn is going to prevent the policy from oscillating as it still uses $\Pi_i$.
Also, note that the problems in Section~\ref{sec:supp:failure-to-sacrifice-middle} still persists.

\subsection{State Augmentation for Non-reachability Constrained Objective Case}

In this section, we will show how a problem where constrained objectives are non-reachability can be solved by augmenting the state space. This idea of state augmentation has been used before with slightly different or narrower purposes \cite{geibel2006reinforcement}.

\subsubsection{Single Constrained Objective}

When the constrained objective is a non-reachability objective, this can be solved by using state augmentation that keeps track of obtained cost/reward for the constrained objective so far. In this section, we will
use a different MDP that is inspired by a real-life scenario to also give a more intuitive example and show the real use of LMDPs.

\textbf{An example:} A car travels across the country using highways. It starts the journey in the city $s_0$ and tries to go to a city $s_F \in S_F$. Once he reaches a city in the set $s_F$, he will stop traveling. The highway toll for the highway from the city $s$ to $s'$ is represented by the function $h(s,s')$ where $h: S \times S \to \realnumbers$. The driver has a budget of $B$ dollars and tries to have the best trip within this budget. His cost within this budget will be reimbursed by his company, so he has no incentive to spend less as long as he is within the budget. His pleasure from arriving in the city $s$ is given by $p(s)$ where $p: S \to \realnumbers$ and $p(s) = 0, \forall s \notin S_F$.

Formally, we have two objectives: minimizing the tolls and maximizing pleasure. Minimizing tolls is constrained/thresholded by the budget $B$. Maximizing the pleasure is unconstrained. Following our formulation in Section~\ref{sec:background}:
\begin{itemize}
\item $R_1(s, a, s') = -h(s,s')$ and $\tau_1 = -B$. Notice again that we expressed the threshold without discounting. Since we will not be using TLQ, we do not need to find the corresponding discounted threshold. Also, notice that the corresponding discounted threshold actually depends on the trajectory.
\item $R_2(s,a,s') = p(s')$.
\end{itemize}

We can express this two-objective task and preserve the preferences by constructing the following single-objective task:

\begin{itemize}
  \item Set of states: $\hat{S} = S \times \realnumbers$ where $(s,c)$ means the driver is in the city $s$ and so far the driver has spent $B-c$ dollars on tolls. Augmented initial state $\hat{s}_0 = (s_0, B)$.
  \item Set of actions: $\hat{A} = A$
  \item Transition function $\hat{P}: \hat{S} \times A \times \hat{S} \to [0,1]$ where
  \begin{equation}
    \hat{P}((s,c), a, (s', c')) =
    \begin{cases}
       P(s,a,s'), c' = c - h(s,s')  \\
       0, \text{otherwise}
     \end{cases}
  \end{equation}
  \item Reward function $\hat{R}: \hat{S} \times A \times \hat{S} \to \realnumbers$
  \begin{equation}\label{equation:Rprime}
    \hat{R}((s,c), a, (s',c')) =
    \begin{cases}
      0, \text{if } s' \notin S_F \\
      p(s'), \text{else if } c' \geq 0 \\
      \lambda  c', \text{otherwise}
    \end{cases}
  \end{equation}
  \end{itemize}

  Note that a non-zero reward will be given only when the car reaches a final destination.
  If the driver has stayed within the budget, he gets his pleasure value as the reward. If he has exceeded the budget, he is penalized accordingly with a multiplier $\lambda$. Implicitly, we assume that $p(s) > 0, \forall s \in S_F$.

  Also, note that while this reward function specifies the optimal policy correctly, it may not be a good reward function for learning and exploration purposes. For instance, until the agent learns how to stay within the budget, all the terminal states will have negative values and non-terminal states will have higher values. Hence, the agent can get stuck here by trying to avoid terminal states. Realizing that it can get positive rewards may require a good exploration policy.

  This has the following advantages:

  \begin{itemize}
    \item Different thresholds are supported
    \item Thresholding is intuitive
    \item Convergence proofs exist.
  \end{itemize}

  \subsubsection{Optimality of New MDP}\label{sec:optimality-of-new-mdp}

  We can easily show that this single-objective task has the same ordering of trajectories as the original task. More formally, $\traject^1 = s^1_0, a^1_0, s^1_1, a^1_1, \dots, s^1_{n^1}$ is better than $\traject^2 = s^2_0, a^2_0, s^2_1, a^2_1, \dots, s^2_{n^2}$ under the original task if and only if the augmented trajectory $\hat{\traject^1}$ is also better than augmented trajectory $\hat{\traject^2}$ under this single-objective task. Here, we will use the cumulative reward as the optimality metric when comparing trajectories. For the original task trajectories, we use the thresholded lexicographic comparison relation defined in Section~\ref{sec:background}. Note that subscripts $n^1$ and $n^2$ denotes the indexes of $\traject^1$
  and $\traject^2$, not the polynomials.

  \paragraph{Proof:} $\traject^1 \geq \traject^2$ under the original task if and only if one of the following must be true:
  \begin{enumerate}
    \item $\sum_{\traject^1} R_1(s,a,s'), \sum_{\traject^2} R_1(s,a,s') \geq
    \tau_1$ and $\sum_{\traject^1} R_2(s,a,s') \geq \sum_{\traject^2}  R_2(s,a,s')$
    \item $\sum_{\traject^1} R_1(s,a,s') \geq \tau_1 > \sum_{\traject^2} R_1(s,a,s')$
    \item $\tau_1 > \sum_{\traject^1} R_1(s,a,s') \geq \sum_{\traject^2} R_1(s,a,s')$
  \end{enumerate}

  We can show that each of these statements implies that the same ordering holds for $\hat{\traject^1} \geq \hat{\traject^2}$ under the single objective task. Firstly observe that:
  \begin{align*}
    & \sum_{\hat{\traject}} \hat{R}(\hat{s},\hat{a},\hat{s}')  = p(\hat{s}_n(s)) \\
    &\iff \sum_{\hat{\traject}} \hat{R}(\hat{s},\hat{a},\hat{s}') > 0 \\
    &\iff \hat{s}_n(s) \in S_F \land \hat{s}_n(c) \geq 0 \\
    & \iff \sum_{\hat{\traject}} h(s,s') \geq B \iff \sum_{\traject} R_1(s,a,s') \geq \tau_1
  \end{align*}

Then, for the first case:

\begin{align*}
  &\sum_{\traject^1} R_1(s,a,s'), \sum_{\traject^2} R_1(s,a,s') \geq
  \tau_1 \\
  \implies& \sum_{\hat{\traject^1}} \hat{R}(\hat{s},\hat{a},\hat{s}') = p(\hat{s}_{n^1}(s)) \land
  \sum_{\hat{\traject^2}} \hat{R}(\hat{s},\hat{a},\hat{s}') = p(\hat{s}_{n^2}(s))
\end{align*}

Also,
\begin{align*}
  &\sum_{\traject^1} R_2(s,a,s') \geq \sum_{\traject^2} R_2(s,a,s') \\ \implies& p(\hat{s}_{n^1}(s)) \geq p(\hat{s}_{n^1}(s)) \\
  \implies& \sum_{\hat{\traject^1}} \hat{R}(\hat{s},\hat{a},\hat{s}') > \sum_{\hat{\traject^2}} \hat{R}(\hat{s},\hat{a},\hat{s}')\\
  \implies&\hat{\traject^1} \geq \hat{\traject^2}
\end{align*}

For the second case,

\begin{align*}
&\sum_{\traject^1} R_1(s,a,s') \geq \tau_1 > \sum_{\traject^2} R_1(s,a,s')\\ \implies& \sum_{\hat{\traject^1}} \hat{R}(\hat{s},\hat{a},\hat{s}') > 0 \land \sum_{\hat{\traject^2}} \hat{R}(\hat{s},\hat{a},\hat{s}') \leq 0 \\
\implies& \sum_{\hat{\traject^1}} \hat{R}(\hat{s},\hat{a},\hat{s}') > \sum_{\hat{\traject^2}} \hat{R}(\hat{s},\hat{a},\hat{s}') \\
\implies& \hat{\traject^1} \geq \hat{\traject^2}
\end{align*}

For the third case, we can observe that:

\begin{align*}
  &\tau_1 > \sum_{\traject^1} R_1(s,a,s') \geq \sum_{\traject^2} R_1(s,a,s') \\
  \implies & B > \sum_{\hat{\traject^1}} h(s,s') \geq \sum_{\hat{\traject^2}} h(s,s') \\
  \implies & \hat{s}_{n^1}(c) \geq \hat{s}_{n^2}(s) \\
  \implies & \lambda \hat{s}_{n^1}(c) \geq \lambda \hat{s}_{n^2}(s) \\
  \implies & \sum_{\hat{\traject^1}} \hat{R}(\hat{s},\hat{a},\hat{s}') > \sum_{\hat{\traject^2}} \hat{R}(\hat{s},\hat{a},\hat{s}') \\
  \implies & \hat{\traject^1} \geq \hat{\traject^2}
\end{align*}

$\blacksquare$
  %

Note that we've specified the unconstrained objective as a quantitative reachability objective, ie. it is non-zero only in the terminal states. Now, we will remove the restriction over $p$.

\paragraph{Alternative 1:} First option is to extend the state space again to keep track of $p$ as well. So, the MDP will be:
\begin{itemize}
  \item State Space: $\doublehat{S} = S \times \realnumbers \times \realnumbers$ where the state $(s, c, \bar{p})$ corresponds to accumulating $\bar{p}$ $p(s)$ so far.
  \item Transition function: $\doublehat{P}: \doublehat{S} \times A \times \doublehat{S} \to [0,1]$ where
  \begin{align}
    \doublehat{P}((s,c, \bar{p}), &a, (s', c', \bar{p}')) \\ &=
    \begin{cases}
       \hat{P}((s,c),a,(s',c')), \bar{p}' = \bar{p} + p(s')  \\
       0, \text{otherwise}
     \end{cases}
  \end{align}

  \item Reward function: $\doublehat{R}: \doublehat{S} \times A \times \doublehat{S} \to \realnumbers$ where
  \begin{equation}
    \doublehat{R}((s,c,\bar{p}), a, (s',c',\bar{p}')) =
    \begin{cases}
      0, \text{if } s' \notin S_F \\
      \bar{p}', \text{else if } c' \geq 0 \\
      \lambda  c', \text{otherwise}
    \end{cases}
  \end{equation}
\end{itemize}

\paragraph{Alternative 2:} Extending the state space is not always optimal, as it increases the complexity. Instead, we can try to directly modify \ref{equation:Rprime}. With this, we will still use $\hat{S}$ and $\hat{P}$ as the state space and transition function, respectively.

\begin{itemize}
  \item Most simply, we can start giving $p(s')$ reward in the non-terminal states. Then, we can guarantee the lexicographic ordering by subtracting a large value $C_l$ that is guaranteed to be larger than $\sum_t{p(s')}$ from $\lambda c'$. \begin{equation}
      \doublehat{R}((s,c), a, (s',c')) =
      \begin{cases}
        p(s'), \text{if } s' \notin S_F \\
        p(s'), \text{else if } c' \geq 0 \\
        \lambda  c' - C_l, \text{otherwise}
      \end{cases}
    \end{equation}
\end{itemize}

Optimality proofs of these new MDPs are very similar to our proof in Section~\ref{sec:optimality-of-new-mdp}. So, we leave it to the reader to avoid repeating it.

%
%
%
%
%
%

\subsubsection{Multiple Constrained Objectives Case}\label{sec:supp:variations:augment:multi}

Our analysis above assumes that there are only two objectives: a constrained primary objective and an unconstrained secondary objective. However, in that setting, many CMDP algorithms are readily applicable. Therefore, we are more interested when we have multiple constraints that need to be solved in the lexicographic order. Yet, extending the approach above to this setting is not straightforward. To apply it, we need to know which constraints can be satisfied together. {More formally, if the constrained objectives are $1, \dots, (k-1)$, we need to find the maximum $i$ such that there exists a policy that satisfies objectives $1, \dots, (i-1)$, i.e. can reach a state $(s,c_1,c_2,\ldots, c_{i-1})$ such that $s \in S_F$ and $c_1,c_2,\ldots, c_{i-1} \geq 0$.} We identified three different approaches that could be used for this but we believe future work is needed to develop more efficient methods.

\paragraph{One-by-one} The simplest method to solve tasks with multiple constrained objectives is reminiscent of linear search algorithm. {We can start with the first (most important) constraint and see if we can find a policy that satisfies it, i.e. can reach $(s, c_1)$ such that $s \in S_F$ and $c_1 \geq 0$ from $s_{init}$. If such a policy exists, we can introduce the second constraint to see if a policy that satisfies both of them simultaneously exists.} Continuing in this fashion, it can be found up to which objective the agent can satisfy simultaneously. However, this method can be prohibitively expensive as it requires solving $O(k)$ subproblems. More importantly, it is very hard if not impossible to know whether a subproblem is not solvable or just taking too long to learn.

For this method, we can construct the following reward function for each $i$ value in different ways. An approach would be maximizing the worst violated constraint:

\begin{align}\label{eq:multi-state-aug-subproblem}
  \hat{R}((s,c_1, \dots, c_{i-1}),& a, (s',c'_1, \dots, c'_{i-1})) = \\ & \begin{cases}
    R(s, a, s'), \text{if } s' \notin S_F \\
    R(s, a, s'), \text{else if } c'_j \geq 0 \;\; \forall j < i \\
    \lambda  \min_j{c'_j} - C_l, \text{otherwise}
  \end{cases}
\end{align}

Where $R$ is the reward function of the unconstrained objective in the original MDP and
$C_l$ is an upper bound on the unconstrained reward that can be collected during an episode.

\paragraph{Binary Search} As the name suggests, this method is inspired by binary search algorithm. Assuming the constrained objectives are $1, \ldots, (k-1)$, we can start by trying to solve constraints $1, \ldots, \lfloor\frac{k}{2}\rfloor$, then we can try $1, \ldots, \lfloor \frac{3k}{4}\rfloor$ or $1, \ldots, \lfloor \frac{k}{4}\rfloor$ depending on whether it was solvable or not, respectively. While this method is faster than one-by-one, it still suffers from the same halting problem. We can use Eq.~\ref{eq:multi-state-aug-subproblem} for this approach too.

\paragraph{Dynamic Search} This method is not concretized and is intended mostly as an idea for future research. \cite{hayes2020dynamic} presents an approach to set the threshold values for TLQ dynamically, depending on the attainable performance up to that point in the training. Similarly, we can introduce and remove constraints dynamically during the training without waiting for the algorithm to successfully converge for a subproblem.

%

\section{Cone Projection}\label{sec:supp:cone-projection}
In this section, we show how the projection equation in Eq.~\ref{eq:projection-formula} is derived. For the sake of completeness, we start with some simpler and well-known projections and move to the derivation of Eq.~\ref{eq:projection-formula}.

\subsection{Orthogonal Projection onto a Hyperplane}

One of the most well-known projection tasks is projecting a vector $\vv{y} \in \Real^n$ onto a hyperplane $H_{\vv{a}}$ that passes through origin, specified by its normal vector
$\vv{a} \in \Real^n$ as $H_{\vv{a}} = \set{\vv{x} \in \Real^n \mid \ang{\vv{x},\vv{a}} = 0}$ where $\ang{}$ denotes the dot product defined as $\vv{v}^T\vv{a} = \sum_i \vv{v}_i \vv{a}_i$. Projection of $\vv{y}$ onto $H_{\vv{a}}$ is notated as $\projp{y}{a}$ and defined as
$\projp{y}{a} = \argmin_{\vv{v} \in H_{\vv{a}}} \norm{\vv{v} - \projp{y}{a}}$. $\norm{}$ denotes the L2 norm defined as
\begin{equation*}
\norm{\vv{v}} = \sqrt{\vv{v}^T\vv{v}} = \sqrt{\sum_i \vv{v}_i^2}
\end{equation*}

$\projp{y}{a}$ can be found easily by using well-known result $\vv{y} - \projp{y}{a} \parallel \vv{a}$, i.e. the projection error is parallel to the normal vector of the hyperplane. Then, there is a $c \in \Real$ such that $\vv{y} - \projp{y}{a} = c \vv{a}$.

\begin{align*}
  & \vv{y} - \projp{y}{a} \parallel \vv{a} \\
  \implies & \projp{y}{a} = \vv{y} - c\vv{a} \\
  \implies & \ang{\projp{y}{a},\vv{a}} = \ang{\vv{y}, \vv{a}} - c\ang{\vv{a}, \vv{a}} \\
  \implies & 0 = \ang{\vv{y}, \vv{a}} - c\norm{\vv{a}}^2 \\
  \implies & c = \frac{\ang{\vv{y}, \vv{a}}}{\norm{\vv{a}}^2} \\
  \implies & \projp{y}{a} = \vv{y} - \frac{\ang{\vv{y}, \vv{a}}}{\norm{\vv{a}}^2} \vv{a}
\end{align*}

\subsection{Projection onto a Halfspace}\label{sec:pg:background:half-space}

In many cases, we may want to not project a vector that is already on one side of the hyperplane. For example, if we want to project a vector onto a feasible set, the vector that is already in the feasible set should not be projected. This idea can be formalized by extending the definition above to halfspaces. A positive halfspace $S^+_{\vv{a}}$ is defined as $S^+_{\vv{a}} = \set{\vv{x} \in \Real^n \mid \ang{\vv{x},\vv{a}} \geq 0}$. This can be thought of as the set of vectors with which $\vv{a}$ makes an angle less than or equal to $\frac{\pi}{2}$. We can define the projection $\vv{y}$ onto $S^+_{\vv{a}}$ as follows:
\begin{align}
  \projs{a}{y} =
  \begin{cases}
  \vv{y},& \vv{y} \in S^+_{\vv{a}} \\
  \projp{a}{y},& \text{otherwise}
  \end{cases}
\end{align}

Note that the piecewise function handles $\vv{y} \in S^+_{\vv{a}}$ and $\vv{y} \not\in S^+_{\vv{a}}$ cases separately.

\subsection{Projecting a vector onto a cone}

While halfspaces are one of the most common sets in practice, they can be limiting in many cases. A natural extension to this idea would be limiting the set to vectors with which $\vv{a}$ makes an angle $\frac{\pi}{2} - \Delta$ for some $0 \leq \Delta \leq \frac{\pi}{2}$.
The angle between two vectors is defined using dot product:

\begin{equation*}
  \ang{v,u} = \cos{\angle{v,u}} \norm{\vv{a}}\norm{\vv{x}}
\end{equation*}
Note that since $\cos(\Delta) = \cos{2\pi - \Delta}$, the $\angle{v,u}$ can take
two values between $0$ and $2\pi$. For simplicity, we will always talk about the
smaller angle, i.e. $\angle: \Real^n \times \Real^n \to [0,pi]$.

This would be a hypercone which simplifies to a halfspace when $\Delta = 0$.

Let $C^{\Delta}_a$ be a hypercone with axis $a \in \Real^n$ and angle $\frac{\pi}{2} - \Delta$, i.e.
\begin{equation}
C^{\Delta}_a = \set{\vv{x} \in \Real^n \mid \norm{\vv{x}} = 0 \lor \frac{\vv{a}^T\vv{x}}{\norm{\vv{a}}\norm{\vv{x}}} \geq \cos{(\frac{\pi}{2} - \Delta)}}
\end{equation}

which uses the dot product formula above to see if cosine of the angle between $\vv{a}$ and $\vv{x}$ is greater than cosine of $\frac{\pi}{2} - \Delta$. For $0 \leq \Delta \leq \frac{\pi}{2}$, this corresponds to the angle between $\vv{a}$ and $\vv{x}$ being in the interval $[0, \frac{\pi}{2} - \Delta]$.

Then, the projection of a vector $\vv{g} \in \Real^n$ onto $C$ is defined as
\begin{equation}
\vv{g}^p_C = \argmin_{\hat{\vv{g}} \in C}{\norm{\hat{\vv{g}} - \vv{g}}_2}
\end{equation}

Solving this equation is not as straightforward as for the halfspaces. We will first show that $\vv{g}^p$ is planar with $\vv{g}$ and $\vv{a}$, i.e. they can be written as linear combinations of each other. are all on the same plane. This is intuitive and well-known in lower dimensions, but below can be seen a formal proof for higher dimensions. Once this is proven, we can utilize some two-dimensional geometric intuition to simplify the algebra.

\subsection{Proof of Planarity}

The projection is a constrained optimization problem:
\begin{align*}
  &\min{\norm{\vv{x} - \vv{g}}_2} \\
  &\textrm{subject to} \quad \frac{\vv{a}^T\vv{x}}{\norm{\vv{a}}\norm{\vv{x}}} \geq \cos{(\frac{\pi}{2} - \Delta)}
\end{align*}

If we can show that the solution to this vector is planar with $\vv{g}$ and $\vv{a}$, we will be done. The solution to this constrained optimization problem should satisfy Karush-Kuhn-Tucker (KKT) conditions which generalize the Lagrange Multiplier method to problems with inequality constraints. However, applying KKT conditions in this format does not provide a clean result. Therefore, we will prove a stronger claim that gives cleaner KKT conditions:

\begin{lemma}
For any fixed length $\vv{x}$, the projection is minimized when $\vv{x}$, $\vv{g}$, and $\vv{a}$ are planar.
\end{lemma}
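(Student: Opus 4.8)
The plan is to fix the length $\norm{\vv{x}} = r$ exactly as the lemma allows, convert the projection objective into a linear one, and then read off coplanarity directly from the stationarity (KKT) condition of the resulting problem — this is precisely why passing to a fixed length gives the ``cleaner KKT conditions'' alluded to just before the statement.

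First I would reduce the objective. For a fixed length $r>0$ we have $\norm{\vv{x}-\vv{g}}^2 = r^2 - 2\ang{\vv{g},\vv{x}} + \norm{\vv{g}}^2$, so minimizing $\norm{\vv{x}-\vv{g}}$ over the fixed-length slice is equivalent to maximizing the linear functional $\ang{\vv{g},\vv{x}}$. On that slice the cone constraint $\frac{\ang{\vv{a},\vv{x}}}{\norm{\vv{a}}\norm{\vv{x}}} \geq \cos(\frac{\pi}{2}-\Delta)$ becomes the affine constraint $\ang{\vv{a},\vv{x}} \geq c$ with $c = \norm{\vv{a}}\,r\cos(\frac{\pi}{2}-\Delta) \geq 0$, nonnegative because $\frac{\pi}{2}-\Delta \in [0,\frac{\pi}{2}]$. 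Thus the fixed-length subproblem is to maximize $\ang{\vv{g},\vv{x}}$ subject to the equality $\norm{\vv{x}}^2 = r^2$ and the inequality $\ang{\vv{a},\vv{x}} \geq c$.

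Next I would write the KKT conditions. The payoff of having fixed the length is that it enters as an \emph{equality} constraint, so its multiplier is unsigned and stationarity is clean. Forming the Lagrangian $\ang{\vv{g},\vv{x}} - \mu(\norm{\vv{x}}^2 - r^2) + \nu(\ang{\vv{a},\vv{x}} - c)$ with $\nu \geq 0$ and setting the gradient in $\vv{x}$ to zero yields $\vv{g} - 2\mu\vv{x} + \nu\vv{a} = \vv{0}$. Provided $\mu \neq 0$ this rearranges to $\vv{x} = \frac{1}{2\mu}(\vv{g} + \nu\vv{a})$, which exhibits $\vv{x}$ as a linear combination of $\vv{g}$ and $\vv{a}$, i.e. $\vv{x} \in \text{span}\{\vv{g},\vv{a}\}$ — exactly planarity. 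Since an optimizer exists by compactness of the feasible slice and, under a mild constraint qualification, must be a KKT point, the minimizer for each fixed length is coplanar with $\vv{g}$ and $\vv{a}$.

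The part that needs care is not the coplanarity itself but ruling out the degenerate multipliers. If $\mu = 0$, stationarity collapses to $\vv{g} + \nu\vv{a} = \vv{0}$, i.e. $\vv{g}$ is a scalar multiple of $\vv{a}$; then $\text{span}\{\vv{g},\vv{a}\}$ is just the line through $\vv{a}$ and the planarity claim is vacuous, so this case is dispatched separately. I would also check active-gradient regularity — the gradients $2\vv{x}$ and $\vv{a}$ of the two active constraints are independent except when $\vv{x} \parallel \vv{a}$, which is itself an in-plane configuration — and note the only genuine exceptional regime, where $\vv{g}$ points essentially opposite the cone so that $\ang{\vv{g},\vv{x}} \leq 0$ across the whole slice, forces the global all-length projection to the apex $\vv{0}$, where coplanarity is again vacuous. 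With these boundary cases disposed of, the identity $\vv{x} = \frac{1}{2\mu}(\vv{g}+\nu\vv{a})$ establishes the lemma, after which I would drop into the plane $\text{span}\{\vv{g},\vv{a}\}$ and finish the closed-form projection by two-dimensional geometry, as the paper intends.
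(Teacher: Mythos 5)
Your proof is correct, and it rests on the same core idea as the paper's: fix the length, apply KKT to the resulting constrained problem, and read planarity off the stationarity condition, which exhibits the optimizer as a linear combination of $\vv{g}$ and $\vv{a}$. The difference is in execution, and yours is genuinely cleaner. The paper keeps the objective as $\norm{\vv{x}-\vv{g}}^2$ and the cone constraint in its normalized form $\frac{\vv{a}^T\vv{x}}{\norm{\vv{a}}\norm{\vv{x}}} \geq \sin\Delta$, so its stationarity condition requires differentiating the quotient and several lines of simplification (invoking feasibility $\vv{x}^T\vv{x}=c^2$ and the active-constraint value) before the linear-combination form emerges. You instead observe that on the fixed-norm slice both the objective and the constraint become affine --- minimizing $\norm{\vv{x}-\vv{g}}^2$ is maximizing $\ang{\vv{g},\vv{x}}$, and the cone constraint reads $\ang{\vv{a},\vv{x}} \geq \norm{\vv{a}}\, r \sin\Delta$ --- so stationarity is the one-line identity $\vv{g} - 2\mu\vv{x} + \nu\vv{a} = \vv{0}$. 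This buys a transparent treatment of the degenerate multiplier ($\mu = 0$ forces $\vv{g}$ collinear with $\vv{a}$, where coplanarity of any triple is automatic) and an explicit existence and constraint-qualification check, both of which the paper glosses over. One thing to note: the paper's messier display is reused immediately after the lemma, where complementary slackness is invoked to show the cone constraint is active at the projection (angle exactly $\frac{\pi}{2}-\Delta$) unless $\vv{g}$ already lies in the cone; your streamlined formulation recovers the same fact just as easily (if $\nu = 0$ and $\mu \neq 0$ then $\vv{x} \parallel \vv{g}$, i.e.\ $\vv{g}$ is in the cone), so nothing is lost, but you should state it if you intend your version to replace the paper's derivation end to end.
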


\begin{proof}
This gives us the following modified optimization problem with an additional constraint. Now, we will show that the planarity does not depend on $\norm{\vv{x}}$, which will be denoted as $c$.

\begin{align*}
  &\min && f(\vv{x}) = \norm{\vv{x} - \vv{g}}_2 \\
  &\textrm{subject to} \quad &&r(\vv{x}) = \frac{\vv{a}^T\vv{x}}{\norm{\vv{a}}\norm{\vv{x}}} \geq \cos{(\frac{\pi}{2} - \Delta)} = \sin{\Delta}\\
  & && h(\vv{x}) = \norm{\vv{x}} = c
\end{align*}

Swapping norms with their dot product equivalents (replacing the norm in the objective and equality constraint with a norm square for conciseness) and writing the remaining in the standard format gives us:

\begin{align*}
  &\min && f(\vv{x}) = \vv{x}^T \vv{x} - 2 \vv{g}^T \vv{x} + \vv{g}^T \vv{g} \\
  &\textrm{subject to} \quad &&r(\vv{x}) =  \sin{\Delta} - \frac{\vv{a}^T\vv{x}}{\sqrt{\vv{a}^T \vv{a}}\sqrt{\vv{x}^T \vv{x}}} \leq 0\\
  & && h(\vv{x}) = \vv{x}^T \vv{x} - c^2 = 0
\end{align*}

KKT conditions for this problem require that any minimum point $\vv{\hat{x}}$ should satisfy the following condition \cite{chong2004introduction}:

\begin{align*}
  &\nabla f(\vv{\hat{x}}) + \lambda \nabla h(\vv{\hat{x}}) + \mu \nabla r(\vv{\hat{x}}) = \vv{0} \\
  \implies &2\vv{\hat{x}} - 2\vv{g} + \lambda 2\vv{\hat{x}} + \mu \frac{\vv{a}(\sqrt{\vv{a}^T \vv{a}}\sqrt{\vv{\hat{x}}^T \vv{\hat{x}}}) - \frac{1}{2}\frac{\sqrt{\vv{a}^T \vv{a}}}{\sqrt{\vv{\hat{x}}^T \vv{\hat{x}}}} 2\vv{\hat{x}}(\vv{a}^T\vv{\hat{x}})}{(\vv{\hat{x}}^T \vv{\hat{x}})(\vv{a}^T \vv{a})} = \vv{0} &&& \vv{\hat{x}}^T \vv{\hat{x}} = c^2 \quad \text{from feasibility}\\
  \implies & 2\vv{\hat{x}} - 2\vv{g} + \lambda 2\vv{\hat{x}} + \mu \frac{\vv{a}(c\sqrt{\vv{a}^T \vv{a}}) -\frac{\sqrt{\vv{a}^T \vv{a}}}{c}\vv{\hat{x}}(\vv{a}^T\vv{\hat{x}})}{(c^2)(\vv{a}^T \vv{a})} = \vv{0} \\
  \implies & 2\vv{\hat{x}} - 2\vv{g} + \lambda 2\vv{\hat{x}} + \mu \frac{c\vv{a} -\frac{\vv{a}^T\vv{\hat{x}}}{c}\vv{\hat{x}}}{c^2 \sqrt{\vv{a}^T \vv{a}}} = \vv{0} &&& \frac{\vv{a}^T\vv{\hat{x}}}{\sqrt{\vv{\hat{x}}^T \vv{\hat{x}}}} = \sin{\Delta} \sqrt{\vv{a}^T \vv{a}} \quad \text{C.S.} \\
  \implies & 2\vv{\hat{x}} - 2\vv{g} + \lambda 2\vv{\hat{x}} + \mu \frac{c\vv{a} -\sin{\Delta}\sqrt{\vv{a}^T \vv{a}}\vv{\hat{x}}}{c^2 \sqrt{\vv{a}^T \vv{a}}} = \vv{0} \\
  \implies & 2\vv{\hat{x}} - 2\vv{g} + \lambda 2\vv{\hat{x}} + \mu \frac{\vv{a}}{c\sqrt{\vv{a}^T \vv{a}}} -\mu \frac{\sin{\Delta}}{c^2}\vv{\hat{x}} = \vv{0} &&& \text{Reorganize the terms}  \\
  \implies & \vv{\hat{x}}(2 + 2\lambda -\mu \frac{\sin{\Delta}}{c^2}) = 2\vv{g} - \mu \frac{\vv{a}}{c\sqrt{\vv{a}^T \vv{a}}}
\end{align*}

Since $\vv{g}^p$ is such a minimum point, the above analysis holds for it too. Hence, it can be written as a linear combination of $\vv{a}$ and $\vv{g}$. This means that the three vectors are planar.

\end{proof}

Note that we can see another important result from the analysis above. The complementary slackness condition of KKT requires that $\mu r(\vv{\hat{x}}) = 0$. However, if $\mu = 0$, the last line equation in the proof simplifies to
\begin{equation*}
  \vv{\hat{x}}(2 + 2\lambda) = 2\vv{g}
\end{equation*}
If $(2 + 2\lambda) \geq 0$, it means $\vv{g}$ and $\vv{\hat{x}}$ in the same direction. This is only possible if $\vv{g}$ is already in the hypercone.
If $(2 + 2\lambda) < 0$, this means $\vv{g}$ and $\vv{\hat{x}}$ are in the opposite
directions which cannot be the projection, as choosing $0$ vector would give a
smaller projection error.
Hence, unless $\vv{g}$ is already in the hypercone and does not require a projection, $r(\vv{\hat{x}})$ should be $0$. That means the angle between $\vv{a}$
and $\vv{\hat{x}}$ is $\frac{\pi}{2} - \Delta$.

\subsection{Derivation of the Projection Formula}

Now that it is known that all three vectors are planar, we can just use two-dimensional geometry to reason about it and derive the formula. This can be done as these three vectors in $\Real^n$ will span a two-dimensional subspace of $\Real^n$ unless they are all collinear, i.e. scalar multiplicative of each other. This would mean that $\vv{a}$ and $\vv{g}$ are already in the same direction and no projection is needed, which is a special case we will consider separately. Also, any 2-dimensional subspace of $\Real^n$ is isomorphic to $\Real^2$, i.e. identical in structure\cite{linear-algebra-book.ch5}. Figure~\ref{fig:geometry-of-projection} shows the case when the angle between $\vv{a}$ and $\vv{g}$, $\phi$, is larger than $\frac{\pi}{2}$. It can be confirmed that the other configurations like will result in the same equations too. Note that when writing the equations below, we considered when $\vv{g}$ is outside of the cone. When $\vv{g} \in C$, we will simply call $\vv{g}^p = \vv{g}$ similar to the piecewise function in Section~\ref{sec:pg:background:half-space}.

\begin{figure}[tb]
	\centering
	\includegraphics[trim= 40 0 40 0, scale=0.3]{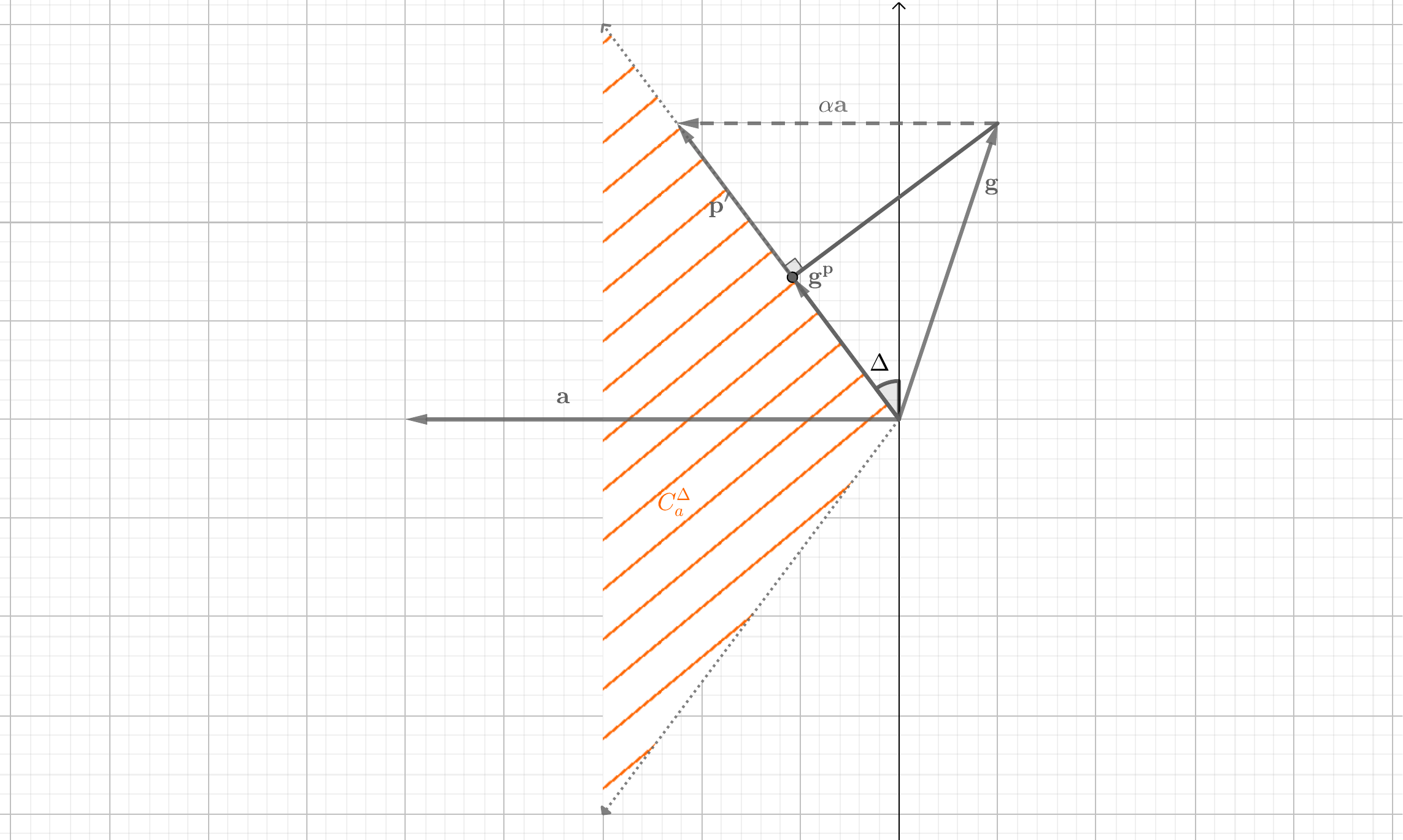}
                                \vspace*{3mm}
	\caption{This figure shows how the vectors would be positioned on a plane. The orange region shows the cone. The angle between $\vv{a}$ and $\vv{g}$, $\phi$, is omitted to not crowd the figure.}
        \vspace*{6mm}
	\label{fig:geometry-of-projection}
\end{figure}

Firstly, we will find the direction of the projection. Let $\vv{p'}$ be a vector with the same direction as $\vv{g}^p$ and it can be written as below.
\begin{align*}
  \vv{p'} = \vv{g} + \alpha \vv{a}
\end{align*}

Then, we can find $\alpha$ as below by using the law of sines:
  \begin{align*}
  & \alpha \norm{\vv{a}} = \norm{\vv{g}} \sin{(\phi - \frac{\pi}{2})} + \norm{\vv{g}} \cos{(\phi - \frac{\pi}{2})} \frac{1}{\sin{(\frac{\pi}{2} - \Delta)}} \sin{\Delta} \\
  \implies &   \alpha \norm{\vv{a}} = -\norm{\vv{g}} \sin{(\frac{\pi}{2} - \phi)} + \norm{\vv{g}} \cos{(\frac{\pi}{2}-\phi)} \frac{1}{\sin{(\frac{\pi}{2} - \Delta)}} \sin{\Delta} \\
  \implies &  \alpha \norm{\vv{a}} = -\norm{\vv{g}} \cos{\phi} + \norm{\vv{g}} \sin{\phi} \frac{1}{\cos{\Delta}} \sin{\Delta} \\
  \implies &  \alpha \norm{\vv{a}} = \norm{\vv{g}} (\sin{\phi} \frac{1}{\cos{\Delta}} \sin{\Delta} - \cos{\phi})\\
  \implies &  \alpha = \frac{\norm{\vv{g}}}{\norm{\vv{a}}} (\sin{\phi} \frac{1}{\cos{\Delta}} \sin{\Delta} - \cos{\phi}) \\
  \implies &  \alpha = \frac{\norm{\vv{g}}}{\norm{\vv{a}}} (\sin{\phi}\tan{\Delta} - \cos{\phi}) \\
  \implies &  \vv{p'} = \vv{g} +  \frac{\norm{\vv{g}}}{\norm{\vv{a}}} (\sin{\phi}\tan{\Delta} - \cos{\phi})\vv{a}
\end{align*}

This $\vv{p'}$ has the correct direction but not necessarily the correct norm to minimize the projection error. The correct projection will be $\vv{g}^p = k\vv{p'}$ where $k \in \Real$. We can find the
$k$ using the well-known rule that the projection error is perpendicular to the projection.
\begin{align*}
&\ang{\vv{g} - k\vv{p'}, \vv{p'}} = 0 \\ 
\implies& \ang{\vv{g}, \vv{p'}} - k\ang{\vv{p'}, \vv{p'}} = 0\\
\implies& \norm{\vv{g}}\norm{\vv{p'}}\cos{(\Delta + \phi - \frac{\pi}{2})} - k\norm{\vv{p'}}^2 = 0 \\
\implies& \norm{\vv{g}}\norm{\vv{p'}}\cos{(\frac{\pi}{2}-\Delta + \phi)} - k\norm{\vv{p'}}^2 = 0 \\
\implies& \norm{\vv{g}}\norm{\vv{p'}}\sin{(\Delta + \phi)} - k\norm{\vv{p'}}^2 = 0 \\
\implies& \norm{\vv{p'}}(\norm{\vv{g}}\sin{(\Delta + \phi)} - k\norm{\vv{p'}}) = 0 \\
\implies& \norm{\vv{g}}\sin{(\Delta + \phi)} - k\norm{\vv{p'}} = 0 \qquad \text{if $\norm{\vv{p'}} \neq 0$}\\
\implies& \norm{\vv{g}}\sin{(\Delta + \phi)} = k\norm{\vv{p'}} \\
\implies& k = \frac{\norm{\vv{g}}}{\norm{\vv{p'}}}\sin{(\Delta + \phi)}
\end{align*}

The same result also could be obtained by solving another optimization problem with $k$ as the variable. Combining the formula for $\vv{p'}$ and $k$ gives the
formula for $\vv{g}^p$.sec:supp:pg:exp:reachability

Moving forward, we'll assume a function $projectCone(\vv{g}, \vv{a}, \Delta)$ which returns the projection of $\vv{g}$ onto $C^{\Delta}_{\vv{a}}$ possibly handling $\vv{g} \in C^{\Delta}_{\vv{a}}$ and $\vv{g} \notin C^{\Delta}_{\vv{a}}$ cases separately.

\section{Lexicographic Projection Algorithm}\label{sec:supp:lpa}

In this section, we start by giving some background on gradients and directional derivatives that is necessary to understand our algorithm. Then, we share the formulation of the lexicographic optimization problems we solve. Finally, we give some further justification on why we need cone projection instead of halfspace projection and share the remaining results with our algorithm that was left out of the main paper due to space constraints.
\subsection{Background on Gradient and Directional Derivatives}

Gradient of a function gives the direction and rate of the fastest increase from point $p$. Moreover, directional derivative of $F$ at $p$ along direction $\vv{u}$, i.e. $\frac{\partial F}{\partial \vv{u}}(p)$, can be computed as $\ang{\vv{u}, \nabla F(p)}$.

Intuitively, the directional derivatives give the rate of change $\nabla F(p)$ in the given direction. As can the dot product implies, this rate is the largest when the angle between $\vv{u}$ and $\nabla F$ is zero. In other words, the gradient gives the direction of the fastest increase.

Using directional derivatives, we can reason about how changes to $p$ affect the value of $F$. For example, since the gradient has the fastest instantaneous rate of change, $F(p + \epsilon \frac{\nabla F(p)}{\norm{\nabla F(p)}}) \geq  F(p + \epsilon \frac{\vv{u}}{\norm{\vv{u}}}), \forall \vv{u} \in \Real^n$ for sufficiently small $\epsilon$.

Similarly, if $\angle{\vv{u}, \nabla F(p)} \leq \frac{\pi}{2}$, $F(p + \epsilon \vv{u}) \geq F(p)$ for sufficiently small $\epsilon$. This can be confirmed by computing the directional derivative using $\ang{\vv{u}, \nabla F(p)} = \norm{u}\norm{\nabla F(p)} \cos{\angle{\vv{u}, \nabla F(p)}}$. In other words, using the directional derivatives, we can obtain a direction of non-decrease for a sufficiently small step size.

%
%
%

\subsection{Formulation of Thresholded Lexicographic Multi-Objective Optimization Problems}

A generic multi-objective optimization problem  with $K$ objectives and $n$ parameters can be formulated as:

Given a function $F:A \to \Real^K$ where $A \in \Real^n$ and a comparison relation $\geq^c$ for value vectors in $\Real^K$, find an element $\theta^* \in A$ such that $f(\theta^*) \geq^c f(\theta)$ for all $\theta \in A$.


Notice that when we have multiple objectives, the gradients will form a $K$-tuple, $G = (\nabla F_1, \nabla F_2, \cdots, \nabla F_K)$, where $\nabla F_i$ is the gradient of $i^{th}$ component of $F$.

Different instantiations of the comparison relation lead to various multi-objective problem families. In the case of Lexicographic Multi-Objective Optimization, the comparison relation $>^c$, is defined as

\begin{align*}
  \vv{v_1} >^c \vv{v_2}  \iff \exists i < K \;\; \text{s. t.} & \; \forall j < i \; \; \vv{v_1}(j) \geq \vv{v_2}(j)) \; \\
  & \land \vv{v_1}(i) > \vv{v_2}(i))
\end{align*}

In \emph{Thresholded} Lexicographic Multi-Objective Optimization, a threshold vector $\tau \in \Real^{K-1}$ is introduced to express the values after which the user does not care about improvements in that objective. This new comparison relation can be denoted by $>^{(c, \tau)}$ which is defined as:

$ \mathbf{u} >^{\vv{\tau}} \mathbf{v}$ iff there exists $ i \leq K$ such that:
\begin{itemize}
\item $\forall j < i $ we have $\mathbf{u_j} \geq \min(\mathbf{v_j}, \tau_j)$; and
  \begin{itemize}
\item
  if $i<K$ then $\min(\mathbf{u_i}, \tau_i) > \min(\mathbf{v_i}, \tau_i)$,

\item otherwise if $i=K$ then $\mathbf{u_i} > \mathbf{v_i}$.
\end{itemize}
\end{itemize}

The relation $ \geq^{\vv{\tau}} $ is defined as $ >^{\vv{\tau}} \, \vee \, =$.

Notice that this completely parallels the definition of LMDPs from Section~\ref{sec:background}.

\subsection{Justification of Cone Projection}\label{sec:supp:pg:justification}

Since thresholded lexicographic multi-objective optimization problems impose a strict importance order on the objectives and it is not known how many objectives can be satisfied simultaneously beforehand, a natural approach is to optimize the objectives one-by-one until they reach the threshold values. However, once an objective is satisfied, optimizing the next objective could have a detrimental effect on the satisfied objective. This could even lead to the previous objective failing. While we can always go back to optimizing this failing objective, this would be inefficient, even worse, potentially leading to endless loops of switching between objectives.

However, we could limit our search for a satisfying point for the new objective to the directions not detrimental to already satisfied objectives by using our results about directional derivatives. For simplicity, assume that we have a primary objective $F_1$ which is satisfied at the current point $\theta_n$ and a secondary objective $F_2$ which we are trying to optimize next. $\nabla_{\vv{u}} F_1$, the change in $F_1$ along a direction $\vv{u}$, is equal to $\ang{\vv{u}, \nabla F_1(\theta_n)} = \norm{\vv{u}}\norm{\nabla F_1(\theta_n)} \cos \paren{\angle{\vv{u}, \nabla F_1(\theta_n)}}$, choosing a direction which makes an angle $\phi \in [-\frac{\pi}{2}, \frac{\pi}{2}]$
with $\nabla F_1(\theta_n)$ would make the directional derivative non-negative. Therefore, updating $\theta$ as $\theta_{n+1} = \theta_n + \epsilon \vv{u}$ with an infinitesimal $\epsilon$ would not reduce the value of $F_1$. If  $\nabla_{\vv{u}} F_2$ is positive, we can optimize $F_2$ without jeopardizing $F_1$. Note that the same logic hold even if we have $k$ already satisfied objectives $F_1, \ldots, F_k$ and now optimizing $F_{k+1}$ as long as  $\forall i \leq k \nabla_{\vv{u}} F_i \geq 0$.

While any such $\vv{u}$ allows us to carefully optimize our new objective $F_2$, we should pick an $\vv{u}$ with maximum $\frac{\partial F_2}{\partial \vv{u}}(\theta_n)$ to optimize $F_2$ most efficiently. While we know that $\nabla F_2(\theta_n)$ has the maximum directional derivative, it may not satisfy our previous requirements.  Instead, we can use the vector projection to find the $\vv{u}$ which minimizes $\norm{\vv{u} - \nabla F_2(\theta_n)}$ under the constraint $\nabla_{\vv{u}} F_1 \geq 0$.
Notice that non-negative directional derivative means that $\vv{u}$ lies on the positive halfspace of $\nabla F_1(\theta_n)$, i.e. $\vv{u} \in S^+_{\vv{u}}$.
So, projecting $\nabla F_2(\theta_n)$ onto $ S^+_{\nabla F_1(\theta_n)}$ will give us the $\vv{u}$ which satisfies the requirement and is closest to $\nabla F_2{\theta_n}$, i.e. has the largest directional derivative. As a special case, when $\nabla F_1$ and $\nabla F_2$ point in opposite directions, this projection will give a zero vector which means that we cannot optimize $F_2$ without sacrificing $F_1$. This point would be locally Pareto optimal. In general, iteratively projecting $\nabla F_{k+1}(\theta_n)$ on the positive halfspaces of $\nabla F_1(\theta_n), \ldots, \nabla F_k(\theta_n)$
gives the desired vector as long as the final vector satisfies the requirements. If it does not satisfy the requirements, this point can be called a locally Pareto optimal point.

While the approach above has the theoretical guarantees for the infinitely small step size, this does not translate to practice as the step sizes are not small enough. For example, Figure~\ref{fig:tangent-fail} shows how a direction that lies on the positive halfspace of the gradient can lead to a decrease. It can be also seen that unless the step size is infinitely small, this would always lead to a decrease. We can overcome this issue by generalizing halfspace to a hypercone for which the central angle is $\frac{\pi}{2} - \Delta$ where $\Delta$ is the hyperparameter of conservativeness. For $\Delta = 0$, this would be the halfspace case introduced above. Figure~\ref{fig:how-cone} shows how hypercone projection differs from halfspace projection and keeps the function above or at the current level for reasonably large step sizes.

\begin{figure}[tb]
	\centering
	\includegraphics[trim= 300 140 300 0, scale=0.35]{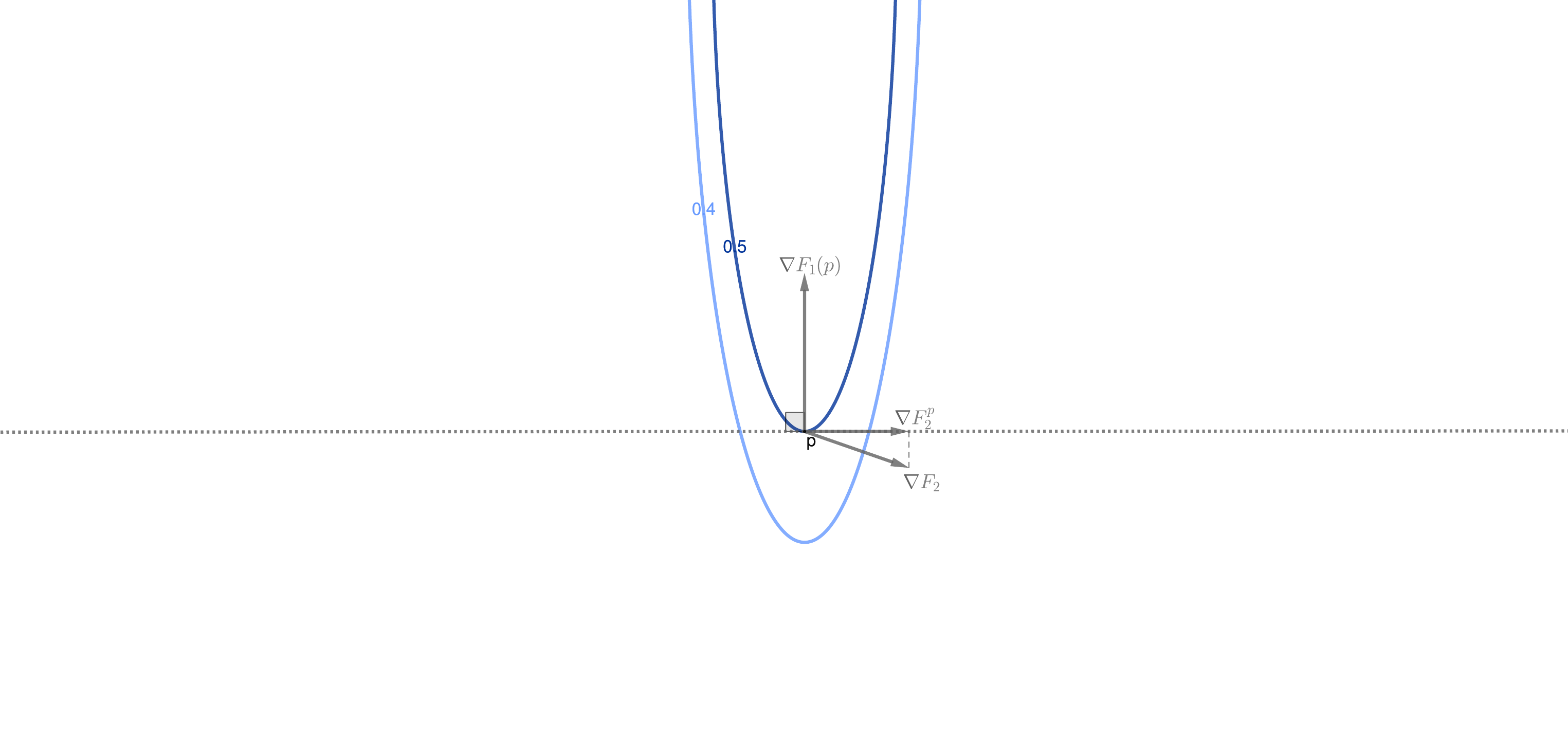}
                        \vspace*{3mm}
	\caption{This figure shows why projecting onto the positive hyperspace is not always enough. The curves show the level curves of a function $F_1$. $\nabla F_2^p$ shows the projection of $\nabla F_2$ on the positive halfspace of $\nabla F_1$. Note that following $\nabla F_2^p$ reduces the function from $0.5$ to $0.4$.}
                \vspace*{6mm}
                \label{fig:tangent-fail}
\end{figure}

\begin{figure}[tb]
	\centering
	\includegraphics[trim= 200 0 240 0, scale=0.35]{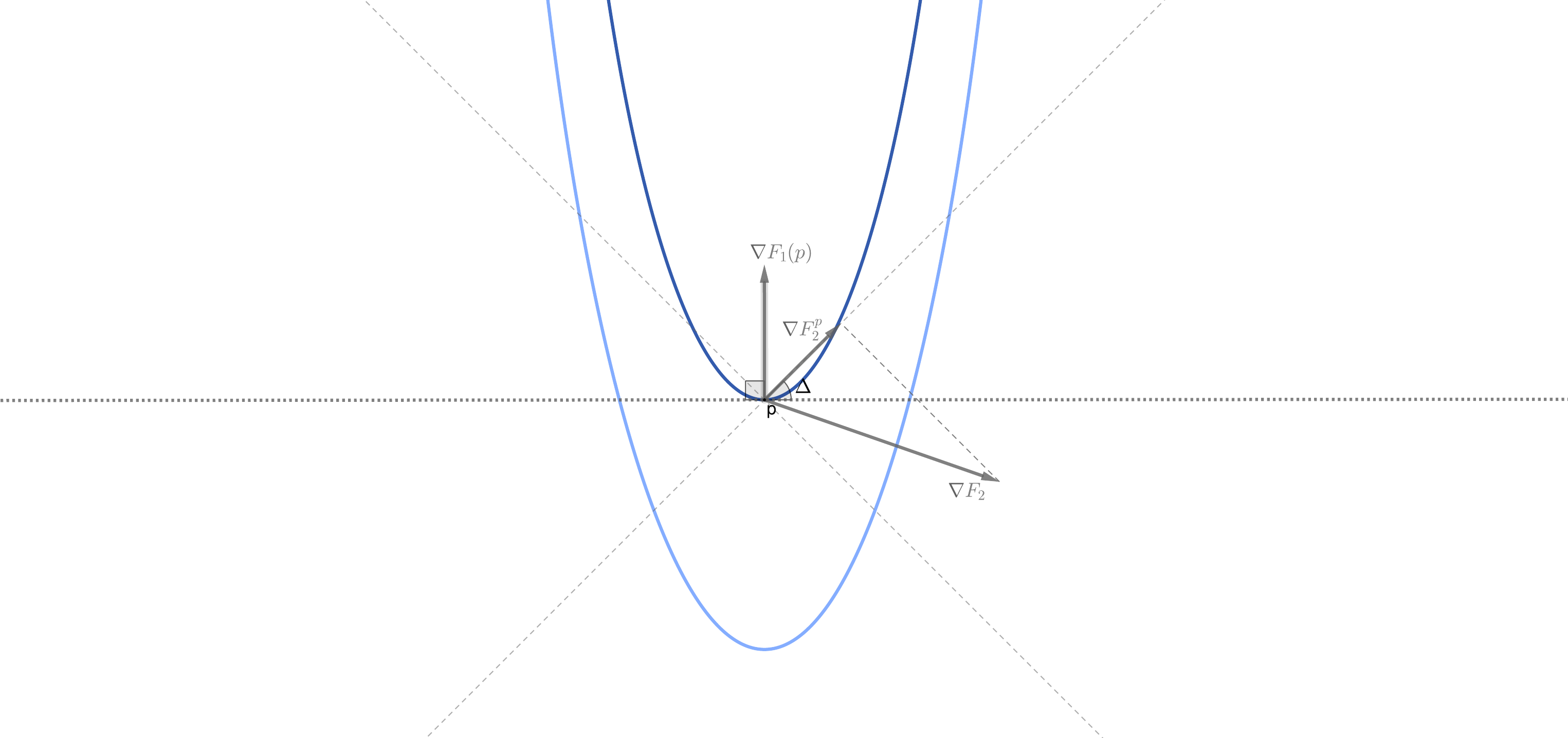}
                                \vspace*{3mm}
                                \caption{A visualization of cone projection. The dashed line shows the boundaries of the cone which are two lines in two dimensions. Notice that following $\nabla F_2^p$ keeps the function from at or above $0.5$ unless a very large step size is chosen.}
                                                        \vspace*{6mm}
	\label{fig:how-cone}
\end{figure}

\subsection{Experiments}\label{sec:supp:pg:lpa:experiments}

In this section, we present the rest of the results for the experiments shown in Figure~\ref{fig:vanilla-cone-values}. All of the experiments are done using the same benchmark problem described in the main paper.

Figure~\ref{fig:vanilla-cone-trajectory} demonstrates how the value changes shown in Figure~\ref{fig:vanilla-cone-values} were reflected on the parameter space. The figure shows the trajectory the algorithm takes over the level curves of the functions. Notice that $F_2$, in blue, is completely ignored until the threshold for $F_1$ is reached. Then, the algorithm optimizes $F_2$ while respecting the passed threshold of $F_1$, indicated by its trajectory almost along the level curve of $F_1$.

\begin{figure}[tb]
	\centering
	\includegraphics[scale=0.4]{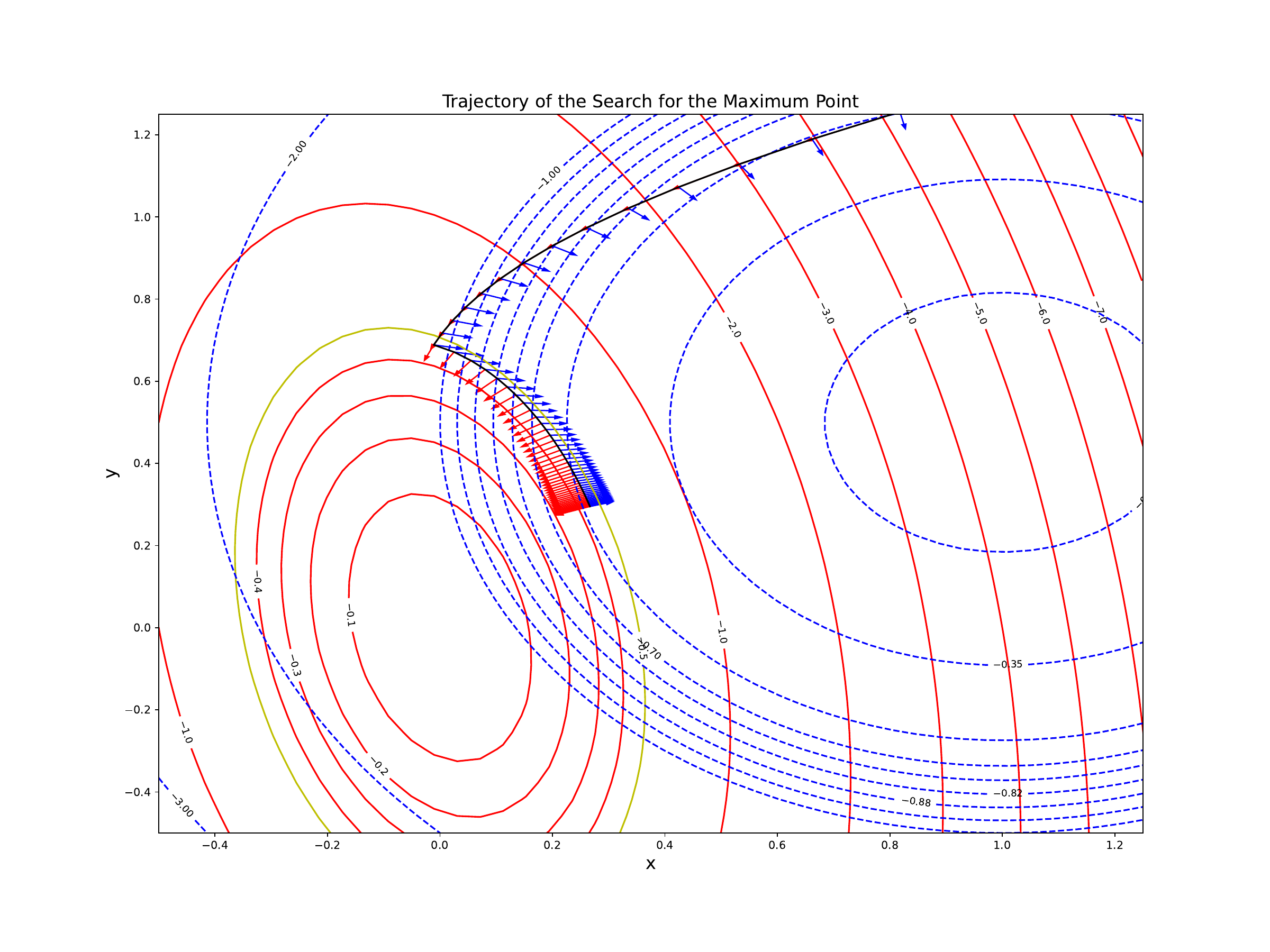}
                                \vspace*{3mm}
                                \caption{Behavior of the algorithm without the active constraints heuristic and hyperparameters $\alpha=0.2$ and $\Delta = \frac{\pi}{90}$. The red and blue curves show the level curves of $F_1$ and $F_2$, respectively. The single yellow curve shows the threshold for $F_1$. The black line shows the trajectory of the solution, while the red and blue arrows show the gradients w.r.t. $F_1$ and $F_2$, respectively.}
                                                        \vspace*{6mm}
	\label{fig:vanilla-cone-trajectory}
\end{figure}

Repeating the same experiment with $AC$ heuristic and $b=0.01$ yields the results shown in Figure~\ref{fig:ac-cone-trajectory} and Figure~\ref{fig:ac-cone-values}. Notice that the highest value we were able to obtain for $F_2$ was $-0.580$ without $AC$ heuristic; but this was improved to $-0.554$ with $AC$. This is because $AC$ prevents unnecessarily improving $F_1$ over the threshold. This can be observed from the final values of $F_1$ which is $-0.450$ without $AC$ and $-0.496$ with $AC$. The downside is losing the smooth and safe trajectory allowed by our vanilla algorithm, indicated by the zig-zags in Figure~\ref{fig:ac-cone-trajectory} and Figure~\ref{fig:ac-cone-values}. The zig-zags represent the corrections for sacrificing too much from $F_1$ when optimizing $F_2$.


\begin{figure}[tb]
	\centering
	\includegraphics[scale=0.35]{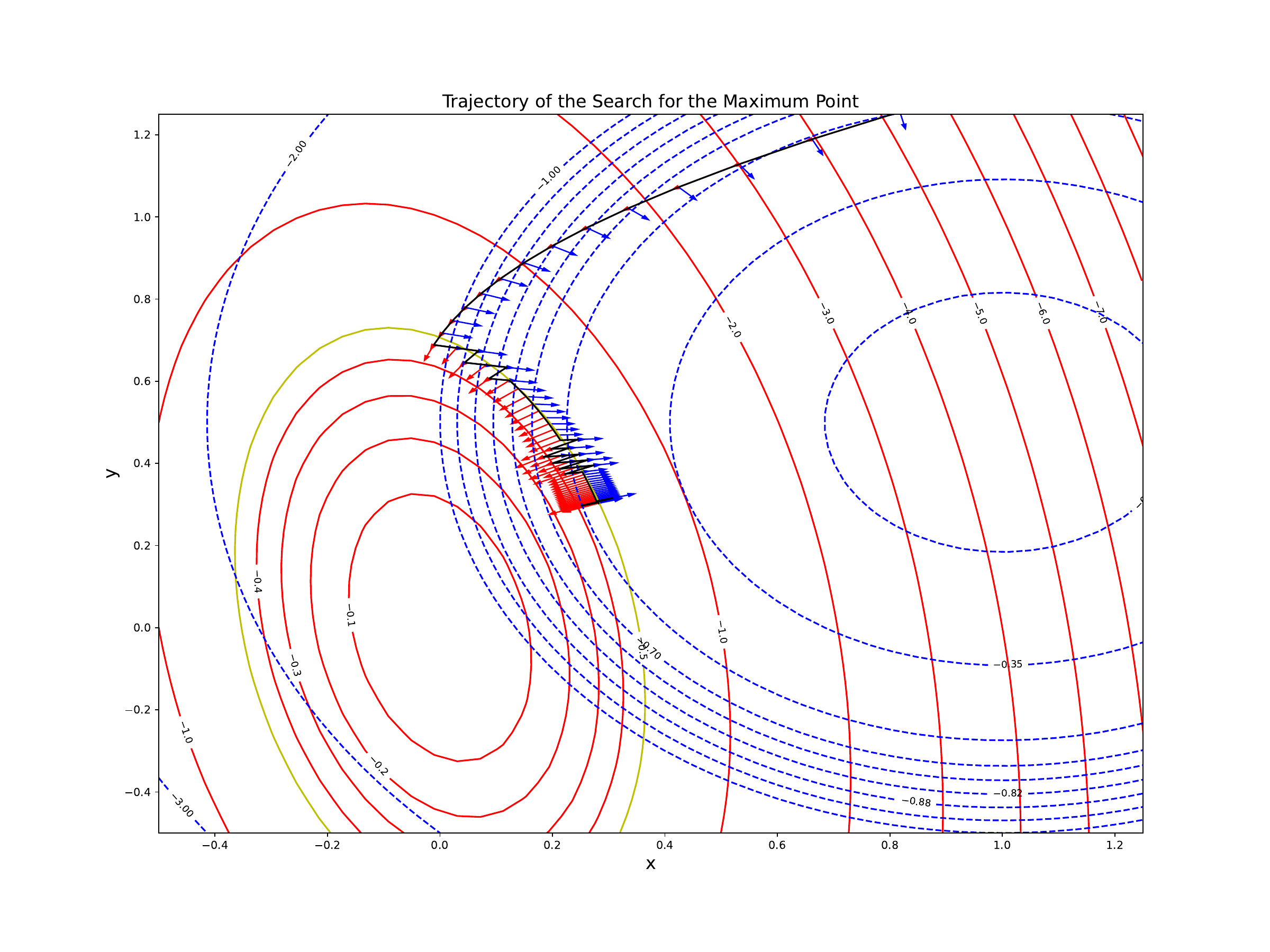}
                                \vspace*{3mm}
	\caption{Behavior of the algorithm with the active constraints heuristic and $b=0.01$. The rest of the hyperparameters are as described in Figure~\ref{fig:vanilla-cone-trajectory}.}
                                \vspace*{6mm}
	\label{fig:ac-cone-trajectory}
\end{figure}

\begin{figure}[tb]
	\centering
	\includegraphics[scale=0.6]{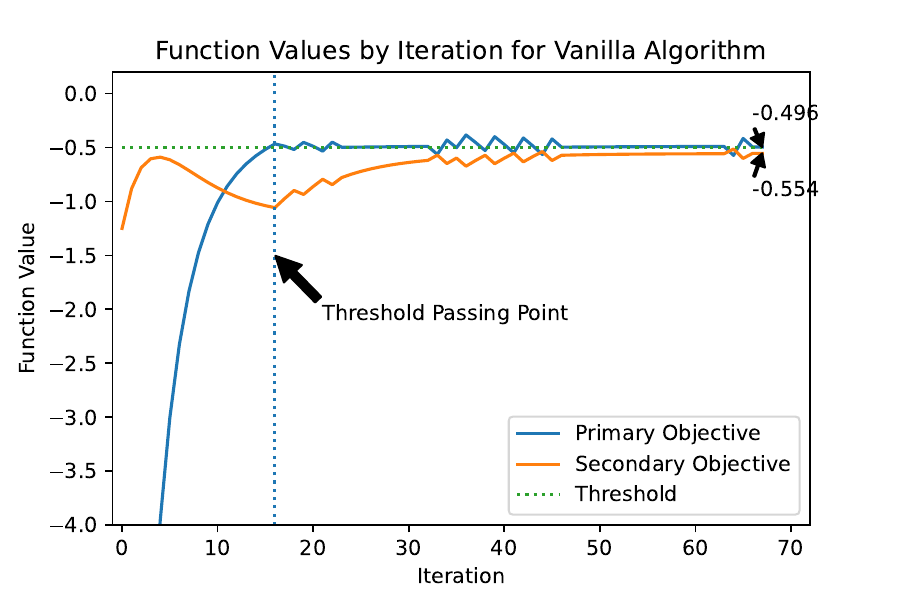}
                                \vspace*{3mm}
                                \caption{The changes in the function values for the experiment described in Figure~\ref{fig:ac-cone-trajectory}.}
                                                        \vspace*{6mm}
	\label{fig:ac-cone-values}
\end{figure}

\section{Convergence Properties}\label{sec:conv-prop}

In this section, we will study the convergence properties of our algorithm and prove Proposition~\ref{prop:alg-monotone-increase}. We will denote the a 2-norm of a vector with $\norm{}$ and assume that the objective functions are concave and $L$-smooth. This allows us utilizing Lemma~\ref{lemma:concave_ub}.

\begin{lemma}\label{lemma:concave_ub}
    For a concave and $L$-smooth function $f: \Real^n \to \Real$ following inequality is satisfied for any $\vx, \vv{y} \in \Real^n$:
    \begin{equation}
        f(\vv{y} ) \geq f(\vx) + \nabla f(\vx)^T (\vv{y} -\vx) - \frac{L}{2} \norm{\vv{y} -\vx}^2
    \end{equation}
\end{lemma}

\subsection{Two objective case}\label{sec:conv-prop:two-obj-case}
Firstly, we will consider the two objective case, i.e. $K=2$. Then, applying our algorithm results in a sequence in the parameter space $\{\vx^{(l)}\}_{l \in \Nat}$ where $\vx^{(l)} = \vx^{(l-1)} + \alpha \vu$.
\begin{equation}
    \vu = \begin{cases}
        \nabla f_1(\vx) \;\; & \textrm{if} \; f_1(\vx) < \tau_1 \\
        \projc{\nabla f_2(\vx)}{\nabla f_1(\vx)} \;\; & \textrm{otherwise}
    \end{cases}
\end{equation}


We can see that our algorithm has two modes which changes the computation of gradient step $\vu$: $f_1(\vx) < \tau_1$ and $f_1(\vx) \geq \tau_1$. We will inspect these two modes separately. When we are in the first mode, i.e. $f_1(\vx) < \tau_1$, the algorithm reduces to  fixed step-size gradient ascent. Hence, the known properties of gradient descent will hold. In particular, 

\begin{lemma}
    For concave and $L$-smooth $f_1$, $f_1(\vx^{l+1}) \geq f_1(\vx^i) + \frac{\alpha}{2}\norm{\nabla f_1(\vx^i)}^2$ for all $i \in \Nat$.
\end{lemma}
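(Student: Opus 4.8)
The plan is to recognize this as the familiar one-step progress guarantee for fixed-step-size gradient ascent, specialized to the first mode of the algorithm, where the update reduces to $\vx^{i+1} = \vx^{i} + \alpha\,\nabla f_1(\vx^{i})$. The entire statement will follow from a single application of the ascent inequality in Lemma~\ref{lemma:concave_ub}, combined with the ``small enough step size'' hypothesis inherited from Proposition~\ref{prop:alg-monotone-increase}. (I read the indices in the statement as a typo and prove the per-step bound $f_1(\vx^{i+1}) \geq f_1(\vx^{i}) + \tfrac{\alpha}{2}\norm{\nabla f_1(\vx^{i})}^2$ for all $i$, which is what the claim intends.)

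First I would instantiate Lemma~\ref{lemma:concave_ub} with $\vx = \vx^{i}$ and $\vv{y} = \vx^{i+1}$, so that $\vv{y} - \vx = \alpha\,\nabla f_1(\vx^{i})$. Substituting and collecting the two terms that are quadratic in $\nabla f_1(\vx^{i})$ gives
\begin{align*}
f_1(\vx^{i+1}) &\geq f_1(\vx^{i}) + \alpha\,\norm{\nabla f_1(\vx^{i})}^2 - \frac{L\alpha^2}{2}\,\norm{\nabla f_1(\vx^{i})}^2 \\
&= f_1(\vx^{i}) + \alpha\Bigl(1 - \tfrac{L\alpha}{2}\Bigr)\norm{\nabla f_1(\vx^{i})}^2.
\end{align*}
The only remaining step is to make precise what ``small enough step size'' must mean: choosing $\alpha \leq \tfrac{1}{L}$ forces $1 - \tfrac{L\alpha}{2} \geq \tfrac{1}{2}$, and since $\norm{\nabla f_1(\vx^{i})}^2 \geq 0$ the coefficient may be replaced by this lower bound, yielding exactly the claimed inequality. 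I would also remark that concavity is not actually needed for this direction — $L$-smoothness alone supplies the quadratic bound that Lemma~\ref{lemma:concave_ub} encodes — so the estimate holds for any step that stays in the first mode, i.e.\ while $f_1(\vx^{i}) < \tau_1$.

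There is essentially no deep obstacle here; the points requiring care are purely bookkeeping. One must make explicit that the cone projection plays no role while $f_1$ is unsatisfied, so that the update is genuinely vanilla gradient ascent in this mode, and one must record the step-size restriction $\alpha \leq 1/L$ that is implicit in the ``small enough'' clause. If the authors prefer not to fix $\alpha \leq 1/L$, the honest alternative is to keep the constant $\alpha\bigl(1 - L\alpha/2\bigr)$ explicit; but the clean $\alpha/2$ factor stated in the lemma is precisely what the $\alpha \leq 1/L$ assumption buys.
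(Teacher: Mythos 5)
Your proof is correct and follows essentially the same route as the paper's: a single application of Lemma~\ref{lemma:concave_ub} with $\vv{y}-\vx = \alpha\nabla f_1(\vx^{i})$ (the first-mode vanilla gradient-ascent update), yielding $f_1(\vx^{i+1}) \geq f_1(\vx^{i}) + \alpha\bigl(1-\tfrac{L\alpha}{2}\bigr)\norm{\nabla f_1(\vx^{i})}^2$ and then invoking the step-size restriction. In fact you are more careful than the paper, which stops at the $\alpha\bigl(1-\tfrac{L\alpha}{2}\bigr)$ coefficient and cites $\alpha \leq \tfrac{2}{L}$ (which only gives monotonicity); your observation that $\alpha \leq \tfrac{1}{L}$ is what actually delivers the stated $\tfrac{\alpha}{2}$ constant correctly closes that gap.
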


\begin{proof}
\begin{align*}
    & f_1(y) \geq f_1(\vx) + \nabla f_1(\vx)^T (y-\vx) - \frac{L}{2} \norm{y-\vx}^2 && \text{(a)}\\
    \implies &f_1(\vx^+ \geq f_1(\vx) + \nabla f_1(\vx)^T (\alpha \vu) - \frac{L}{2} \norm{\alpha \vu}^2 && \text{(b)}\\
    \implies & f_1(\vx^+) \geq f_1(\vx) + \nabla f_1(\vx)^T (\alpha \nabla f_1(\vx) - \frac{L}{2} \norm{\alpha \nabla f_1(\vx)}^2 && \text{(c)}\\
    \implies & f_1(\vx^+) \geq f_1(\vx) + \alpha \nabla f_1(\vx)^T\nabla f_1(\vx) - \frac{L}{2} \norm{\alpha \nabla f_1(\vx)}^2\\
    \implies & f_1(\vx^+) \geq f_1(\vx) + \alpha\norm{\nabla f_1(\vx)}^2 - \frac{L\alpha}{2} \norm{\nabla f_1(\vx)}^2\\
    \implies & f_1(\vx^+) \geq f_1(\vx) + \alpha(1 - \frac{L\alpha}{2}) \norm{\nabla f_1(\vx)}^2 && \text{(d)}\\
\end{align*}

(a) follows from Lemma~\ref{lemma:concave_ub}, (b) comes from taking $y=\vx^{(l+1)}, \vx=\vx^{(l)}$ and denoting $\vx^+ = \vx^{(l+1)}$, $\vx = \vx^{(l)}$ for ease of notation. (c) is obtained by plugging in the value of $\vu$ in the first mode. (d) shows that if we pick a small enough step size such that $\alpha \leq \frac{2}{L}$, ${f_1(\vx^{l})}_{l\in \Nat}$ is a strictly increasing sequence unless $\nabla f_1(\vx) = 0$ which happens when $f_1$ reaches its maximum value.

\end{proof}


If the maximum value of $f_1$ is less than $\tau_1$, our analysis will be exactly same as the regular fixed step size gradient ascent as the second mode of $\vu$ will not be possible. Otherwise, we can show that $\exists k^* \text{ s.t. } f_1(\vx^{k^*}) \geq \tau_1$. In fact, we can give an upper bound to $k^*$ utilizing known results for fixed step size gradient ascent.
\begin{theorem}
If $f_1$ is convex, differentiable, and $L$-smooth, running gradient ascent algorithm starting from $\vx^{(0)}$ for $i$ steps with step size $\alpha \leq 1/L$ will yield $\vx^{(l)}$ such that:
\begin{equation}
     f_1(\vx^*) - f_1(\vx^{(l)}) \leq \frac{\norm{\vx^{(0)}-\vx^*}^2}{2\alpha i}
\end{equation}
where $\vx^*$ is the optimal solution.
\end{theorem}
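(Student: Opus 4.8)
The plan is to adapt the classical $O(1/i)$ convergence-rate argument for fixed-step gradient descent to the gradient-ascent/concave setting used throughout this section. Since the bound is stated in the maximization form $f_1(\vx^*) - f_1(\vx^{(l)})$, I read the hypothesis ``convex'' as its concave mirror and everywhere replace the descent step $\vx^+ = \vx - \alpha\nabla f_1$ by the ascent step $\vx^+ = \vx + \alpha\nabla f_1(\vx)$. The whole argument rests on two one-step inequalities that are then telescoped. The first is the ascent/monotonicity estimate already derived in the preceding lemma: for $\alpha \le 1/L$ one has $f_1(\vx^+) \ge f_1(\vx) + \tfrac{\alpha}{2}\norm{\nabla f_1(\vx)}^2$, which in particular shows that $\{f_1(\vx^{(l)})\}_l$ is non-decreasing. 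The second is the first-order characterization of concavity evaluated at the optimum, $f_1(\vx^*) \le f_1(\vx) + \nabla f_1(\vx)^T(\vx^* - \vx)$.

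The key algebraic step is to rewrite the inner-product term $\nabla f_1(\vx)^T(\vx^* - \vx)$ by completing the square. Using $\nabla f_1(\vx) = (\vx^+ - \vx)/\alpha$ and expanding $\norm{\vx^+ - \vx^*}^2$ yields the identity $\nabla f_1(\vx)^T(\vx^* - \vx) = \tfrac{1}{2\alpha}\bigl(\norm{\vx - \vx^*}^2 - \norm{\vx^+ - \vx^*}^2\bigr) + \tfrac{\alpha}{2}\norm{\nabla f_1(\vx)}^2$. Substituting this into the concavity bound and then using the ascent estimate to absorb the leftover term, via $\tfrac{\alpha}{2}\norm{\nabla f_1(\vx)}^2 \le f_1(\vx^+) - f_1(\vx)$, cancels $f_1(\vx)$ cleanly and produces the per-step bound $f_1(\vx^*) - f_1(\vx^+) \le \tfrac{1}{2\alpha}\bigl(\norm{\vx - \vx^*}^2 - \norm{\vx^+ - \vx^*}^2\bigr)$.

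Finally I would telescope: summing this inequality over the steps $l = 1, \dots, i$ with $\vx = \vx^{(l-1)}$ and $\vx^+ = \vx^{(l)}$ collapses the right-hand side to $\tfrac{1}{2\alpha}\bigl(\norm{\vx^{(0)} - \vx^*}^2 - \norm{\vx^{(i)} - \vx^*}^2\bigr) \le \tfrac{1}{2\alpha}\norm{\vx^{(0)} - \vx^*}^2$. Because the sequence $f_1(\vx^{(l)})$ is non-decreasing, the final-iterate gap $f_1(\vx^*) - f_1(\vx^{(i)})$ is the smallest summand, hence is bounded by $\tfrac{1}{i}$ times the total, giving $f_1(\vx^*) - f_1(\vx^{(i)}) \le \norm{\vx^{(0)} - \vx^*}^2 / (2\alpha i)$, as claimed.

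I expect the main obstacle to be purely bookkeeping: arranging the completion-of-the-square identity and the cancellation of the gradient-norm term so that the per-step estimate telescopes exactly, and invoking the monotonicity step correctly to convert the averaged bound into a last-iterate bound rather than merely a best-iterate bound. No deeper difficulty is anticipated, since both required inequalities are already available, namely Lemma~\ref{lemma:concave_ub} and the preceding ascent lemma.
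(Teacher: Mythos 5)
Your proof is correct, but there is in fact nothing in the paper to compare it against: the paper never proves this theorem. It is imported as a ``known result'' for fixed-step-size gradient ascent --- the classical $O(1/i)$ last-iterate rate for smooth convex (here, concave) objectives --- and is used only to upper-bound the number of iterations $k^*$ before $f_1$ first crosses its threshold $\tau_1$, via the chain of implications that immediately follows the theorem statement. Your argument is precisely the standard textbook proof of that cited result, correctly mirrored to the ascent/concave setting: the completion-of-the-square identity, the per-step estimate $f_1(\vx^*) - f_1(\vx^{(l)}) \leq \frac{1}{2\alpha}\paren{\norm{\vx^{(l-1)}-\vx^*}^2 - \norm{\vx^{(l)}-\vx^*}^2}$, telescoping, and monotonicity of $f_1(\vx^{(l)})$ to convert the averaged bound into a last-iterate bound rather than a best-iterate bound. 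All of these steps check out, and the monotonicity you invoke is exactly what the paper's preceding lemma supplies (its $\alpha(1-\tfrac{L\alpha}{2})$ coefficient is at least $\tfrac{\alpha}{2}$ under the hypothesis $\alpha \leq 1/L$). One bookkeeping correction: your closing sentence claims both required inequalities are ``Lemma~\ref{lemma:concave_ub} and the preceding ascent lemma,'' but the first-order concavity bound $f_1(\vx^*) \leq f_1(\vx) + \nabla f_1(\vx)^T(\vx^*-\vx)$ that your argument actually uses is \emph{not} Lemma~\ref{lemma:concave_ub}, which is the smoothness \emph{lower} bound $f(\vv{y}) \geq f(\vx) + \nabla f(\vx)^T(\vv{y}-\vx) - \frac{L}{2}\norm{\vv{y}-\vx}^2$; the concavity upper bound follows directly from the concavity hypothesis instead, so this is a mislabeling rather than a gap. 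In short, your proposal supplies a complete, self-contained proof of a statement the paper leaves as an external citation.
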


Then, we can conclude that 
\begin{align*}
    f_1(\vx^{(l)}) \geq \tau_1  &\iff  f_1(\vx^*) - f_1(\vx^{(l)}) \leq  f_1(\vx^*) - \tau_1 \\
    &\impliedby \frac{\norm{\vx^{(0)}-\vx^*}^2}{2\alpha i} \leq f_1(\vx^*) - \tau_1 \\
    &\impliedby \frac{\norm{\vx^{(0)}-\vx^*}^2}{2\alpha (f_1(\vx^*) - \tau_1)} \leq i 
\end{align*}

Now, we can continue with the more interesting part of our analysis.

If $f_1(\vx^*) \geq \tau_1$, we have shown that our algorithm eventually will switch to the second mode, assigning  $\vu = \projc{\nabla f_2(\vx)}{\nabla f_1(\vx)} = \frac{\cos{\Delta}}{\sin{\phi}} \sin{(\Delta+\phi)} \left(\!\nabla f_2(\vx) +  \nabla f_1(\vx) \frac{\norm{\nabla f_2(\vx)}}{\norm{\nabla f_1(\vx)}} \paren{\sin{\phi}\tan{\Delta} - \cos{\phi}}\! \right)$ where $\Phi$ is the angle between $\nabla f_1(\vx)$ and $\nabla f_2(\vx)$. We will first show that it will not go back to the second mode. In fact, we will prove a stronger claim that $f_1(\vx^{(l+1)}) \geq f_1(\vx^{(l)}) \forall i\geq k^*$.

\begin{equation}\label{eq:f1_ineq}
 f_1(\vx^+)  \geq  f_1(\vx) + \nabla f_1(\vx)^T (\alpha \vu) - \frac{L}{2} \norm{\alpha \vu}^2
\end{equation}

We will compute each term separately. Firstly, we will assume $\nabla f_2 \notin C_{\nabla f_1}^{\Delta}$; hence, requiring actual projection.

\begin{align*}
\;\nabla f_1(\vx)^T &(\alpha \vu) \\
&=\nabla f_1(\vx)^T (\alpha  \frac{\cos{\Delta}}{\sin{\phi}} \sin{(\Delta+\phi)} \left(\!\nabla f_2(\vx) +  \nabla f_1(\vx) \frac{\norm{\nabla f_2(\vx)}}{\norm{\nabla f_1(\vx)}} \paren{\sin{\phi}\tan{\Delta} - \cos{\phi}}\! \right)) \\
&= \alpha \frac{\cos{\Delta}}{\sin{\phi}} \sin{(\Delta+\phi)} \left( \nabla f_1(\vx)^T\nabla f_2(\vx) + \norm{\nabla f_1(\vx)}\norm{\nabla f_2(\vx)}\paren{\sin{\phi}\tan{\Delta} - \cos{\phi}}\right)\\
&= \alpha \frac{\cos{\Delta}}{\sin{\phi}} \sin{(\Delta+\phi)} \left( \norm{\nabla f_1(\vx)}\norm{\nabla f_2(\vx)} \paren{\cos{\phi}+\sin{\phi}\tan{\Delta} - \cos{\phi}}\right) \\
&= \alpha \frac{\cos{\Delta}}{\sin{\phi}} \sin{(\Delta+\phi)} \left( \norm{\nabla f_1(\vx)}\norm{\nabla f_2(\vx)} \paren{\sin{\phi}\tan{\Delta}}\right) \\ 
&= \alpha \cos{\Delta}\tan{\Delta} \sin{(\Delta+\phi)} \left( \norm{\nabla f_1(\vx)}\norm{\nabla f_2(\vx)}\right) \\ 
&= \alpha \sin{\Delta}\sin{(\Delta+\phi)} \left( \norm{\nabla f_1(\vx)}\norm{\nabla f_2(\vx)}\right) \\ 
\end{align*}

\begin{align*}
\frac{L}{2} \norm{\alpha \vu}^2
=&\frac{L}{2} \norm{\alpha \frac{\cos{\Delta}}{\sin{\phi}} \sin{(\Delta+\phi)} \left(\!\nabla f_2(\vx) +  \nabla f_1(\vx) \frac{\norm{\nabla f_2(\vx)}}{\norm{\nabla f_1(\vx)}} \paren{\sin{\phi}\tan{\Delta} - \cos{\phi}}\! \right)}^2 \\
=& \frac{L}{2} \alpha^2 \frac{\cos^2{\Delta}}{\sin^2{\phi}} \sin^2{(\Delta+\phi)} \norm{\left(\!\nabla f_2(\vx) +  \nabla f_1(\vx) \frac{\norm{\nabla f_2(\vx)}}{\norm{\nabla f_1(\vx)}} \paren{\sin{\phi}\tan{\Delta} - \cos{\phi}}\! \right)}^2 \\
=& \frac{L}{2} \alpha^2 \frac{\cos^2{\Delta}}{\sin^2{\phi}} \sin^2{(\Delta+\phi)} (\norm{\nabla f_2(\vx)}^2 + \norm{\nabla f_1(\vx) \frac{\norm{\nabla f_2(\vx)}}{\norm{\nabla f_1(\vx)}} \paren{\sin{\phi}\tan{\Delta} - \cos{\phi}}}^2 + \\ 
& \qquad\qquad\qquad 2 \norm{\nabla f_2(\vx)} \norm{\nabla f_1(\vx) \frac{\norm{\nabla f_2(\vx)}}{\norm{\nabla f_1(\vx)}} \paren{\sin{\phi}\tan{\Delta} - \cos{\phi}}} \cos{\phi} )
\\
= & \frac{L}{2} \alpha^2 \frac{\cos^2{\Delta}}{\sin^2{\phi}} \sin^2{(\Delta+\phi)} \left(\norm{\nabla f_2(\vx)}^2 + \norm{\nabla f_2(\vx)}^2 \paren{\sin{\phi}\tan{\Delta} - \cos{\phi}}^2 + 2 \norm{\nabla f_2(\vx)}^2 \paren{\sin{\phi}\tan{\Delta} - \cos{\phi}} \cos{\phi} \right) \\
= & \frac{L}{2} \alpha^2 \frac{\cos^2{\Delta}}{\sin^2{\phi}} \sin^2{(\Delta+\phi)} \norm{\nabla f_2(\vx)}^2 \left(1 + \paren{\sin{\phi}\tan{\Delta} - \cos{\phi}}^2 + 2 \paren{\sin{\phi}\tan{\Delta} - \cos{\phi}} \cos{\phi} \right)\\
= & \frac{L}{2} \alpha^2 \frac{\cos^2{\Delta}}{\sin^2{\phi}} \sin^2{(\Delta+\phi)} \norm{\nabla f_2(\vx)}^2 \left(\sin^2{\phi} + \cos^2{\phi} + \paren{\sin{\phi}\tan{\Delta} - \cos{\phi}}^2 + 2 \paren{\sin{\phi}\tan{\Delta} - \cos{\phi}} \cos{\phi} \right)\\
= & \frac{L}{2} \alpha^2 \frac{\cos^2{\Delta}}{\sin^2{\phi}} \sin^2{(\Delta+\phi)} \norm{\nabla f_2(\vx)}^2 \left(\sin^2{\phi} + \paren{\cos{\phi} + \paren{\sin{\phi}\tan{\Delta} - \cos{\phi}}}^2 \right)\\
= & \frac{L}{2} \alpha^2 \frac{\cos^2{\Delta}}{\sin^2{\phi}} \sin^2{(\Delta+\phi)} \norm{\nabla f_2(\vx)}^2 \left(\sin^2{\phi} + \sin^2{\phi}\tan^2{\Delta} \right)\\
= & \frac{L}{2} \alpha^2 \frac{\cos^2{\Delta}}{\sin^2{\phi}} \sin^2{(\Delta+\phi)} \norm{\nabla f_2(\vx)}^2 \sin^2{\phi} \left(1 + \tan^2{\Delta} \right)\\
= & \frac{L}{2} \alpha^2 \frac{\cos^2{\Delta}}{\sin^2{\phi}} \sin^2{(\Delta+\phi)} \norm{\nabla f_2(\vx)}^2 \sin^2{\phi} \cos^2{\Delta}\\
= & \frac{L}{2} \alpha^2 \sin^2{(\Delta+\phi)} \norm{\nabla f_2(\vx)}^2\\
\end{align*}

Plugging these in Eq.~\ref{eq:f1_ineq}, we obtain:

\begin{align*}
 f_1(\vx^+)  &\geq  f_1(\vx) + \alpha \sin{\Delta}\sin{(\Delta+\phi)} \left( \norm{\nabla f_1(\vx)}\norm{\nabla f_2(\vx)}\right) - \frac{L}{2} \alpha^2 \sin^2{(\Delta+\phi)} \norm{\nabla f_2(\vx)}^2 \\
 &=  f_1(\vx) + \alpha \sin{(\Delta+\phi)} \norm{\nabla f_2(\vx)} \left( \sin{\Delta} \norm{\nabla f_1(\vx)} - \frac{L}{2} \alpha \sin{(\Delta+\phi)} \norm{\nabla f_2(\vx)}\right) \\ \nonumber
\end{align*}

Hence, when $\sin{\Delta} \norm{\nabla f_1(\vx)} \geq \frac{L}{2} \alpha \sin{(\Delta+\phi)} \norm{\nabla f_2(\vx)} \iff \alpha \leq \frac{2\sin{\Delta} \norm{\nabla f_1(\vx)}}{L\sin{(\Delta+\phi)} \norm{\nabla f_2(\vx)}}$, we will get $f_1(\vx^{(l+1)}) \geq f_1(\vx^{(l)})$. Then, the algorithm does not go back to first mode after switching to the second.

If $\nabla f_2 \in C_{\nabla f_1}^{\Delta}$, $\vu = \nabla f_2$. Then,

\begin{align*}
 f_1(\vx^+)  &\geq  f_1(\vx) + \alpha \cos{\phi}\norm{\nabla f_1(\vx)}\norm{\nabla f_2(\vx)} - \frac{L}{2} \alpha^2 \norm{\nabla f_2(\vx)}^2 \\
&\geq  f_1(\vx) + \alpha \cos{(\frac{\pi}{2}-\Delta)}\norm{\nabla f_1(\vx)}\norm{\nabla f_2(\vx)} - \frac{L}{2} \alpha^2 \norm{\nabla f_2(\vx)}^2 && (\nabla f_2 \in C_{\nabla f_1}^{\Delta} \implies \phi \leq \frac{\pi}{2} - \Delta) \\
&=  f_1(\vx) + \alpha \sin{\Delta}\norm{\nabla f_1(\vx)}\norm{\nabla f_2(\vx)} - \frac{L}{2} \alpha^2 \norm{\nabla f_2(\vx)}^2 \\
&=  f_1(\vx) + \alpha \norm{\nabla f_2(\vx)} \paren{\sin{\Delta}\norm{\nabla f_1(\vx)} - \frac{L}{2} \alpha \norm{\nabla f_2(\vx)}} \\
\end{align*}

Hence, choosing $\alpha$ as described above will guarantee  $f_1(\vx^{(l+1)}) \geq f_1(\vx^{(l)})$.

Now, we will investigate the changes in the value of $f_2$ in the second mode. Writing Lemma~\ref{lemma:concave_ub} for $f_2$ with $\vv{y} = \vx^+$, we obtain Eq.~\ref{eq:f2_ineq}.

\begin{equation}\label{eq:f2_ineq}
f_2(\vx^+) \geq f_2(\vx) + \nabla f_2(\vx)^T (\alpha \vv{u}) - \frac{L}{2}\norm{\alpha\vv{u}}^2
\end{equation}

This time we will assume $\nabla f_2 \notin C_{\nabla f_1}^{\Delta}$ and the other case in analogous as we have shown above. We have already computed $\frac{L}{2}\norm{\alpha\vv{u}}^2$. So, we only need $\nabla f_2(x)^T \vv{u}$.

\begin{align*}
&\nabla f_2(x)^T \vv{u} \\
&= \frac{\cos{\Delta}}{\sin{\phi}} \sin{(\Delta+\phi)} \nabla f_2(x)^T (\!\nabla f_2(x) + \\
& \qquad\quad \nabla f_1(x) \frac{\norm{\nabla f_2(x)}}{\norm{\nabla f_1(x)}}
 \paren{\sin{\phi}\tan{\Delta} - \cos{\phi}}\!)\\
&= \frac{\cos{\Delta}}{\sin{\phi}} \sin{(\Delta+\phi)}(\nabla f_2(x)^T \nabla f_2(x) + \\
& \qquad\quad \nabla f_2(x)^T \nabla f_1(x) \frac{\norm{\nabla f_2(x)}}{\norm{\nabla f_1(x)}} \paren{\sin{\phi}\tan{\Delta} - \cos{\phi}}\!)) \\
&= \frac{\cos{\Delta}}{\sin{\phi}} \sin{(\Delta+\phi)}(\norm{\nabla f_2(x)}^2 + \\
& \qquad\quad \norm{\nabla f_2(x)} \norm{\nabla f_1(x)} \cos{\phi} \frac{\norm{\nabla f_2(x)}}{\norm{\nabla f_1(x)}} \paren{\sin{\phi}\tan{\Delta} - \cos{\phi}}\!)) \\
&= \frac{\cos{\Delta}}{\sin{\phi}} \sin{(\Delta+\phi)}(\norm{\nabla f_2(x)}^2 + \\
& \qquad\quad \norm{\nabla f_2(x)}^2 \cos{\phi} \paren{\sin{\phi}\tan{\Delta} - \cos{\phi}}\!)) \\
&= \frac{\cos{\Delta}}{\sin{\phi}} \sin{(\Delta+\phi)} \norm{\nabla f_2(x)}^2 (1 + 
 \cos{\phi} \paren{\sin{\phi}\tan{\Delta} - \cos{\phi}}\!)) \\
&= \frac{\cos{\Delta}}{\sin{\phi}} \sin{(\Delta+\phi)} \norm{\nabla f_2(x)}^2 (1 -  
\cos{\phi}^2 + \cos{\phi}\sin{\phi}\tan{\Delta} ) \\
&= \frac{\cos{\Delta}}{\sin{\phi}} \sin{(\Delta+\phi)} \norm{\nabla f_2(x)}^2 ( 
\sin{\phi}^2 + \cos{\phi}\sin{\phi}\tan{\Delta} ) \\
&= \cos{\Delta} \sin{(\Delta+\phi)} \norm{\nabla f_2(x)}^2 ( 
\sin{\phi} + \cos{\phi}\tan{\Delta} ) \\
&= \sin{(\Delta+\phi)} \norm{\nabla f_2(x)}^2 ( 
\sin{\phi}\cos{\Delta}  + \cos{\phi}\sin{\Delta} ) \\
&= \sin{(\Delta+\phi)} \norm{\nabla f_2(x)}^2 \sin{(\Delta+\phi)} \\
&= \norm{\nabla f_2(x)}^2 \sin^2{(\Delta+\phi)}
\end{align*}

Now, plugging these values in Eq.~\ref{eq:f2_ineq}, we obtain:
\begin{align*}
    f_2(x^+) &\geq f_2(x) + \alpha\norm{\nabla f_2(x)}^2 \sin^2{(\Delta+\phi)} - \frac{L}{2}\alpha^2\sin^2{(\Delta+\phi)} \norm{\nabla f_2(x)}^2\\
    &=f_2(x) + \alpha(1-\frac{L\alpha}{2})\sin^2{(\Delta+\phi)}\norm{\nabla f_2(x)}^2 \\
    &\geq f_2(x) + \alpha(1-\frac{1}{2})\sin^2{(\Delta+\phi)}\norm{\nabla f_2(x)}^2 && \text{Using $\alpha \leq \frac{1}{L}$}\\
     &= f_2(x) + \frac{\alpha}{2}\sin^2{(\Delta+\phi)}\norm{\nabla f_2(x)}^2
\end{align*}

Note that the second term is always positive when $\nabla f_2(x) \neq 0$ and $\sin{\Delta+\phi} \neq 0$. Since $f_2$ was assumed to be a concave function, $\nabla f_2(x) = 0$ if and only if $x$ is the optimum point of $f_2$. Moreover, as we define $\phi \in [0,\pi]$ and $\Delta \in [0,\pi/2]$, $\sin{\Delta+\phi} = 0$ is possible only if $\Delta+\phi=\pi$ or $\Delta=\phi=0$. The second one is not possible as $\nabla f_2 \notin C_{\nabla f_1}^{\Delta}$. The first case means that this point is $\Delta$-Pareto-stationary. 

\subsection{$K$ objective case}

When generalizing the $K$ objectives, the proofs will be analogous. While the derivation of exact expressions is cumbersome, we can show that Proposition~\ref{prop:alg-monotone-increase} still holds.

With $K$ objectives, our algorithm will pick $\vu$ as follows:

\begin{equation}
    \vu = \begin{cases}
        \nabla f_1(\vx) \;\; & f_1(\vx) < \tau_1 \\
        \projcs{\nabla f_2(\vx)}{\nabla f_1(\vx)} \;\; &f_1(\vx) \geq \tau_1, f_2(\vx) < \tau_2 \\
        \projcs{\projcs{\nabla f_3(\vx)}{\nabla f_1(\vx)}}{\nabla f_2(\vx)} \;\; &f_1(\vx) \geq \tau_1, f_2(\vx) \geq \tau_2, f_3(\vx) < \tau_3  \\
        \dots \\
        \dots \\
        \projcs{\projcs{\cdots\projcs{\nabla f_K(\vx)}{\nabla f_1(\vx)}}{\cdots \nabla f_{K-2}(\vx)}}{\nabla f_{K-1}(\vx)} \;\; &f_i(\vx) \geq \tau_i \;\; \forall i \leq K-1\\       
    \end{cases}
\end{equation}

where $\projcs{}{}$ is a shorthand for $\projc{}{}$. Now, assume that we have the currently optimized objective $i^*_l$ as defined in Proposition~\ref{prop:alg-monotone-increase}. We  
will first show that $f_j(\vx^{(l+1)} \geq f_j(\vx^{(l)}$ for all $j < i^*_l$. 

\begin{equation}\label{eq:satisfied_ineq}
 f_j(\vx^+)  \geq  f_j(\vx) + \nabla f_j(\vx)^T (\alpha \vu) - \frac{L}{2} \norm{\alpha \vu}^2
\end{equation}

We know that unless this is a $\Delta$-Pareto-stationary point, $\vu \in C_{\nabla f_j}^{\Delta}$. Then, $\angle(\vu, \nabla f_j) \leq \frac{\pi}{2}-\Delta$. This is important as it allows us to lower bound $\nabla f_j(\vx)^T(\alpha\vu)$:
\begin{align}
    &\angle(\vu, \nabla f_j) \leq \frac{\pi}{2}-\Delta \nonumber\\
    \implies &\cos{(\angle(\vu, \nabla f_j))} \geq \cos{(\frac{\pi}{2}-\Delta)} = \sin{\Delta}  \nonumber\\
    \implies &\nabla f_j(\vx)^T(\alpha\vu) = \alpha \cos{(\angle(\vu, \nabla f_j)} \norm{\nabla f_j(\vx)}\norm{\vu} \geq \alpha \sin{\Delta} \norm{\nabla f_j(\vx)}\norm{\vu}
\end{align}

Moreover, we have already computed $\norm{\projcs{\nabla f_2(\vx)}{\nabla f_1{\vx}}}$ in Section~\ref{sec:conv-prop:two-obj-case} as $\sin{(\Delta+\phi)} \norm{f_2}$. Then, we now that projecting  $\nabla f_{i^*_l}$ consecutively on ${\nabla f_1, \dots, \nabla f_j}$ gives
\begin{equation}
    \norm{\vu} = \norm{\nabla f_{i^*_l}{\vx}} \prod_{i=1}^j \sin{(\Delta + \phi_i)}
\end{equation}
where $\phi^i = \angle(\nabla f_1(\vx), \vu^{(i)})$ and ${\vu^{(i)}}_{i \in \{1,\dots, j\}}$ is the sequence obtained by the successive projections starting with $\vu^(1) = \nabla f_{i^*_l}(\vx)$.

Then, $\norm{\vu} \leq \norm{\nabla f_{i^*_l}(\vx)}$. Combining these results and plugging in Eq.\ref{eq:satisfied_ineq}:
\begin{align*}
     f_j(\vx^+)  &\geq  f_j(\vx) + \nabla f_j(\vx)^T (\alpha \vu) - \frac{L}{2} \norm{\alpha \vu}^2 \\
     &\geq f_j(\vx) + \alpha \sin{\Delta} \norm{\nabla f_j(\vx)}\norm{\vu} - \frac{L}{2}\alpha^2 \norm{\nabla f_{i^*_l}(\vx)}\norm{\vu} \\
    &= f_j(\vx) + \alpha \norm{\vu} \paren{\sin{\Delta}\norm{\nabla f_j(\vx)} - \frac{L\alpha}{2} \norm{\nabla f_{i^*_l}(\vx)}}
\end{align*}

Hence, when $\sin{\Delta}\norm{\nabla f_j(\vx)} \geq \frac{L\alpha}{2} \norm{\nabla f_{i^*_l}(\vx)}$, the nondecrease in satisfied objectives condition is satisfied.

Showing that $f_{i^*_l}$ increases with a small enough $\alpha$ is analogous, so we omit it here.

\section{Using Lexicographic Projection Algorithm in RL}\label{sec:supp:lpa-rl}

In this section, we first give the REINFORCE algorithm we use as the basis for our Lexicographic REINFORCE algorithm for easier comparison. Then, we share further details of our experiments.

\subsection{REINFORCE Algorithm}\label{sec:pg:reinforce}

The pseudocode for REINFORCE algorithm that we will use as the basis for our adaptation (Algorithm~\ref{alg:lexicographic-reinforce}) can be seen in Algorithm~\ref{alg:vanilla-reinforce}.

\begin{algorithm}[H]
\SetAlgoLined
\SetKwFunction{reinforce}{REINFORCE}
\SetKwProg{Pr}{Process}{:}{}
\Pr{\reinforce}{
Initialize policy function $\pi(a|s, \theta)$ with random parameter $\theta$ \\
\For{ep $= 1, N_e$}{
  Generate an episode $S_0, A_0, R_1, \ldots, S_{T-1}, A_{T-1}, R_T$ and save $\ln \pi(A_t|S_t)$ at every step.\\
  $G_{T+1} \gets 0$ \\
  \For{t $= T, 1$}{
      $G_t \gets R_t + \gamma G_{t+1}$
  }
  $L \gets -\sum_{t=0, T-1} \ln \pi(A_t|S_t) G_{t+1}$ \\
  Update $\theta$ by taking an optimizer step for loss $L$ \\
}
\KwRet $\pi(a|s, \theta)$  \\
}
\caption{Vanilla REINFORCE}
\label{alg:vanilla-reinforce}
\end{algorithm}

Note that Algorithm~\ref{alg:vanilla-reinforce} can be used with optimizers other than vanilla gradient descent. In our experiments, we found that Adam is easier to use with the tasks at hand. Similarly, we found that using Adam optimizer is better than vanilla gradient descent for our adaptation too.

\subsection{Experiments}\label{sec:supp:pg:exp-rl}

In this section, we share the details of the experimental setup for the adapted REINFORCE algorithm.

\subsubsection{Policy Function}\label{sec:supp:pg:exp-rl:policy-function} In both experiments, we use a two layer neural network (\cite{lecun2015deep}) for policy function. We represent the state via one-hot encoding (\cite{harris2015digital}), hence the input dimension is the same as the size of state space. For example, 20 for the maze in Figure~\ref{fig:maze-extended}. Then the hidden layer is a fully connected layer with 128 units and they use \emph{ReLU} activation function \cite{agarap2018deep}. We also used a dropout layer (\cite{srivastava2014dropout}) with drop probability $0.6$. Finally, the output layer has 4 units, representing the four valid actions in our benchmark. The outputs of these units are converted to action probabilities by applying a softmax function with temperature $10$ \cite{lecun2015deep}. The temperature hyperparameter allows making the policy less deterministic by making the action probabilities closer to each other. This makes sure that the policy keeps exploring so it does not get stuck in local minima. This is particularly important for our algorithm, considering that the learning of the less important objectives does not start until the important ones are learned.

\subsubsection{Reachability Experiment}\label{sec:supp:pg:exp:reachability}
For Reachability experiment, we use the maze in Figure~\ref{fig:maze-concave-simple}. As we only care about the agent eventually reaching the goal, the agent can completely avoid going on a bad tile. All the policies where it reaches the goal but goes through a bad tile in the process will be dominated by this policy. Hence, we will expect our agent to learn the policy where it eventually reaches the goal and never steps on a bad tile.


\begin{figure}[tb]
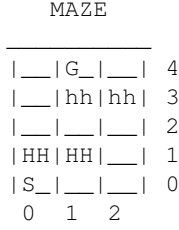

  \begin{CenteredBox}
      \begin{myverbatim}
          MAZE
       __________
       |__|G_|__| 4
       |__|hh|hh| 3
       |__|__|__| 2
       |HH|HH|__| 1
       |S_|__|__| 0
        0  1  2
\end{myverbatim}
\end{CenteredBox}
                        \vspace*{3mm}
                        \caption{The maze to be used in Reachability experiment.} \label{fig:maze-concave-simple}
                                                \vspace*{6mm}
\end{figure}

We run Algorithm~\ref{alg:lexicographic-reinforce} for $N_e = 4000$ episodes and we repeat our experiment with $10$ different random seeds. As the policy we use is stochastic, different seeds give significantly different results. Figure~\ref{fig:reachability-simple-seeds-avg} summarizes the performance of $10$ seeds. The plot shows the ratio of the successful trajectories out of 100 trajectories where successful is defined as satisfying the reachability constraint without stepping on a bad tile. The line shows the mean of 10 different seeds where the shaded region shows the variance in the experiment as two standard deviations around the mean. It can be clearly seen that as the training progresses, the satisfaction frequency increases. Out of the 10 seeds, 4 find policies that have 90\% success over 100 episodes.

\begin{figure}[tb]
	\centering
	\includegraphics[scale=0.5]{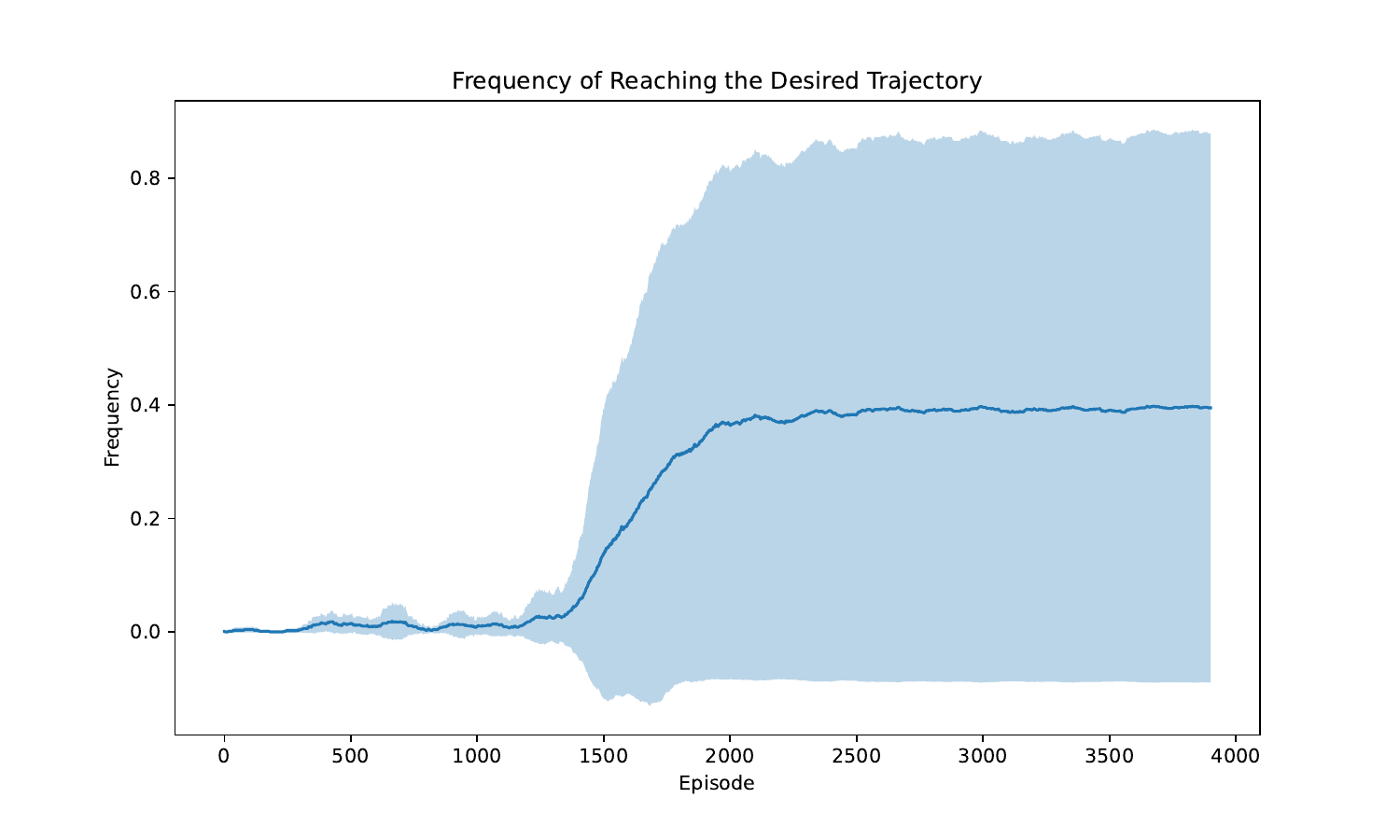}
                                \vspace*{3mm}
                                \caption{Average satisfaction frequency of 10 seeds for the experiment described in Section~\ref{sec:supp:pg:exp:reachability}. The shaded region shows a confidence interval of two standard deviation width around the mean.}
                                                        \vspace*{6mm}
	\label{fig:reachability-simple-seeds-avg}
\end{figure}

We can also take a closer look into how the training progresses for a successful seed. Figure~\ref{fig:reachability-simple-singleseed} shows how the satisfaction frequency for each objective changes throughout the training. It can be seen that the primary objective, reaching the goal eventually starts with a high frequency but drops a little bit while the secondary is being learned. Then, the frequencies for both objectives start to increase together. Intuitively, the initial drop represents when the agent starts to consider "do nothing" policies which reduces the success of the primary objective. But the agent then learns that it can still maintain $0$ penalties without just staying in place.

\end{document}